\definecolor{ccr}{RGB}{0,0,255} 
\definecolor{kkk}{RGB}{0,0,0}
\titlespacing{\section}{0pt}{1ex plus 0.2ex minus .2ex}{1ex plus .2ex}
\titlespacing{\subsection}{0pt}{1ex plus 0ex minus .2ex}{1ex plus .2ex}
\begin{document}
\title{Euclidean Distance Matrix Completion via Asymmetric Projected Gradient Descent
	\vspace{-0.2em} 
}

\author{ 
	Yicheng Li\hspace{0.5mm}
	\orcidlink{0009-0005-7973-4182}, 
	Xinghua Sun\hspace{0.5mm}
	\orcidlink{0000-0003-0621-1469}
	,~\IEEEmembership{Member,~IEEE}
	\vspace{-2em} 
	\thanks{Part of this work\cite{YCLiSunEDMC_Spec_init} has been presented during the 2024 Asia Pacific Signal and Information Processing Association Annual Summit and Conference (APSIPA ASC), Macao, China. This work has been submitted to the IEEE for possible publication. Copyright may be transferred without notice, after which this version may no longer be accessible.}
	\thanks{Y. Li and X. Sun are with the School of Electronics and Communication Engineering, Shenzhen Campus of Sun Yat-sen University, Shenzhen 518107, China (e-mail:\url{liych75@mail2.sysu.edu.cn};\url{sunxinghua@mail.sysu.edu.cn}).}
}

\markboth{}
{Li \MakeLowercase{\textit{et al.}}: Euclidean Distance Matrix Completion via Asymmetric Projected Gradient Descent}


\maketitle
\begin{abstract}
This paper proposes and analyzes a gradient-type algorithm based on Burer-Monteiro factorization, called the Asymmetric Projected Gradient Descent (APGD), for reconstructing the point set configuration from partial Euclidean distance measurements, known as the Euclidean Distance Matrix Completion (EDMC) problem. By paralleling the incoherence matrix completion framework, we show for the first time that global convergence guarantee with exact recovery of this routine can be established given $\mathcal{O}(\mu^2 r^3 \kappa^2 n \log n)$ Bernoulli random observations without any sample splitting. Unlike leveraging the tangent space Restricted Isometry Property (RIP) and local curvature of the low-rank embedding manifold in some very recent works, our proof provides extra upper bounds that act as analogies of the random graph lemma under EDMC setting. The APGD works surprisingly well and numerical experiments demonstrate exact linear convergence behavior in rich-sample regions yet deteriorates rapidly when compared with the performance obtained by optimizing the s-stress function, i.e., the standard but unexplained non-convex approach for EDMC, if the sample size is limited. While virtually matching our theoretical prediction, this unusual phenomenon might indicate that: (i) the power of implicit regularization is weakened when specified in the APGD case; (ii) the stabilization of such new gradient direction requires substantially more samples than the information-theoretic limit would suggest.
\end{abstract}

\begin{IEEEkeywords}
Euclidean distance matrix completion, Burer-Monteiro factorization, Dual Basis.
\end{IEEEkeywords}

\section{Introduction}
\IEEEPARstart{G}{iven} $n$ points $\{\mathbf{p}_i^{\star}\}_{i=1}^n$ embedded in $\mathbb{R}^r$, $r=2,3$, calculating their inter-point distances is a trivial task. Forwarding the perfect, complete $n(n-1)/2$ distances back to the point set configuration is, meanwhile well-known as the classic Multi-dimensional Scaling (cMDS)\cite[Sec. 3.1.1]{VanLaurensDR_review}\cite{MDSSurvey}\cite[Thm. 2]{DokmanicEDMTheory}. However, significant effort has been devoted to handle the case when part of the distance measurements are missing, known as the Euclidean Distance Matrix Completion (EDMC) problem\cite{LibertiEDMSurvey}\cite{DokmanicEDMTheory}. This is not only due to its ubiquitous presence in, e.g., computational chemistry\cite{CrippenEneemb}\cite{HendricksonMoleculeProblem}\cite{CucuringuEigSynSinger}\cite{LeungTohSDPDCMP}, operating wireless sensor networks\cite{BiswasSDRSNLacm}\cite{BiswasTR_sdp}\cite{JavanmardMontanariTrEDMC} and articulated robots\cite[Sec. 4.3.2]{LibertiEDMSurvey}\cite{MaricRieOptIKRobot}, channel estimation\cite{AgostiniChannelCEDMC}, indoor SLAM\cite{TabaghiKineticEDM}, ultrasound calibration\cite{ParhizkarOptSpace}, but also the fact that this inverse problem is NP-hard to solve in general\cite[Sec. 3.4]{LibertiEDMDC}. To circumvent the NP-hardness, one may only consider a selected region of the ground truth (generic\cite{KirayTheranACombMC}\cite{ShapiroMCDPGeoPre}, incoherence\cite{TasissaEDMCProof}\cite{KirayCoherenceRec}), enforcing certain connectivity of the underlying sample graph (Erd\"{o}s-R\'{e}nyi\cite{YCLiSunSNLRR}\cite{SmithCaiTasissaRieEDMC2}, random geometric graph\cite{JavanmardMontanariTrEDMC}\cite{OhMontanariMDS_MAP}\cite{MontanariOhSEDMCOptSpace}\cite{KarbasiOhMAPMDSTri}), and posing constrains on certain geometry of the point set ($r$-unique localizable\cite{SoAnthonySNL2006}, trilateration\cite{NetworkLoc1}, universal rigidity\cite{ZhuSoUniverRigid}). Therefore, performance guarantee results in this field can be roughly divided into two classes, i.e., combinatorial\cite{SoAnthonySNL2006}\cite{ZhuSoUniverRigid}\cite{SoYeSDPTensegrity} and probabilistic. While the latter can be further categorized into the traditional yet effective view from random geometry\cite{JavanmardMontanariTrEDMC}\cite{OhMontanariMDS_MAP}\cite{MontanariOhSEDMCOptSpace}\cite{KarbasiOhMAPMDSTri}, and the recent revisit of EDMC via incoherent matrix completion lens\cite{TasissaEDMCProof}\cite{SmithCaiTasissaRieEDMC2}\cite{GhoshTasissaIRLSEDMC}\cite{YCLiSunSNLRR}. Method proposed in this manuscript belongs to the last class. In short, we analyze an algorithm analogous to the Iterative Fast Hard Thresholding for EDMC (IFHT-EDMC) that appeared in a very recent work \cite{SmithCaiTasissaRieEDMC2}, under standard incoherence assumption\cite{ChenIncoOptimalMC} and Bernoulli sampling scheme, from the quotient geometry perspective. 
\subsection{Preliminaries and Motivations}
\label{math_setup}
A Euclidean Distance Matrix (EDM), $\mathbf{D}^{\star}\in\mathbb{R}^{n\times n}$, is referred to be a symmetric hollow diagonal matrix whose $(i,j)$\footnote{We will frequently use boldface Greek letter $\boldsymbol{\alpha}$ or $\boldsymbol{\beta}$ to denote an index tuple $(i,j)\in\mathbb{I}$, as inherited from\cite{TasissaEDMCProof}.} entity denotes the squared distance between $(i,j)$ points
\begin{equation*}
	\setlength\belowdisplayskip{3pt}
	\setlength\abovedisplayskip{3pt}
	\mathbf{D}^{\star}_{ij} := \Vert\mathbf{p}_i^{\star}-\mathbf{p}_j^{\star}\Vert_2^2 = \langle\mathbf{P}^{\star}\mathbf{P}^{\star T},\boldsymbol{\omega}_{\boldsymbol{\alpha}}\rangle = \langle\mathbf{G}^{\star},\boldsymbol{\omega}_{\boldsymbol{\alpha}}\rangle,
\end{equation*}
where $\langle\mathbf{A},\mathbf{B}\rangle:=\mathrm{tr}(\mathbf{A}^T\mathbf{B})$ is the Frobenius inner product.  $\mathbf{P}^{\star}:=[\mathbf{p}_1^{\star},...,\mathbf{p}_n^{\star}]^T\in\mathbb{R}^{n\times r}$ and $\mathbf{G}^{\star}:=\mathbf{P}^{\star}\mathbf{P}^{\star T}$ are the so-called point set configuration and Gram matrix\footnote{The point set configuration is assumed to have full column rank $r$.}, respectively. Let $\mathbf{e}_i$ denote the $i$-th canonical Euclidean basis in $\mathbb{R}^n$, then $\boldsymbol{\omega}_{\boldsymbol{\alpha}}:=(\mathbf{e}_i-\mathbf{e}_j)(\mathbf{e}_i-\mathbf{e}_j)^T$ stands for the (primal) EDM basis. The EDMC problem considered here aims at recovering an EDM from its partial and element-wise measurements, which is usually referred to be the following non-convex QCQP (Quadratically Constrained Quadratic Programming)
\begin{equation}
	\setlength\belowdisplayskip{3pt}
	\setlength\abovedisplayskip{3pt}
	\label{eq_SDR_EDMC_prob}
	\begin{aligned}
		\mathrm{find}\,& \mathbf{G}\\
		\mathrm{s.t.}\,& \langle\mathbf{G},\boldsymbol{\omega}_{\boldsymbol{\alpha}}\rangle = \mathbf{D}^{\star}_{\boldsymbol{\alpha}},\,\boldsymbol{\alpha}=(i,j)\in\Omega,\\
		& \mathbf{G1}=\mathbf{0},\,\mathbf{G}\succcurlyeq\mathbf{0},\,\mathrm{rank}(\mathbf{G})=r,
	\end{aligned}
\end{equation}
where $\mathbf{D}^{\star}_{\boldsymbol{\alpha}}:=\mathbf{D}^{\star}_{ij}$. Superscript $(\cdot)^{\star}$ is used to emphasize the fixed, ground truth that independent with the sample set $\Omega\subset\mathbb{I}:=\{(i,j):1\leq i<j\leq n\}$\footnote{Since the EDM is a symmetric and hollow matrix, we can omit its diagonal and only sample the upper triangle sub-matrix, which is referred to as the symmetric sampling scheme later.}, and the distances between $(i,j)$ points, $(i,j)\in \Omega$, are known. Since rigid motions in $\mathbb{R}^r$ preserve the relative distance between points, the self-centered constraint $\mathbf{G1}=\mathbf{0}$, is usually adopted to remove the translation ambiguity\cite[Eq. (29)]{DokmanicEDMTheory}, where $\mathbf{1}$, $\mathbf{0}$ are the vectors of all ones, all zeros, respectively. 

If the underlying ground truth framework (point set configuration together with the sample graph $\Omega$) is universally rigid, then the rank constraint can be removed at no cost, resulting in streamlined processing on \eqref{eq_SDR_EDMC_prob} with modern convex solvers\cite{ZhuSoUniverRigid}. However, two unresolved issues remain and can not be compromised\cite{YCLiSunSNLRR}\cite{SmithCaiTasissaRieEDMC2}\cite{GhoshTasissaIRLSEDMC}\cite{CriscitielloSNLFullLandScape}: (i) non-degenerate trilateration graphs\cite{NetworkLoc1} are known to be universally rigid\cite[Coro. 1]{ZhuSoUniverRigid}, yet this relationship is nowhere inclusive. Moreover, to the best of our knowledge, there are no effective algorithms to test the universal rigidity except solving the Biswas-Ye Semidefinite Relaxation (SDR)\cite{SoAnthonySNL2006}; (ii) Semidefinite Programming (SDP) based approaches convexify the matrix dimension to $n\times n$, causing extra computational burden when $n\gg r$. Skeptical readers may argue that \eqref{eq_SDR_EDMC_prob} can be tackled by convex-lifting (over-parameterization) to meet the SDR rank lower bound\cite{AnthonyUnifRankRedSDR}, combined with sophisticated manifold optimization strategies, e.g.,\cite[Sec. 4.4]{TangTohRieRankRedSNL}. Yet the success of such routine does not explain practical observations: (i) usually small over-parameterization suffices for non-convex algorithms to evade saddle points\cite[Sec. III-D]{YCLiSunSNLRR} and local minima\cite[Sec. 7]{CriscitielloSNLFullLandScape}; (ii) the recommended numerical method to obtain a successful recovery remains low-rank inductive regularization\cite[Sec. 3.1]{LeungTohSDPDCMP}\cite[Sec. IV]{TasissaEDMCProof}. Therefore, we are interested in answering the following basic question\footnote{Please notice that low-rankness alone can not fully characterize an EDM\cite{DokmanicEDMTheory}, and direct adaptation of Burer-Monteiro factorization or nuclear norm minimization to an EDM without leveraging rank-one quadratic sensing structure w.r.t. $\boldsymbol{\omega}_{\boldsymbol{\alpha}}$ basis in \eqref{eq_SDR_EDMC_prob} leads to poor performance\cite[Sec. I-B]{YCLiSunSNLRR}.}

\textbf{Q1:} \textit{Does the theoretical success of the classical two-step non-convex factorization type incoherent Low-rank Matrix Completion (LRMC) algorithms \textnormal{\cite{KeshavanOptSpace}\cite{SL15NonCVXFR}\cite{ZhengLaffertyNonCVXFR}\cite{ChenWainLREstPGD}} generalize to the EDMC setting?}

As depicted in our previous work\cite{YCLiSunSNLRR}, a twofold dilemma arises when addressing \textbf{Q1}. \textbf{First}, practical sampling schemes in real-world applications of the EDMC problem often deviate from the ideal Bernoulli model. \textbf{Second}, the correlation matrix of the $\boldsymbol{\omega}_{\boldsymbol{\alpha}}$ basis (denoted as $\mathbf{H}_{\boldsymbol{\omega}}$, c.f. \eqref{eq_def_of_correlaMat_Hw}) is non-diagonal, causing simple re-weighting scheme as in Hankel Matrix Completion (HMC)\cite{ChenChiSpectCompreHMC}\cite{CaiWangSpecCS_PGD} no longer valid. This work addresses the second challenge by providing the first non-convex routine to solve EDMC with global convergence guarantees, without requiring sample splitting, and our approach leverages different proof techniques compared to\cite{YCLiSunSNLRR}\cite{SmithCaiTasissaRieEDMC2}\cite{GhoshTasissaIRLSEDMC}\cite{SmithRieOptEDMCSubGChaos}. We now discuss these and highlight the difference in greater detail.
\subsection{Related Work}
\label{subsec_relatedWork}
While the Bernoulli sample model used in standard LRMC tasks does not have strong real-world application scenarios when specified to EDMC, we are not aware of any work that analyzed the non-convex gradient refinement stage under a more practical sample model that depends on the magnitudes of EDM entities, i.e., the so-called unit ball rule, under the random geometry setup\footnote{Transition bound obtained from random geometry graph is often sharp up to constant, yet it also requires ideal distribution assumptions on the point set.}. Javanmard and Montanari developed the stability notion of a rigid graph\cite[Lemma 4.1, 4.2]{JavanmardMontanariTrEDMC} and use that to show the perturbation on convex solution can be bounded. \cite{KarbasiOhMAPMDSTri}\cite{OhMontanariMDS_MAP} analyzed the MAP-MDS estimator\cite{ShangYDistMDS} by showing the shortest path algorithm faithfully returns the Euclidean distance between unconnected nodes when the point set is regular. In parallel, \cite{MontanariOhSEDMCOptSpace} obtained almost the same result, and they sent the surrogate produced by MAP-MDS into the OptSpace\cite{KeshavanOptSpace} for refinement. Numerically, it was demonstrated that the phase transition of OptSpace matches that of the theoretical bound on MAP-MDS. \cite[Thm. 1.2]{ParhizkarOptSpace} provided bounds on similar refinement step, yet they treat the EDM-dependent samples as pure additive noise and only address the Bernoulli erasures.

The original motivation for raising \textbf{Q1} comes from Tasissa and Lai\cite{TasissaEDMCProof}, who showed, for the first time that a natural parallelization of the Nuclear Norm Minimization (NNM)\cite{LRMC_Can1}\cite{RechtFazelNNMSurvey}, called trace minimization (the trace regularized version of the SDR of \eqref{eq_SDR_EDMC_prob}), can solve EDMC given $\mathcal{O}(n\nu r\log^2 n)$ uniform distance samples taking at random, where $\nu$ is certain coherence parameter defined w.r.t. the EDM basis $\boldsymbol{\omega}_{\boldsymbol{\alpha}}$\footnote{In general $\nu=\mathcal{O}(r\mu)$, where $\mu$ is the standard coherence parameter defined w.r.t. $\mathbf{e}_i\mathbf{e}_j^T$, c.f., \cite[Def. 1.2]{LRMC_Can1}.}. They also demonstrated the generalizability of this specific modification to the Golfing Scheme\cite{GrossLRMCProof} by analyzing matrix completion problem under an arbitrary set of non-orthonormal and non-sub-Gaussian basis, provided the extremal eigenvalues of the basis correlation matrix are well understood\cite{TasissaLRMC_genebasis}. Subsequent work \cite{LichtenbergTasissaEDMC} characterized the eigen-spectrum of the EDM basis correlation matrix $\mathbf{H}_{\boldsymbol{\omega}}$, which is defined as
\begin{gather}
	\setlength\belowdisplayskip{3pt}
	\setlength\abovedisplayskip{3pt}
	\label{eq_def_of_correlaMat_Hw}
	\mathbf{H}_{\boldsymbol{\omega}}:=\mathbf{WW}^T\in\mathbb{R}^{L\times L},\,L:=n(n-1)/2=|\mathbb{I}|,\\
	\mathbf{W}:=[\mathrm{vec}(\boldsymbol{\omega}_{(1,2)}),...,\mathrm{vec}(\boldsymbol{\omega}_{(n-1,n)})]^T\in\mathbb{R}^{L\times n^2},\nonumber
\end{gather}
where $\mathrm{vec}(\cdot)$ stands for matrix vectorization. \cite[Coro. 2.2]{LichtenbergTasissaEDMC} revealed that $\mathbf{H}_{\boldsymbol{\omega}}$ has only three types of distinct eigenvalues: $2n,\,n$, and $2$. Such near-singular spectral structure does not significantly hinder the convergence of the Golfing Scheme, as the original inexact dual certificate requires precision on $\mathcal{O}(1/n)$ for the residual term\cite[Prop. 2-(2a)]{ChenIncoOptimalMC}, while the modified one for EDMC appears like $\mathcal{O}(\frac{\lambda_{\min}(\mathbf{H}_{\boldsymbol{\omega}})}{n\lambda_{\max}(\mathbf{H}_{\boldsymbol{\omega}})})=\mathcal{O}(1/n^2)$\cite[Thm. 2]{TasissaLRMC_genebasis}. Since the Golfing Scheme converges exponentially fast, the extra iteration cost for improving $1/n$ to $1/n^2$ is negligible. Yet this makes the design and analysis of non-convex methods considerably more obscure, which, in its most straightforward form, aims to solve the s-stress cost function defined below
\begin{align}
	\setlength\belowdisplayskip{3pt}
	\setlength\abovedisplayskip{3pt}
	\label{eq_s_stress_forthorder_form}
	\min_{\mathbf{P}\in\mathbb{R}^{n\times r}}\,h(\mathbf{P}):&=\sum_{\boldsymbol{\alpha}\in\Omega} |\langle\mathbf{PP}^T,\boldsymbol{\omega}_{\boldsymbol{\alpha}}\rangle - \mathbf{D}^{\star}_{\boldsymbol{\alpha}}|^2,\\
	&=\Vert\mathcal{P}_{\Omega}(g(\mathbf{PP}^T)-\mathbf{D}^{\star})\Vert_F^2,\nonumber\\
	[\mathcal{P}_{\Omega}(\mathbf{D})]_{ij}&=
	\begin{cases}
		\mathbf{D}_{ij},\,\,&\text{if}\,\,(i,j)\in \Omega\\
		0,\,\,&\text{else}
	\end{cases}.\nonumber
\end{align}
Where standard notation $\mathcal{P}_{\Omega}$ is adopted to define the projection onto sampled entities, and linear operator $g:\mathcal{S}(n)\to \mathcal{S}(n)$ maps Gram matrix $\mathbf{G}=\mathbf{PP}^T$ to the EDM generated by $\mathbf{P}$. Here, $\mathcal{S}(n)$ denotes the set of symmetric matrix in $\mathbb{R}^{n\times n}$, see Section \ref{subsec_math_setup} for formal definitions of $g$.
Eq. \eqref{eq_s_stress_forthorder_form} bears great resemblance to the Wirtinger Flow\cite{PhaseretrievalWirtinger} used in phase retrieval, Hankel lift for spectral compressive sensing\cite{CaiWangSpecCS_PGD}\cite{LiCuiPGD_SpectralCSHankel}, $l_{2,\infty}$ regularized free non-convex matrix completion\cite{MaImplicitRegularNonCVX_GD}\cite{ChenDLiLOO_RecRegular_Mat}, and so on.
However, the large condition number of $\mathbf{H}_{\boldsymbol{\omega}}$ prevents one from pursuing theoretical contraction speed when optimizing \eqref{eq_s_stress_forthorder_form}, as discussed in \cite[Sec. III-C]{YCLiSunSNLRR}\cite[Sec. 6]{SmithCaiTasissaRieEDMC2}, even though \eqref{eq_s_stress_forthorder_form} does produce an attractive basin given enough Bernoulli samples, the basin is too small when measured by $l_{2,\infty}$ norm. 

Such predicament stems from the strong diagonal dominance inherent in the $\boldsymbol{\omega}_{\boldsymbol{\alpha}}$ basis. A total sum of them gives rise to $\boldsymbol{\mathcal{L}}=n\mathbf{I}-\mathbf{11}^T$, where $\mathbf{I}$ is the identity matrix. While one can show the sampled sum closely approximates $\boldsymbol{\mathcal{L}}$, controlling the distortion introduced by the $n\mathbf{I}$ term remains challenging\cite[Lemma B.2]{YCLiSunSNLRR}. Nevertheless, extensive numerical evidence over the years has proved that various first-order methods perform well when optimizing \eqref{eq_s_stress_forthorder_form} without relying on explicit regularization or incoherence projections. These approaches not only converge when started randomly or by simple spectral estimator under Bernoulli sample model\cite{YCLiSunEDMC_Spec_init}, but such local search is also acknowledged to be remarkably effective in the refining stage of several SDRs\cite[Sec. 5]{BiswasSDRSNLacm}\cite[Sec. 3.2]{LeungTohSDPDCMP}, even when the sampling model is substantially more intricate. We pause to emphasize that under Bernoulli rule, the original two-step routine, i.e., spectral initialization\cite{ZhangSVD_MDS}\cite[Lemma 5.6]{SmithCaiTasissaRieEDMC2}\cite[Thm. I.1]{YCLiSunEDMC_Spec_init} followed by Vanilla Gradient Descent (VGD) on \eqref{eq_s_stress_forthorder_form}, fails to match the empirical performance of trace minimization-based approaches. Such unexpected discrepancy has also been witnessed in\cite[Sec. 5]{GhoshTasissaIRLSEDMC}, where they showed the phase transition of a scaled-stochastic gradient descent\cite{ZhangChiuScaleSGD} is evidently later than that of convex and the iterative re-weighting method. 
\begin{table*}[t]
	\renewcommand\arraystretch{1.2}
	\centering
	\caption{Comparison with recent works based on dual basis expansion}
	\label{comparation_of_recent_works}
	\scalebox{0.95}{\begin{tabular}{cccccc}
			\hline
			Algorithm Type & Incoherence Type & Claimed Sample Complexity & Global Recovery & Sample Splitting & Riemannian Geometry \\ \hline
			Trace Minimization\cite{TasissaEDMCProof}  & Strong           &  $\mathcal{O}(n r\log^2 n)$\cite[Thm. 5]{TasissaEDMCProof}                 & \checkmark             & N.A.     & N.A.        \\
			Iterative Fast Hard Thresholding\cite{SmithCaiTasissaRieEDMC2}      & Strong         &  $\mathcal{O}(nr^2 \log^2 n)$\cite[Thm. 5.9]{SmithCaiTasissaRieEDMC2}             & \checkmark      & \checkmark & Embedding           \\
			Iterative Re-weighting Least Square\cite{GhoshTasissaIRLSEDMC}     & Strong           &    $\mathcal{O}(n r\log n)$\cite[Thm. 4.3]{GhoshTasissaIRLSEDMC}       & $\times$               & $\times$        & N.A.       \\
			Projected Gradient Descent \textbf{(This work)} & Standard         &   $\mathcal{O}(nr^3\log n)$       & \checkmark              & $\times$      & Quotient         \\ \hline
	\end{tabular}}\vspace{-12pt}
\end{table*}

Another long-standing yet recently advanced open question concerning \eqref{eq_s_stress_forthorder_form} is the analytical justification for the observed absence of its spurious local minima, as summarized and empirically reported in\cite[Ch. 3.5]{ParhizkarPhDthsis}. When the observation is complete and noiseless, this conjecture was recently disproven by Criscitiello et al.\cite[Sec. 6]{CriscitielloSNLFullLandScape}. They construct explicit spurious configurations and provide sufficient condition for s-stress to admit strict second-order local minima.
An interesting yet subtle connection between their work and ours lies in\cite[Thm. 5.1]{CriscitielloSNLFullLandScape}. Loosely speaking, they show that if  
\begin{equation}
	\setlength\belowdisplayskip{3pt}
	\setlength\abovedisplayskip{3pt}
	\label{CriscitielloCondForBenignLandscape}
	(k+2)\sigma_{r}^{\star}>4n\max_{i\in [n]}\Vert\mathbf{e}_i^T\mathbf{P}^{\star}\Vert_2^2, \,\Omega =\mathbb{I},
\end{equation}
holds, then any over-parameterized second-order critical $\mathbf{P}\in\mathbb{R}^{n\times k}$, with $k>r$, is a global optimum of \eqref{eq_s_stress_forthorder_form}, where $\sigma_{r}^{\star}$ is the smallest singular value of $\mathbf{G}^{\star}$. Though sharing remarkable resemblance with the incoherence assumption\cite[Def. 1.2]{LRMC_Can1}, it is straightforward to check \eqref{CriscitielloCondForBenignLandscape} can only hold in the regime $k>r$.
And as we will elaborate further, it also remains an open question whether the standard incoherence condition adequately captures the class of EDMs that are efficiently completable in polynomial time\footnote{Please see \cite[Thm. 4]{KirayCoherenceRec} for more discussion.}.

Given that many of these largely unexplored questions stem, at least in part, from the condition number of $\mathbf{H}_{\boldsymbol{\omega}}$, \cite{SmithCaiTasissaRieEDMC2} considered using the pseudo inverse of $g$ (denoted as $g^+$) to correct the condition number of the Hessian of \eqref{eq_s_stress_forthorder_form}. That is, to construct the following new pre-conditioned sample operator
\begin{equation}
	\setlength\belowdisplayskip{3pt}
	\setlength\abovedisplayskip{3pt}
	\label{eq_R_Omega_sample_first_time}
	\mathcal{R}_{\Omega}(\cdot):=g^+\mathcal{P}_{\Omega}g(\cdot),
\end{equation}
and to optimize 
\begin{gather}
	\setlength\belowdisplayskip{3pt}
	\setlength\abovedisplayskip{3pt}
	\label{eq_RDG_EDMC_Cost}
	\min_{\mathbf{G}\in\mathcal{S}^{n,r}_+} F(\mathbf{G}):=\frac{1}{p}\langle\mathcal{R}_{\Omega}(\mathbf{G}-\mathbf{G}^{\star}),\mathbf{G}-\mathbf{G}^{\star}\rangle,\\
	\nabla F(\mathbf{G}) = \frac{1}{p}\mathcal{R}_{\Omega}(\mathbf{G}-\mathbf{G}^{\star}) + \frac{1}{p}\mathcal{R}_{\Omega}^{*}(\mathbf{G}-\mathbf{G}^{\star}),\nonumber
\end{gather}
where $\mathcal{S}^{n,r}_+$ denotes the Positive Semidefinite (PSD) rank-$r$ embedding matrix manifold\cite{VandereyckenEmbgeoSnp}. The major drawback is that one cannot compute $\nabla F(\mathbf{G})$ directly, but only its first term, which they called pseudo gradient (after proper rescaling) $\hat{\nabla}F(\mathbf{G}):=\frac{2}{p}\mathcal{R}_{\Omega}(\mathbf{G}-\mathbf{G}^{\star})$, since the second term requires precise knowledge of $\mathbf{G}^{\star}$ but not just sampled distances. Certain trade-off arises here between maintaining self-adjointness and ensuring algorithmic explainability: since $\mathbf{H}_{\boldsymbol{\omega}}^{-1}$ (the inverse of $\mathbf{H}_{\boldsymbol{\omega}}$) is not diagonal, in general we would like to evade from constructing $\mathcal{R}_{\Omega}^*\mathcal{R}_{\Omega}$\footnote{Finding uniform bounds on $\mathcal{R}_{\Omega}^*\mathcal{R}_{\Omega}$, e.g., analogy of Lemma \ref{lema_Romega_RIPL}, \ref{eq_NromalSpace_tight_bound} is challenging, we will come back to this with greater detail in Section \ref{subsec_whyAPGD}.}, while using $\mathcal{R}_{\Omega}$ alone compromises the least-squares structure. The situation becomes more subtle when one inquires about numerical performance. This Riemannian pseudo gradient descent does not exhibit strong convergence in the sample-limited regime when compared to \eqref{eq_s_stress_forthorder_form}, despite its theoretical error contraction rate resembling that of the classic Riemannian gradient descent for LRMC\cite{WeiRCGLRMC_proof}.
\subsection{Major Contributions and Organization}
\label{subsec_MajorCon}
By introducing Burer-Monteiro factorization into \eqref{eq_RDG_EDMC_Cost}, and recalculating the pseudo gradient, we have
\begin{subequations}
	\label{eq_fracted_pre_cond_cost_funcand_Grad}
\begin{gather}
	\setlength\belowdisplayskip{3pt}
	\setlength\abovedisplayskip{3pt}
	\label{eq_fracted_pre_cond_cost_func}
	\min_{\mathbf{P}\in\mathbb{R}^{n\times r}} f(\mathbf{P}) = \frac{1}{p} \langle\mathcal{R}_{\Omega}(\mathbf{P}\mathbf{P}^T-\mathbf{G}^{\star}),\mathbf{P}\mathbf{P}^T-\mathbf{G}^{\star}\rangle,\\
	\label{eq_fracted_pre_cond_cost_Grad}
	\hat{\nabla} f(\mathbf{P}) = \frac{2}{p}\mathcal{R}_{\Omega}(\mathbf{P}\mathbf{P}^T-\mathbf{G}^{\star})\mathbf{P},
\end{gather}	
\end{subequations}
which, can be viewed as optimizing $\mathbf{P}$ over a quotient manifold. Compared with \cite{YCLiSunEDMC_Spec_init}, our major contributions are twofold:
\begin{itemize}
	\item We analyze the contraction behavior when using \eqref{eq_fracted_pre_cond_cost_Grad} to perform Projected Gradient Descent (PGD), starting from a surrogate returned by a recently proposed spectral estimator\cite{YCLiSunEDMC_Spec_init}\cite[Lemma 5.6]{SmithCaiTasissaRieEDMC2}, called One-step MDS (OS-MDS). This routine is referred to as the Asymmetric Projected Gradient Descent (APGD), where the ``projection" step enforces iteration to stay incoherent, similar to the setup in\cite{ZhengLaffertyNonCVXFR}. And the name ``asymmetric" comes from the fact that linear sensing operator $\mathcal{R}_{\Omega}$ is not self-adjoint, necessitating a delicate analysis of different components of the iteration residual, a feature not typically encountered when dealing with restricted strong convexity w.r.t. the first-order derivative. We show a near-linear convergence to the ground truth of the APGD iteration can be established given $\mathcal{O}(\mu^2 r^3 \kappa^2 n \log n)$ random distance samples by constructing standard regularity condition w.r.t. the pseudo gradient direction. 
	\item We provide two new uniform estimates for controlling two types of the non-tangent space residuals during iteration in inner product form, namely, Lemma \ref{lema_sharp_bound_for_L21L22} and \ref{lema_nonUni_bound_for_L3}, which act as analogies and extensions of the random graph lemma used in classic non-convex LRMC tasks\cite[Lemma 7.1]{KeshavanOptSpace}\cite[Lemma 5]{CaiWangSpecCS_PGD}. These bounds are developed based on a celebrated sharp result by Bandeira and van Handel\cite{BandeiraHandelSharpRGL}, as well as the separation trick originated from Bhojanapalli and Jain\cite{pmlr-v32-bhojanapalli14} in deterministic matrix completion context with Ramanujan graphs, substantially different from the one used in our previous work\cite[Lemma B.2]{YCLiSunSNLRR}. Moreover, these two estimates match the best known bound in incoherent LRMC for resemblance purposes\cite[Lemma 8]{ChenJiLiModelFreeMC} up to $\log n$ factor\footnote{Readers interested in comparing these non-tangent space concentration results are referred to \cite[Sec. 4.1]{ChenJiLiModelFreeMC}.}. Rationales behind our choice of analyzing local behavior of \eqref{eq_fracted_pre_cond_cost_Grad}, and the current dilemma between analytical tractability and algorithmic fidelity behind this specific attempt, are explained in Section \ref{subsec_whyAPGD}.
\end{itemize}
We also draw numerical comparisons between the OS-MDS initialized unregularized s-stress \eqref{eq_s_stress_forthorder_form}, IFHT-EDMC\cite[Alg. 2]{SmithCaiTasissaRieEDMC2}, and APGD with vanilla Barzilai-Borwein (BB) stepsize over synthetic point set. While the APGD is poorly behaved and appears to possess practical significance only in the near-asymptotic region, phenomena analogous to the role of incoherence restrictions in non-convex LRMC\cite[Ch. 2]{sun2015matrix} are observed. This raises interesting questions regarding the design principles of non-convex matrix recovery strategies under non-orthonormal, non-sub-Gaussian, and ill-conditioned sample basis -- is EDMC a special case? 

This section concludes with a comparison of the theoretical results presented in this work with those from three recent studies\cite{TasissaEDMCProof}\cite{SmithCaiTasissaRieEDMC2}\cite{GhoshTasissaIRLSEDMC}, as summarized by Table \ref{comparation_of_recent_works}. While similarities lie in the assumption of incoherence restrictions, uniform sampling, and the use of dual basis expansion, their algorithmic frameworks vary. The term ``strong incoherence" is referred to either\cite[Eq. (3)]{ChenIncoOptimalMC} or\cite[Def. 1]{TasissaEDMCProof}, while the former being known to be unnecessary in LRMC\footnote{We omit extra $\mathcal{O}(\mathrm{poly}(r))$ factor that may implicitly contained by the strong incoherence parameter when drawing the third column of Table \ref{comparation_of_recent_works}.}, reducing the latter in modified Golfing scheme for EDMC remains an open question. 

\textit{Organization: } In Section \ref{subsec_math_setup} we provide a more detailed background and model setup. We state our main theorem and two important lemmas that control the initialization error of OS-MDS and contraction behavior of APGD in Section \ref{subsec_main_result}. The proofs are deferred to the Appendix. Section \ref{sec_numericalRes_Disscuss} contains the aforementioned numerical result and the manuscript is concluded with a discussion in Section \ref{sec_conclude_Diss}.

\textit{Notation: } $\mathbf{y}$, $\mathbf{Y}$ stands for column vectors and matrices.
$\mathbf{Y}_{ij}$, $\mathrm{diag}(\mathbf{Y})$, and $\mathrm{diag}(\mathbf{y})$ is the $(i,j)$-th entity, column vector formed by the diagonal elements of $\mathbf{Y}$, and diagonal matrix formed by $\mathbf{y}$, respectively. $\mathbf{Y}_{\cdot,k}$ / $\mathbf{Y}_{k,\cdot}^T$ stands for the $k$-th column/row of $\mathbf{Y}$. $\mathrm{Od}(\mathbf{Y})$ means a matrix obtained by nullifying the diagonal of $\mathbf{Y}$. $\mathcal{F}^{*}$ stands for the adjoint of a linear operator $\mathcal{F}$, and $\mathcal{I}$ denotes the identity operator. $\circ$ is the Hadamard product. $\Vert\mathbf{Y}\Vert$, $\Vert\mathbf{Y}\Vert_F$, $\Vert\mathbf{Y}\Vert_{\infty}$, $\Vert\mathbf{Y}\Vert_{2,\infty}$, $\Vert\mathbf{Y}\Vert_{*}$ stand for the spectral, Frobenius, entry-wise $l_{\infty}$, row-wise $l_{2,\infty}$, and nuclear norm of $\mathbf{Y}$. $O(r)$ stands for the $r$ dimensional orthogonal group. $\Vert\mathbf{y}\Vert_2$, $\Vert\mathbf{y}\Vert_{\infty}$ is the vector $l_2$, $l_\infty$ norm. $C,\,c>0$ are universal constants that may differ from line to line. We retain $\beta>2$ for expression like $n^{1-\beta}$, and $C_{\beta}$ denotes a constant that dependent with $\beta$ and some $c_I>4$. $\epsilon$ and $\varepsilon\leq1$ are used to denote absolute numbers that control finite sample error. $a\lesssim b$ and $a\gtrsim b$ stands for $a<Cb$ and $a>Cb$, respectively. $\mathbf{Y}\succcurlyeq\mathbf{0}$ means that $\mathbf{Y}$ is PSD. $\mathbb{S}^{n-1}$ stands for the unit sphere in $\mathbb{R}^n$ centered at origin.
\section{Mathematical Setup and Main Result}
\label{subsec_math_setup}

Without loss of generality, we can always assume $\mathbf{P}^{\star T}\mathbf{1}=\mathbf{0}$, and all considered point sets are in the centered subspace $\mathcal{S}_c^n:=\{\mathbf{A}:\mathbf{A1}=\mathbf{0},\mathbf{A}\in\mathcal{S}(n)\}$ to remove the translation ambiguity. For notation simplicity, we abuse $\mathcal{I}$ to denote its $\mathcal{S}_c^n$ restriction $\mathcal{I}_{\{\mathbf{1}\}^{\perp}}$, i.e., $\mathcal{I}_{\{\mathbf{1}\}^{\perp}}(\mathbf{A})=\mathbf{A}$ if $\mathbf{A}\in \mathcal{S}_c^n$. One may also use different geometry centers\cite{LichtenberLocDisMat}. The rotation and reflection ambiguity in EDMC gives rise to a group synchronization problem over $O(r)$, and we use the solution of the orthogonal Procrustes problem to define the distance between $\mathbf{P}$ and $\mathbf{P}^{\star}$
\begin{equation}
	\setlength\belowdisplayskip{3pt}
	\setlength\abovedisplayskip{3pt}
	\label{eq_goedist_quotient}
	\mathrm{dist}(\mathbf{P},\mathbf{P}^{\star})^2=\Vert\Delta\Vert_F^2:=\min_{\boldsymbol{\psi}\in O(r)}\Vert\mathbf{P}-\mathbf{P}^{\star}\boldsymbol{\psi}\Vert_F^2,
\end{equation}
and the solution of \eqref{eq_goedist_quotient} is denoted as $\boldsymbol{\psi}^{\star}$. It satisfies $\mathbf{P}^T\mathbf{P}^{\star}\boldsymbol{\psi}^{\star}\succcurlyeq \boldsymbol{0}$ and $\Delta^T\mathbf{P}^{\star}\boldsymbol{\psi}^{\star}\in \mathcal{S}(r)$\cite[Lemma 6]{RongGeSpuriousLocalMinima}. We also let $\Delta$ and $\boldsymbol{\psi}^{\star}$ inherit any super/sub-script of $\mathbf{P}$, i.e., $\mathrm{dist}(\mathbf{P}^{(\cdot)},\mathbf{P}^{\star})=\Vert\Delta^{(\cdot)}\Vert_F=\Vert\mathbf{P}^{(\cdot)}-\mathbf{P}^{\star}\boldsymbol{\psi}^{\star(\cdot)}\Vert_F$.

The forward and inverse EDM mapping is needed to explicitly define the sample operator in \eqref{eq_R_Omega_sample_first_time}, we restate them below
\begin{gather}
	\setlength\belowdisplayskip{2pt}
	\setlength\abovedisplayskip{2pt}
	\label{eq_Gram_to_EDM}
	g(\mathbf{G}):=\mathrm{diag}(\mathbf{G})\mathbf{1}^T+\mathbf{1}\mathrm{diag}(\mathbf{G})^T-2\mathbf{G},\\
	\label{eq_EDM_to_Gram}
	g^+(\mathbf{D}):=-\frac{1}{2}\mathbf{J}\mathbf{D}\mathbf{J},\,\mathbf{J}=\mathbf{I}-\frac{1}{n}\mathbf{1}\mathbf{1}^T.
\end{gather}
The ground truth EDM is sampled by the symmetric Bernoulli rule, that is $\Omega:=\{(i,j):\delta_{\boldsymbol{\alpha}}=1, \boldsymbol{\alpha}\in\mathbb{I}\}$, where $\delta_{\boldsymbol{\alpha}}$ is a list of i. i. d. 0/1 Bernoulli random variables with $\mathbb{P}(\delta_{\boldsymbol{\alpha}}=1)=p$. Let $\mathcal{P}_{\Omega}$ denote the symmetric sample operator
\begin{equation}
	\setlength\belowdisplayskip{3pt}
	\setlength\abovedisplayskip{3pt}
	\label{eq_P_omega_model}
	\mathcal{P}_{\Omega}(\mathbf{D}):=\sum_{\boldsymbol{\alpha}\in\mathbb{I}}\delta_{\boldsymbol{\alpha}}(\langle\mathbf{D},\mathbf{e}_i\mathbf{e}_j^T\rangle\mathbf{e}_i\mathbf{e}_j^T+\langle\mathbf{D},\mathbf{e}_j\mathbf{e}_i^T\rangle\mathbf{e}_j\mathbf{e}_i^T).
\end{equation}
A list of works\cite{LichtenbergTasissaEDMC}\cite{LichtenberLocDisMat}\cite{SmithCaiRieOptED1} shows that 
the following primal-dual basis decomposition of $\mathcal{I}_{\{\mathbf{1}\}^{\perp}}$ holds over $\mathcal{S}_c^n$
\begin{align}
	\setlength\belowdisplayskip{-2pt}
	\setlength\abovedisplayskip{-2pt}
	\mathcal{I}_{\{\mathbf{1}\}^{\perp}}(\cdot)&:=g^+g(\cdot)=-\frac{1}{2}\sum_{\boldsymbol{\alpha}\in\mathbb{I}}\langle \cdot,\boldsymbol{\omega}_{\boldsymbol{\alpha}}\rangle\mathbf{J}(\mathbf{e}_i\mathbf{e}_j^T+\mathbf{e}_j\mathbf{e}_i^T)\mathbf{J}\nonumber\\
	\label{eq_the_dual_basis_rep}
	&=\sum_{\boldsymbol{\alpha}\in\mathbb{I}}\langle \cdot,\boldsymbol{\omega}_{\boldsymbol{\alpha}}\rangle\boldsymbol{\nu}_{\boldsymbol{\alpha}},
\end{align}
and they call $\boldsymbol{\nu}_{\boldsymbol{\alpha}}:=-\frac{1}{2}\mathbf{J}(\mathbf{e}_i\mathbf{e}_j^T+\mathbf{e}_j\mathbf{e}_i^T)\mathbf{J}$ the dual-basis of $\boldsymbol{\omega}_{\boldsymbol{\alpha}}$\cite{LichtenbergTasissaEDMC}. From \eqref{eq_the_dual_basis_rep}, we find the explicit expression of the pre-conditioned sample operator $\mathcal{R}_{\Omega}$
\begin{equation}
	\setlength\belowdisplayskip{3pt}
	\setlength\abovedisplayskip{3pt}
	\mathcal{R}_{\Omega}(\cdot):=g^+\mathcal{P}_{\Omega}g(\cdot)=\sum_{\boldsymbol{\alpha}\in\mathbb{I}}\delta_{\boldsymbol{\alpha}}\langle \cdot,\boldsymbol{\omega}_{\boldsymbol{\alpha}}\rangle\boldsymbol{\nu}_{\boldsymbol{\alpha}}.
\end{equation}
Under the symmetric Bernoulli model with parameter $p$, we have $\mathbb{E}\frac{1}{p}\mathcal{R}_{\Omega}=\mathbb{E}\frac{1}{p}\mathcal{R}_{\Omega}^*=\mathcal{I}$. Finally, let $\mathcal{Q}_{\Omega}$ be the sensing operator in \eqref{eq_s_stress_forthorder_form}, that is
\begin{equation}
	\setlength\belowdisplayskip{3pt}
	\setlength\abovedisplayskip{3pt}
	\label{eq_Q_omega_s_stress_sample}
	\mathcal{Q}_{\Omega}(\cdot):=\mathcal{P}_{\Omega}g(\cdot)=\sum_{\boldsymbol{\alpha}\in\mathbb{I}}\delta_{\boldsymbol{\alpha}}\langle \cdot,\boldsymbol{\omega}_{\boldsymbol{\alpha}}\rangle(\mathbf{e}_i\mathbf{e}_j^T+\mathbf{e}_j\mathbf{e}_i^T),
\end{equation}
and we abuse
\begin{equation*}
	\setlength\belowdisplayskip{3pt}
	\setlength\abovedisplayskip{3pt}
	\tilde{\mathcal{Q}}_{\Omega}(\cdot):=\sum_{\boldsymbol{\alpha}\in\mathbb{I}}\delta_{\boldsymbol{\alpha}}\langle\cdot,
	\boldsymbol{\omega}_{\boldsymbol{\alpha}}\rangle\mathbf{e}_{\boldsymbol{\alpha}},\,\tilde{\mathcal{Q}}_{\Omega}:\mathbb{R}^{n\times n}\to \mathbb{R}^L,
\end{equation*} 
to denote the re-scaled vectorization of \eqref{eq_Q_omega_s_stress_sample}, where $\mathbf{e}_{\boldsymbol{\alpha}}$ is the $\boldsymbol{\alpha}$-th canonical Euclidean basis in $\mathbb{R}^L$. It is straightforward to show $\mathcal{Q}_{\Omega}^*\mathcal{Q}_{\Omega}=2\tilde{\mathcal{Q}}_{\Omega}^*\tilde{\mathcal{Q}}_{\Omega}$.

\subsection{Main Result}
\label{subsec_main_result}
We are now ready to state our main result. Assume that $\mathbf{P}^{\star}$ satisfies the standard incoherence condition with $1\leq\mu\leq\frac{n}{r}$, namely, let $\mathbf{P}^{\star}\mathbf{P}^{\star T}=\mathbf{U}^{\star}\boldsymbol{\Sigma}^{\star}\mathbf{U}^{\star T}$ denotes its rank $r$ thin Singular Value Decomposition (SVD), we have
\begin{equation}
	\setlength\belowdisplayskip{3pt}
	\setlength\abovedisplayskip{3pt}
	\label{eq_standard_incoherent_para}
	\Vert\mathbf{U}^{\star}\Vert_{2,\infty}\leq\sqrt{\frac{\mu r}{n}},\Vert\mathbf{P}^{\star}\Vert_{2,\infty}\leq\sqrt{\frac{\mu r\sigma_1^{\star}}{n}},
\end{equation}
where $\sigma_i^{\star}=\boldsymbol{\Sigma}^{\star}_{ii}$, and let $\kappa=\sigma_1^{\star}/\sigma_r^{\star}$. 
The proposed APGD iteration is listed above as Algorithm \ref{alg_APGD_Resample}, where $\mathcal{P}_I(\mathbf{P}_k)$ is defined as the following trimming step to enforce incoherence
\begin{equation}
	\setlength\belowdisplayskip{3pt}
	\setlength\abovedisplayskip{3pt}
	\label{eq_the_trimming_step}
	\mathcal{P}_I(\mathbf{P}_{k}(i,:))=
	\begin{cases}
		\mathbf{P}_{k}(i,:),&\text{if } \Vert\mathbf{P}_{k}(i,:)\Vert_2\leq\sqrt{\frac{\mu r\sigma_1^{\star}}{n}}\\
		\frac{\mathbf{P}_{k}(i,:)}{\Vert\mathbf{P}_{k}(i,:)\Vert_2}\sqrt{\frac{\mu r\sigma_1^{\star}}{n}},& \text{otherwise}
	\end{cases}.
\end{equation}
Where $\mathbf{Y}(i,:)$ stands for the $i$-th row of $\mathbf{Y}$, and $\mathcal{T}_r$ denotes the rank $r$ SVD truncation.
\newtheorem{theorem}{Theorem}[section]
\begin{theorem}
	\label{thm_main_global_contraction}
	Under the above set up, let $p\gtrsim C_{\beta}\frac{\mu^2 r^3 \kappa^2\log n}{n}$. For any $k\geq 1$, i.e., Line 1 to Line 5 of Algorithm \ref{alg_APGD_Resample}, we have
	\begin{equation}
		\setlength\belowdisplayskip{3pt}
		\setlength\abovedisplayskip{3pt}
		\label{eq_first_stage_contraction}
		\Vert\Delta_{k}\Vert_F^2\leq(1-\eta^{\prime})^{k-1}\frac{\sigma_r^{\star}}{c_c},
	\end{equation}
	holds with probability at least $1-cn^{1-\beta}-n^{2-\beta}$.
	Where $0<\eta^{\prime}\leq\frac{1}{336c_I\mu r\kappa^2}$, $c_c=\frac{80}{7}$, and $\Delta_{k}$ refers substituting $\mathbf{P}_k$  into \eqref{eq_goedist_quotient}. To reach $\varsigma$ stationary point, i.e., $\Vert\Delta_{k}\Vert_F^2/\Vert\Delta_{0}\Vert_F^2\leq \varsigma$, APGD needs around $336c_I\mu r\kappa^2\log(\frac{1}{\varsigma})$ iterations.
\end{theorem}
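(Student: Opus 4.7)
The plan is to prove the bound inductively on $k$, maintaining the twin invariants (I1) $\|\Delta_k\|_F^2\leq (1-\eta')^{k-1}\sigma_r^\star/c_c$ and (I2) incoherence $\|\mathbf{P}_k\|_{2,\infty}\leq\sqrt{\mu r\sigma_1^\star/n}$. Invariant (I2) is enforced by construction via $\mathcal{P}_I$, and because $\mathbf{P}^\star\boldsymbol{\psi}^{\star(\cdot)}$ lies inside the same row-wise ball, $\mathcal{P}_I$ is non-expansive with respect to $\mathrm{dist}(\cdot,\mathbf{P}^\star)$, so the trimming does not interfere with (I1). The base case $k=1$ is exactly the OS-MDS initialization bound cited in the statement. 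The inductive step reduces to a one-step descent estimate, which I would establish through a regularity condition relative to the pseudo-gradient direction, in the spirit of \cite{ChenWainLREstPGD,ZhengLaffertyNonCVXFR}.

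For the inductive step, after using the optimality of the Procrustes alignment to replace $\boldsymbol{\psi}^{\star(k+1)}$ by $\boldsymbol{\psi}^{\star(k)}$, I would expand
\[
\|\Delta_{k+1}\|_F^2\leq\|\Delta_k\|_F^2-\frac{4\eta}{p}\bigl\langle\mathcal{R}_\Omega(\mathbf{P}_k\mathbf{P}_k^T-\mathbf{G}^\star)\mathbf{P}_k,\Delta_k\bigr\rangle+\frac{4\eta^2}{p^2}\bigl\|\mathcal{R}_\Omega(\mathbf{P}_k\mathbf{P}_k^T-\mathbf{G}^\star)\mathbf{P}_k\bigr\|_F^2,
\]
and then split $\mathbf{P}_k\mathbf{P}_k^T-\mathbf{G}^\star=\Delta_k(\mathbf{P}^\star\boldsymbol{\psi}^{\star(k)})^T+(\mathbf{P}^\star\boldsymbol{\psi}^{\star(k)})\Delta_k^T+\Delta_k\Delta_k^T$. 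The target is a lower bound of order $\sigma_r^\star\|\Delta_k\|_F^2$ on the cross term and an upper bound of order $\mu r\sigma_1^\star\|\Delta_k\|_F^2$ on the squared pseudo-gradient norm; balancing the two forces $\eta=\Theta(1/(\mu r\kappa))$ and yields the contraction factor $1-\eta'$ with $\eta'\lesssim 1/(\mu r)$, matching $\eta'\leq 1/(336 c_I\mu r)$.

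For the lower bound on the cross term, the two "tangent-like" pieces (one factor in the tangent space $\mathbb{T}$ of $\mathbf{G}^\star$) are controlled using the tangent-space RIP of $\mathcal{R}_\Omega$ already established in Lemma \ref{lema_Romega_RIPL}, while the cubic piece coming from $\Delta_k\Delta_k^T$ is absorbed by the RIP margin once we impose $\|\Delta_k\|_F^2\leq\sigma_r^\star/c_c$; this is the reason the constant $c_c=80/7$ appears and why (I1) has exactly this calibration. A standard bilinear identity plus the PSD structure of $\mathbf{P}^T\mathbf{P}^\star\boldsymbol{\psi}^{\star(k)}$ converts the result into the desired $\sigma_r^\star\|\Delta_k\|_F^2$ floor.

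The upper bound on the squared pseudo-gradient norm is where the main obstacle lies, because the classical random graph lemma of \cite{KeshavanOptSpace} that would close this step in the orthonormal-basis setting has no clean EDMC analogue, and $\mathcal{R}_\Omega^*\mathcal{R}_\Omega$ is too crude an object due to the ill-conditioning of $\mathbf{H}_{\boldsymbol{\omega}}$. My plan is to expand the norm as a sum of nine $\langle\mathcal{R}_\Omega(\mathbf{A}),\mathcal{R}_\Omega(\mathbf{B})\rangle$ inner products indexed by the three-term splitting above, separate off the purely tangent contributions (handled again by Lemma \ref{lema_Romega_RIPL}), and control the mixed and purely normal-space contributions by the two new uniform estimates, Lemma \ref{lema_sharp_bound_for_L21L22} and Lemma \ref{lema_nonUni_bound_for_L3}, that the paper flags as the technical novelty. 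These lean on the sharp non-commutative Bernstein-type inequality of Bandeira and van Handel \cite{BandeiraHandelSharpRGL} together with the Bhojanapalli--Jain separation trick \cite{pmlr-v32-bhojanapalli14} to peel the troublesome diagonal $n\mathbf{I}$ contribution of $\boldsymbol{\mathcal{L}}$; incoherence of $\mathbf{P}_k$ from (I2) enters here to bound $\|\Delta_k\Delta_k^T\|_{2,\infty}$ and $\|\Delta_k\mathbf{P}_k^T\|_{2,\infty}$ by $\sqrt{\mu r\sigma_1^\star/n}\,\|\Delta_k\|_F$, producing the required $\mu r\sigma_1^\star\|\Delta_k\|_F^2$ upper bound without any sample-splitting. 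Once this is in hand, closing the induction and taking a single union bound across initialization, the tangent RIP, and the two new concentration lemmas delivers the probability $1-cn^{1-\beta}-n^{2-\beta}$, while $(1-\eta')^{k-1}\leq\varsigma$ immediately gives the $336 c_I\mu r\log(1/\varsigma)$ iteration complexity.
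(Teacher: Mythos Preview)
Your induction/regularity-condition scaffolding matches the paper, and you correctly identify Lemmas~\ref{lema_sharp_bound_for_L21L22} and~\ref{lema_nonUni_bound_for_L3} as the new ingredients. But your plan for the smoothness bound $\|\hat{\nabla}f(\mathbf{P})\|_F^2\leq\rho\|\Delta\|_F^2$ has a genuine gap. You propose to expand the squared norm directly into nine $\langle\mathcal{R}_\Omega(\mathbf{A}),\mathcal{R}_\Omega(\mathbf{B})\rangle$ terms; each of these is an $\mathcal{R}_\Omega^*\mathcal{R}_\Omega$-type quantity, which you yourself flag as intractable due to the conditioning of $\mathbf{H}_{\boldsymbol{\omega}}$. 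Lemmas~\ref{lema_sharp_bound_for_L21L22} and~\ref{lema_nonUni_bound_for_L3} control only \emph{single}-$\mathcal{R}_\Omega$ (or $\mathcal{R}_\Omega^*$) inner products and do not apply to the double-operator terms your expansion produces. The paper sidesteps this by writing $\|\hat{\nabla}f(\mathbf{P})\|_F=\sup_{\|\mathbf{Z}\|_F=1}\langle\hat{\nabla}f(\mathbf{P}),\mathbf{Z}\rangle$ and transferring the extra $\mathbf{P}$ factor onto $\mathbf{Z}$, yielding single-$\mathcal{R}_\Omega$ terms $R_1,\ldots,R_4$ (Appendix~\ref{Appdi_B2_Bound}). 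The bottleneck term $R_4=\langle\frac{1}{p}\mathcal{R}_\Omega(\Delta\Delta^T),\mathbf{Z}_{P^\star}\rangle$ is handled by neither of the two lemmas you cite: one unwinds $\mathcal{R}_\Omega=g^+\mathcal{P}_\Omega g$, applies Cauchy--Schwarz, and then invokes Lemma~\ref{eq_NromalSpace_tight_bound} on $\frac{1}{p}\|\mathcal{Q}_\Omega(\Delta\Delta^T)\|_F^2$ together with the ordinary $\mathcal{P}_\Omega$-tangent RIP (Lemma~\ref{lema_Pomega_RIPL_Cand}). This $R_4$ is exactly what forces $\rho=84\sigma_1^{\star2}c_I\mu r$ and hence the $\mu r$ factor in $\eta'$; your proposal offers no route to it.

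A secondary misallocation: on the strong-convexity side, the cubic and mixed \emph{sample-level} residuals (the paper's $L_{21},L_{22},L_3$ in Appendix~\ref{Appdi_B1_Bound}) are not ``absorbed by the RIP margin''---tangent RIP handles only $L_1$. These three terms are precisely where Lemmas~\ref{lema_sharp_bound_for_L21L22} and~\ref{lema_nonUni_bound_for_L3} enter; the population-level cubic is what the contraction radius $\|\Delta\|_F^2\leq\sigma_r^\star/c_c$ absorbs. So the roles you assign to the new lemmas and to the RIP are essentially swapped between the two halves of the regularity condition.
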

The proof of Theorem \ref{thm_main_global_contraction} is developed based on the following two lemmas, which describe the behavior of APGD inside the incoherence and contraction regions (RIC) $\mathcal{B}$, defined below
\begin{equation*}
	\setlength\belowdisplayskip{3pt}
	\setlength\abovedisplayskip{3pt}
	\mathcal{B}:=\{\mathbf{P}\,:\,\Vert\Delta\Vert_{2,\infty}\leq\sqrt{\frac{c_I\mu r\sigma_1^{\star}}{n}},\Vert\Delta\Vert_F^2\leq \frac{\sigma_r^{\star}}{c_c}\}, \,c_I>4.
\end{equation*}
Where the lower bound on $c_I$ comes from $\Vert\Delta\Vert_{2,\infty}\leq\Vert\mathbf{P}\Vert_{2,\infty}+\Vert\mathbf{P}^{\star}\Vert_{2,\infty}\leq 2\Vert\mathbf{P}^{\star}\Vert_{2,\infty}$ by our parameter setting in the incoherence projection step \eqref{eq_the_trimming_step}.
\newtheorem{lemma}{Lemma}[section]
\begin{lemma}
	\label{eq_OS_MDS_the_basic_case}
	The RIC region $\mathcal{B}$ can be entered (i.e., $k=1$, Line 1 in Algorithm \ref{alg_APGD_Resample}) by using a spectral initialization followed by incoherence projection $\mathcal{P}_I$. That is,
	\begin{equation*}
		\setlength\belowdisplayskip{3pt}
		\setlength\abovedisplayskip{3pt}
		\hat{\mathbf{P}}\hat{\mathbf{P}}^T=\mathcal{T}_r\left[ -\frac{1}{2p} \mathbf{J} (\mathcal{P}_{\Omega}\mathbf{D}^{\star}) \mathbf{J}\right],\,\mathbf{P}_1 = \mathcal{P}_{I}(\hat{\mathbf{P}}),
	\end{equation*}
	satisfies $\mathbf{P}_1\in\mathcal{B}$ with probability at least $1-n^{1-\beta}$, provided with $p\gtrsim C_{\beta}\frac{\mu^2 r^3 \kappa^2\log n}{n}$.
\end{lemma}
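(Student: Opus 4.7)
The plan is to leverage the unbiased structure of the OS-MDS operator and to split the two defining inequalities of $\mathcal{B}$, which behave quite differently. The crucial rewriting is
\[
-\tfrac{1}{2p}\mathbf{J}(\mathcal{P}_{\Omega}\mathbf{D}^{\star})\mathbf{J}=\tfrac{1}{p}g^{+}\mathcal{P}_{\Omega}g(\mathbf{G}^{\star})=\tfrac{1}{p}\mathcal{R}_{\Omega}(\mathbf{G}^{\star}),
\]
so $\hat{\mathbf{P}}\hat{\mathbf{P}}^{T}=\mathcal{T}_{r}\bigl[\tfrac{1}{p}\mathcal{R}_{\Omega}(\mathbf{G}^{\star})\bigr]$ is the best rank-$r$ approximation of an unbiased estimator of $\mathbf{G}^{\star}$ (recall $\mathbb{E}\tfrac{1}{p}\mathcal{R}_{\Omega}=\mathcal{I}$ on $\mathcal{S}_{c}^{n}$). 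The whole argument thus reduces to a spectral perturbation analysis driven by a single operator-norm concentration inequality, plus a check that the trimming step does not spoil anything.

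First I would apply matrix Bernstein to the zero-mean sum $\sum_{\boldsymbol{\alpha}\in\mathbb{I}}(\delta_{\boldsymbol{\alpha}}/p-1)\mathbf{D}^{\star}_{\boldsymbol{\alpha}}\boldsymbol{\nu}_{\boldsymbol{\alpha}}$ to obtain
\[
\bigl\|\tfrac{1}{p}\mathcal{R}_{\Omega}(\mathbf{G}^{\star})-\mathbf{G}^{\star}\bigr\|\lesssim\sqrt{\tfrac{\mu^{2}r^{2}(\sigma_{1}^{\star})^{2}\log n}{pn}}+\tfrac{\mu r\sigma_{1}^{\star}\log n}{pn}
\]
with probability at least $1-n^{1-\beta}$. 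The inputs are $|\mathbf{D}^{\star}_{\boldsymbol{\alpha}}|\leq 4\mu r\sigma_{1}^{\star}/n$ from \eqref{eq_standard_incoherent_para}, the rank-two bound $\|\boldsymbol{\nu}_{\boldsymbol{\alpha}}\|\lesssim 1$, and the variance estimate $\|\sum_{\boldsymbol{\alpha}}\boldsymbol{\nu}_{\boldsymbol{\alpha}}^{2}\|\lesssim n$ obtained by exploiting that $\boldsymbol{\nu}_{\boldsymbol{\alpha}}^{2}$ is essentially $\tfrac14(\mathbf{e}_{i}\mathbf{e}_{i}^{T}+\mathbf{e}_{j}\mathbf{e}_{j}^{T})$ after absorbing the centering $\mathbf{J}$. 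This is exactly the spectral-initialization content of \cite{YCLiSunEDMC_Spec_init} and \cite[Lemma 5.6]{SmithCaiTasissaRieEDMC2}, which I would cite rather than rederive. Under $p\gtrsim C_{\beta}\mu^{2}r^{3}\kappa^{2}\log n/n$, this bound can be driven below $c\sigma_{r}^{\star}/\sqrt{r}$ for any preset constant $c$ by absorbing the constant into $C_{\beta}$.

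Next I would convert spectral error into Procrustes error. Because both $\hat{\mathbf{G}}$ and $\mathbf{G}^{\star}$ have rank at most $r$, and because $\mathcal{T}_{r}$ is the best rank-$r$ approximation operator, $\|\hat{\mathbf{G}}-\mathbf{G}^{\star}\|\leq 2\|\tfrac{1}{p}\mathcal{R}_{\Omega}(\mathbf{G}^{\star})-\mathbf{G}^{\star}\|$ and hence $\|\hat{\mathbf{G}}-\mathbf{G}^{\star}\|_{F}\leq 2\sqrt{2r}\|\tfrac{1}{p}\mathcal{R}_{\Omega}(\mathbf{G}^{\star})-\mathbf{G}^{\star}\|$; the standard Procrustes-to-Gram bound $\|\hat{\mathbf{P}}-\mathbf{P}^{\star}\boldsymbol{\psi}^{\star,\hat{\mathbf{P}}}\|_{F}^{2}\leq\frac{1}{2(\sqrt{2}-1)\sigma_{r}^{\star}}\|\hat{\mathbf{G}}-\mathbf{G}^{\star}\|_{F}^{2}$ from \cite[Lemma 6]{RongGeSpuriousLocalMinima} then yields $\|\Delta_{\hat{\mathbf{P}}}\|_{F}^{2}\leq\sigma_{r}^{\star}/c_{c}$ with the prescribed $c_{c}=80/7$ once the absorbed constant is tuned. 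To propagate this through the trimming step I would argue that $\mathcal{P}_{I}$ is non-expansive toward $\mathbf{P}^{\star}\boldsymbol{\psi}^{\star,\hat{\mathbf{P}}}$: by \eqref{eq_standard_incoherent_para} every row of the target already lies in the projection ball of radius $\sqrt{\mu r\sigma_{1}^{\star}/n}$, so the row-wise Euclidean projection is a contraction toward it, giving $\|\mathcal{P}_{I}(\hat{\mathbf{P}})-\mathbf{P}^{\star}\boldsymbol{\psi}^{\star,\hat{\mathbf{P}}}\|_{F}\leq\|\hat{\mathbf{P}}-\mathbf{P}^{\star}\boldsymbol{\psi}^{\star,\hat{\mathbf{P}}}\|_{F}$, and minimizing over $O(r)$ on the left delivers $\|\Delta_{1}\|_{F}\leq\|\Delta_{\hat{\mathbf{P}}}\|_{F}$.

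The $l_{2,\infty}$ half of $\mathcal{B}$ is essentially free: by construction $\mathcal{P}_{I}$ enforces $\|\mathbf{P}_{1}\|_{2,\infty}\leq\sqrt{\mu r\sigma_{1}^{\star}/n}$, and combined with \eqref{eq_standard_incoherent_para} the triangle inequality gives $\|\Delta_{1}\|_{2,\infty}\leq 2\sqrt{\mu r\sigma_{1}^{\star}/n}\leq\sqrt{c_{I}\mu r\sigma_{1}^{\star}/n}$ because $c_{I}>4$. The main technical obstacle in this plan is the matrix-Bernstein step itself: the dual basis $\boldsymbol{\nu}_{\boldsymbol{\alpha}}=-\tfrac12\mathbf{J}(\mathbf{e}_{i}\mathbf{e}_{j}^{T}+\mathbf{e}_{j}\mathbf{e}_{i}^{T})\mathbf{J}$ is not rank-one, and a naive bound on $\|\sum_{\boldsymbol{\alpha}}(\mathbf{D}^{\star}_{\boldsymbol{\alpha}})^{2}\boldsymbol{\nu}_{\boldsymbol{\alpha}}^{2}\|$ picks up an extra factor of $n$ through the $\tfrac{1}{n}\mathbf{1}\mathbf{1}^{T}$ tail of $\mathbf{J}$; only after exploiting the cancellation of this tail across the summation does the target rate $p\gtrsim\mu^{2}r^{3}\kappa^{2}\log n/n$ become attainable.
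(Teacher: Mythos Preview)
Your proposal is correct and follows essentially the same route as the paper: invoke the OS-MDS spectral concentration bound (Lemma~\ref{lema_OSMDS_Spec_init}, which packages the matrix-Bernstein argument you sketch), convert to Procrustes distance via \cite[Lemma 6]{RongGeSpuriousLocalMinima} and the rank bound $\|\cdot\|_F^2\le 2r\|\cdot\|^2$, pass through $\mathcal{P}_I$ by its non-expansiveness toward the already-incoherent target $\mathbf{P}^{\star}\boldsymbol{\psi}^{\star}$, and obtain the $l_{2,\infty}$ half for free from the trimming radius plus $c_I>4$. The paper's proof is just the terse version of your outline, citing Lemma~\ref{lema_OSMDS_Spec_init} directly rather than unpacking the Bernstein step.
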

\begin{proof}
	Please see Appendix \ref{eq_proof_of_OS_mds}.
\end{proof}
\begin{algorithm}[!t]
	\caption{APGD}
	\label{alg_APGD_Resample}
	\begin{algorithmic}[1]
		\REQUIRE Pseudo-gradient $\hat{\nabla}f(\mathbf{P})$ as in \eqref{eq_fracted_pre_cond_cost_Grad}, sampled distances $\mathcal{P}_{\Omega}\mathbf{D}^{\star}$.
		\STATE Initial APGD by OS-MDS as in \cite{YCLiSunEDMC_Spec_init}\cite{SmithCaiTasissaRieEDMC2}
		\begin{equation*}
			\hat{\mathbf{P}}\hat{\mathbf{P}}^T=\mathcal{T}_r\left[ -\frac{1}{2p} \mathbf{J} (\mathcal{P}_{\Omega}\mathbf{D}^{\star}) \mathbf{J}\right],\,\mathbf{P}_1=\mathcal{P}_I(\hat{\mathbf{P}}),
		\end{equation*}
		where $\mathcal{P}_I$ is defined by \eqref{eq_the_trimming_step}.
		\FOR{$k=1,2,\dots$}
		\STATE $\mathbf{P}_{k+1} = \mathbf{P}_k-\frac{2\eta}{p}g^+\mathcal{P}_{\Omega}(g(\mathbf{P}_{k}\mathbf{P}_{k}^T)-\mathbf{D}^{\star})\mathbf{P}_{k}$.
		\STATE $\mathbf{P}_{k+1}=\mathcal{P}_I(\mathbf{P}_{k+1})$.
		\ENDFOR
		\RETURN $\mathbf{P}_k$
	\end{algorithmic}
\end{algorithm}
\begin{lemma}
	\label{thm_main_local_contraction}
	Under the set up of Theorem \ref{thm_main_global_contraction}, inside the RIC $\mathcal{B}$, we have the following update rule for $k\geq 1$ (i.e. Line 2 to Line 5 in Algorithm \ref{alg_APGD_Resample})
	\begin{equation*}
		\setlength\belowdisplayskip{3pt}
		\setlength\abovedisplayskip{3pt}
		\mathbf{P}_{k+1}=\mathcal{P}_I(\mathbf{P}_{k}-\frac{2\eta}{p}\mathcal{R}_{\Omega}(\mathbf{P}_{k}\mathbf{P}_{k}^T-\mathbf{G}^{\star})\mathbf{P}_{k}),
	\end{equation*}
	satisfies $\Vert\Delta_{k+1}\Vert_F^2\leq(1-\frac{\eta\sigma_r^{\star}}{2})\Vert\Delta_k\Vert_F^2$ with probability at least $1-cn^{1-\beta}-n^{2-\beta}$, provided with $p\gtrsim C_{\beta}\frac{\mu^2r^2\kappa^2 \log n}{n}$, and $0<\eta\leq\frac{1}{168\sigma_1^{\star}c_I\mu r\kappa}$.
\end{lemma}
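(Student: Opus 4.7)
The plan is to execute the one-step analysis of projected gradient descent on a factored low-rank cost (cf.\ \cite{ZhengLaffertyNonCVXFR}\cite{ChenWainLREstPGD}), adapted to the pre-conditioned operator $\mathcal{R}_{\Omega}$, with the non-tangent residuals controlled by the two new inner-product estimates (Lemma \ref{lema_sharp_bound_for_L21L22} and Lemma \ref{lema_nonUni_bound_for_L3}) that replace the classical random graph lemma. Since $\mathbf{P}^{\star}\boldsymbol{\psi}^{\star}_k$ satisfies $\|\mathbf{P}^{\star}\|_{2,\infty}\leq\sqrt{\mu r\sigma_1^{\star}/n}$, it is feasible for the row-wise convex ball defining $\mathcal{P}_I$, so $\mathcal{P}_I$ is non-expansive about it and
\begin{equation*}
\|\Delta_{k+1}\|_F^2\leq \|\Delta_k\|_F^2-\tfrac{4\eta}{p}T_1+\tfrac{4\eta^2}{p^2}T_2,
\end{equation*}
with $T_1:=\langle\Delta_k,\mathcal{R}_{\Omega}(\mathbf{M}_k)\mathbf{P}_k\rangle$, $T_2:=\|\mathcal{R}_{\Omega}(\mathbf{M}_k)\mathbf{P}_k\|_F^2$, and $\mathbf{M}_k:=\mathbf{P}_k\mathbf{P}_k^T-\mathbf{G}^{\star}$.

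All of the work concentrates in a lower bound on $T_1$. Since $\mathcal{R}_{\Omega}(\mathbf{M}_k)$ is symmetric, $2T_1=\langle\mathbf{M}_k+\Delta_k\Delta_k^T,\mathcal{R}_{\Omega}\mathbf{M}_k\rangle$. Splitting $\mathbf{M}_k=\mathbf{H}_1+\mathbf{H}_2$ with tangent component $\mathbf{H}_1:=\Delta_k(\mathbf{P}^{\star}\boldsymbol{\psi}^{\star}_k)^T+\mathbf{P}^{\star}\boldsymbol{\psi}^{\star}_k\Delta_k^T\in\mathbb{T}$ and higher-order piece $\mathbf{H}_2:=\Delta_k\Delta_k^T$ yields
\begin{equation*}
\tfrac{2}{p}T_1=\tfrac{1}{p}\langle\mathbf{H}_1,\mathcal{R}_{\Omega}\mathbf{H}_1\rangle+\tfrac{3}{p}\langle\mathbf{H}_1,\mathcal{R}_{\Omega}\mathbf{H}_2\rangle+\tfrac{2}{p}\langle\mathbf{H}_2,\mathcal{R}_{\Omega}\mathbf{H}_2\rangle.
\end{equation*}
The first term is $\geq 2(1-\epsilon)\sigma_r^{\star}\|\Delta_k\|_F^2$ using the tangent RIP of $\mathcal{R}_{\Omega}$ (Lemma \ref{lema_Romega_RIPL}) together with the standard identity $\|\mathbf{H}_1\|_F^2\geq 2\sigma_r^{\star}\|\Delta_k\|_F^2$, which in turn uses symmetry of $\Delta_k^T\mathbf{P}^{\star}\boldsymbol{\psi}^{\star}_k$. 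The third is non-negative since $\mathcal{R}_{\Omega}$ is positive semidefinite. The middle cross term is handled by invoking Lemma \ref{lema_sharp_bound_for_L21L22} on one summand and Lemma \ref{lema_nonUni_bound_for_L3} on the other, both exploiting the incoherence budget $\|\Delta_k\|_{2,\infty}\leq 2\sqrt{\mu r\sigma_1^{\star}/n}$ enforced by $\mathcal{P}_I$, and both being uniform along the trajectory so that no sample splitting is needed. Finally $T_2\leq \|\mathcal{R}_{\Omega}(\mathbf{M}_k)\|^2\|\mathbf{P}_k\|^2$ is controlled by combining the same tangent RIP with the normal-space bound (cf.\ Lemma \ref{eq_NromalSpace_tight_bound}) applied to $\mathbf{H}_1,\mathbf{H}_2$; inside $\mathcal{B}$ this yields $T_2\lesssim p\sigma_1^{\star 2}\|\Delta_k\|_F^2$. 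Choosing $\eta\leq 1/(168\sigma_1^{\star}c_I\mu r\kappa)$ makes the $\eta^2T_2/p^2$ contribution absorb into half of the descent produced by $\eta T_1/p$, delivering the target factor $(1-\eta\sigma_r^{\star}/2)$. A union bound over the four underlying high-probability events gives the stated $1-cn^{1-\beta}-n^{2-\beta}$.

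The principal difficulty is the cross inner product $\langle\mathbf{H}_1,\mathcal{R}_{\Omega}\mathbf{H}_2\rangle/p$. In orthonormal-basis LRMC this is routinely handled by the random graph lemma $\|p^{-1}\mathcal{P}_{\Omega}-\mathcal{I}\|\lesssim\sqrt{n/p}$ on matrices with bounded entries, but that route is blocked here: $\boldsymbol{\nu}_{\boldsymbol{\alpha}}$ is non-orthonormal and non-sub-Gaussian, and $\mathbf{H}_{\boldsymbol{\omega}}$ is highly ill-conditioned, so any attempt to form $\mathcal{R}_{\Omega}^{*}\mathcal{R}_{\Omega}$ and apply matrix Bernstein gives a bound too weak for the quoted sample rate. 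The remedy is to keep the pairing in bilinear form and to pay for the $\ell_{2,\infty}$ smallness of $\mathbf{H}_2$ twice: once via the Bandeira--van Handel sharp matrix concentration to tame the diagonal-dominant component of $\boldsymbol{\omega}_{\boldsymbol{\alpha}}$, and once via the Bhojanapalli--Jain separation trick to push the remainder through cleanly. These are precisely the content of Lemma \ref{lema_sharp_bound_for_L21L22} and Lemma \ref{lema_nonUni_bound_for_L3}; once they are available, the remaining algebra above is essentially routine.
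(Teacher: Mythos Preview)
Your overall strategy---non-expansiveness of $\mathcal{P}_I$, then a lower bound on the correlation $T_1$ and an upper bound on the gradient norm $T_2$---matches the paper's regularity-condition route (Claim~\ref{claim_regularity_cond}). But two steps in your $T_1$ analysis fail because $\mathcal{R}_\Omega=g^+\mathcal{P}_\Omega g$ is \emph{neither self-adjoint nor positive semidefinite}, which is exactly the asymmetry the paper is built around. First, the collapse $\tfrac{1}{p}\langle\mathbf{H}_1,\mathcal{R}_\Omega\mathbf{H}_2\rangle+\tfrac{2}{p}\langle\mathbf{H}_2,\mathcal{R}_\Omega\mathbf{H}_1\rangle=\tfrac{3}{p}\langle\mathbf{H}_1,\mathcal{R}_\Omega\mathbf{H}_2\rangle$ is invalid: $\langle\mathbf{H}_2,\mathcal{R}_\Omega\mathbf{H}_1\rangle=\langle\mathcal{R}_\Omega^{*}\mathbf{H}_1,\mathbf{H}_2\rangle$ and $\mathcal{R}_\Omega^{*}\neq\mathcal{R}_\Omega$. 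This is precisely why the paper carries two separate cross terms ($L_{21}$ with $\mathcal{R}_\Omega$ and $L_3$ with $\mathcal{R}_\Omega^{*}$) and why Lemma~\ref{lema_nonUni_bound_for_L3} is stated for the \emph{adjoint} and only for a fixed $\mathbf{B}=\mathbf{P}^{\star}$. Second, ``the third is non-negative since $\mathcal{R}_\Omega$ is positive semidefinite'' is false: for $\mathbf{H}_2=\Delta\Delta^T\in\mathcal{S}_c^n$ one computes $\langle\mathbf{H}_2,\mathcal{R}_\Omega\mathbf{H}_2\rangle=-\sum_{\boldsymbol{\alpha}\in\Omega}\|\Delta_{i,\cdot}-\Delta_{j,\cdot}\|_2^2\langle\Delta_{i,\cdot},\Delta_{j,\cdot}\rangle$, which can be negative. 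The paper does not drop this term; it splits off the population part into $A_1$ and bounds the distortion part ($L_{22}$) via Lemma~\ref{lema_sharp_bound_for_L21L22}.

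The smoothness side is also off. Your claimed $T_2\lesssim p\,\sigma_1^{\star 2}\|\Delta_k\|_F^2$ is too optimistic: the paper obtains $\|\hat{\nabla}f(\mathbf{P})\|_F^2\leq 84\,\sigma_1^{\star 2}c_I\mu r\,\|\Delta\|_F^2$, and that $c_I\mu r$ factor is what forces the stated step size $\eta\leq(168\sigma_1^{\star}c_I\mu r\kappa)^{-1}$. The bottleneck is the term $R_4=\langle\tfrac{1}{p}\mathcal{R}_\Omega(\Delta\Delta^T),\mathbf{Z}_{P^{\star}}\rangle$, which neither the tangent RIP nor a spectral-norm bound on $\mathcal{R}_\Omega(\mathbf{M}_k)$ handles directly; the paper instead unpacks $\mathcal{R}_\Omega=g^+\mathcal{P}_\Omega g$, applies Cauchy--Schwarz to split into $\tfrac{1}{p}\|\mathcal{Q}_\Omega(\Delta\Delta^T)\|_F^2$ (Lemma~\ref{eq_NromalSpace_tight_bound}, the source of the $\mu r$) and $\tfrac{1}{p}\|\mathcal{P}_\Omega\mathcal{P}_{\mathbb{T}}(\mathbf{Z}_{P^{\star}})\|_F^2$ (Lemma~\ref{lema_Pomega_RIPL_Cand}), and remarks that removing the $\mu r$ appears hard with current tools.
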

\begin{proof}
	Please see Appendix \ref{AppdiAA_local_contraction}.
\end{proof}
\subsection{Proof of Theorem \ref{thm_main_global_contraction}}
By our setting of parameters, \eqref{eq_first_stage_contraction} is a direct corollary of Lemma \ref{eq_OS_MDS_the_basic_case} and Lemma \ref{thm_main_local_contraction}. The interval on $\eta^{\prime}$ comes from calculating $\frac{\eta\sigma_r^{\star}}{2}$ with the step size $\eta$ determined in Appendix \ref{AppdiAA_local_contraction}. What remains is to count iteration complexity using \eqref{eq_first_stage_contraction}. Suppose choosing the largest possible $\eta^{\prime}=\frac{1}{336c_I\mu r\kappa^2}$, we need to show $(1-\frac{1}{336c_I\mu r\kappa^2})^{k_0-1}\leq \varsigma$, it requires $k_0 \geq \mathcal{O}(336c_I\mu r\kappa^2\log(1/\varsigma))$, concluding the proof.\qed
\newtheorem{remark}{Remark}[section]
\begin{remark}
	\label{remark_class_lrmc_comare}
	It is worth noting that the requirement on $p$ in Lemma \ref{thm_main_local_contraction}, i.e., the establishment of regularity condition for the pseudo gradient, is of the same order in $\kappa$, $\mu$, $r$, and $n$ as its counterpart in classic LRMC theory\cite[Lemma 3]{ZhengLaffertyNonCVXFR}. This observation leads us to conjecture that our techniques might be applied to tighten the estimate by Tasissa and Lai\cite{TasissaEDMCProof}. Meanwhile, the sample complexity for the OS-MDS is slightly suboptimal in $r$ compared with the best-known bound\cite[Thm. 3.23]{ChenSpectralMethods}. The direct rank $r+2$ truncation of an EDM (i.e., the SVD-MDS) has been shown to be minimax sub-optimal in \cite[Sec. IV]{ZhangSVD_MDS}. Since the OS-MDS shares almost same empirical phase transition edge as the SVD-MDS\cite{YCLiSunEDMC_Spec_init}, it suggests that the idea of using $\Vert\frac{1}{p}\mathcal{P}_{\Omega}\mathbf{D}^{\star}-\mathbf{D}^{\star}\Vert$ to construct spectral estimator might be overly conservative in practice.
\end{remark}
\begin{remark}
	Similar to classic results mentioned in Remark \ref{remark_class_lrmc_comare}, step size rule in Lemma \ref{thm_main_local_contraction} depends on both sample complexity and the radius of RIC. We do not opt to optimize $c_c$; instead, $c_I$ is explicitly left as a tunable constant. Enlarging $c_I$ makes the convergence of APGD more fragile, consistent with numerical observations in Section \ref{subsecPT_under_Gaussian_point}. Skeptical reader may have pointed out that step size used in Lemma \ref{thm_main_local_contraction} appears overly conservative. In pre-conditioning free LRMC and HMC contexts, more aggressive step sizes of the order $\mathcal{O}(1/\kappa^2)$\cite{ChenDLiLOO_RecRegular_Mat}, $\mathcal{O}(1/\kappa)$\cite{LiCuiPGD_SpectralCSHankel}\cite{CaiCaiYouStrGDHMC} have been proven feasible. The $\mathcal{O}(1/\kappa)$ rate upon $\mathcal{O}(1/\kappa^2)$ is obtained by reproducing higher order term of $\Vert\Delta\Vert_F$ when establishing regularity condition\cite[Lemma 16, 17]{CaiCaiYouStrGDHMC}. However, the $\mu$, $r$ dependency here can not be trivially moved. The largest component in restricted smoothness part lies in \eqref{eq_upperbound_R_4}, where we utilize Lemma \ref{eq_NromalSpace_tight_bound} to bound $|R_4|$, thus additional $cn\Vert\Delta\Vert_{2,\infty}^2$ term appears. We infer that Lemma \ref{eq_NromalSpace_tight_bound} is not sharp up to $\sqrt{n}$, but we cannot amend it by assuming standard incoherence alone. If this estimate can be shown to match the random graph lemma\cite[Lemma 7.1]{KeshavanOptSpace}, it would immediately yield the local convergence mechanism, i.e., the contraction speed measured by $\Vert \Delta \Vert_F$, when optimizing the s-stress function.
\end{remark}
\subsection{Why APGD and Why not $\mathcal{R}_{\Omega}^*\mathcal{R}_{\Omega}$}
\label{subsec_whyAPGD}
Due to the projection property of $\mathcal{P}_{\Omega}$ under Bernoulli sampling model, it is clear that $\mathcal{P}_{\Omega}^*\mathcal{P}_{\Omega} = \mathcal{P}_{\Omega}$. Obviously, the orthonormality of the sensing basis $\mathbf{e}_i\mathbf{e}_j^T$ ensures this equivalence. However, for the EDMC problem, the primal basis correlation matrix $\mathbf{H}_{\boldsymbol{\omega}}$ and its inverse $\mathbf{H}_{\boldsymbol{\omega}}^{-1}$ (the dual basis correlation matrix\cite{LichtenbergTasissaEDMC}) are not diagonal. Therefore, theoretical analysis of optimizing $\frac{1}{p^2}\Vert\mathcal{R}_{\Omega}(\mathbf{PP}^T-\mathbf{G}^{\star})\Vert_F^2$ and the local behavior of its corresponding gradient, as suggested by\cite{SmithCaiRieOptED1}, should be considerably more obscure. It has been pointed out in\cite{YCLiSunEDMC_Spec_init}\cite[Lemma B.1]{SmithRieOptEDMCSubGChaos} that $\frac{1}{p^2}\mathcal{R}_{\Omega}^*\mathcal{R}_{\Omega}$ is a biased estimator for $\mathcal{I}$, since
\begin{align}
	\setlength\belowdisplayskip{2pt}
	\setlength\abovedisplayskip{2pt}
	&\frac{1}{p^2}\mathcal{R}_{\Omega}^*\mathcal{R}_{\Omega}(\cdot) = \sum_{\boldsymbol{\alpha}
	\neq\boldsymbol{\beta}}\frac{1}{p}\delta_{\boldsymbol{\alpha}}\frac{1}{p}\delta_{\boldsymbol{\beta}}\langle\cdot,\boldsymbol{\omega}_{\boldsymbol{\alpha}}\rangle\langle\boldsymbol{\nu}_{\boldsymbol{\alpha}},\boldsymbol{\nu}_{\boldsymbol{\beta}}\rangle\boldsymbol{\omega}_{\boldsymbol{\beta}}\nonumber\\
	\label{eq_RomegaConjRomega}
	&+ \sum_{\boldsymbol{\alpha}=\boldsymbol{\beta}} \frac{1}{p^2}\delta_{\boldsymbol{\alpha}}\Vert\boldsymbol{\nu}_{\boldsymbol{\alpha}}\Vert_F^2\langle\cdot,\boldsymbol{\omega}_{\boldsymbol{\alpha}}\rangle\boldsymbol{\omega}_{\boldsymbol{\alpha}} = \frac{1}{p^2}\tilde{\mathcal{Q}}_{\Omega}^*\mathbf{H}^{-1}_{\boldsymbol{\omega}}\tilde{\mathcal{Q}}_{\Omega}(\cdot),
\end{align}
where $[\mathbf{H}^{-1}_{\boldsymbol{\omega}}]_{\boldsymbol{\alpha}\boldsymbol{\beta}}=\langle\boldsymbol{\nu}_{\boldsymbol{\alpha}},\boldsymbol{\nu}_{\boldsymbol{\beta}}\rangle$ denotes the $(\boldsymbol{\alpha},\boldsymbol{\beta})$ element in the correlation matrix of $\{\boldsymbol{\nu}_{\boldsymbol{\alpha}}\}_{\boldsymbol{\alpha}\in\mathbb{I}}$, and $[\mathbf{H}^{-1}_{\boldsymbol{\omega}}]_{\boldsymbol{\alpha}\boldsymbol{\alpha}}=\frac{n^2-2n+2}{2n^2}$ for any $\boldsymbol{\alpha}$\cite[Lemma A.6]{SmithCaiTasissaRieEDMC2}. \cite{SmithRieOptEDMCSubGChaos} then introduce the idea of diagonal de-biasing into \eqref{eq_RomegaConjRomega}, and construct the following sensing operator
\begin{align*}
	\setlength\belowdisplayskip{2pt}
	\setlength\abovedisplayskip{2pt}
	\frac{1}{p^2}&\mathcal{M}_{\Omega}(\cdot) = \underbrace{\frac{1}{p^2}\tilde{\mathcal{Q}}_{\Omega}^*\mathrm{Od}(\mathbf{H}^{-1}_{\boldsymbol{\omega}})\tilde{\mathcal{Q}}_{\Omega}(\cdot)}_{\mathcal{M}^1_{\Omega}(\cdot)} + \underbrace{\frac{1}{p}\Vert\boldsymbol{\nu}_{\boldsymbol{\alpha}}\Vert_F^2\tilde{\mathcal{Q}}_{\Omega}^*\tilde{\mathcal{Q}}_{\Omega}(\cdot)}_{M_2}.
\end{align*}
Two-sided concentration property of $M_2$ was established in\cite[Lemma B.1]{YCLiSunSNLRR} and further refined by\cite[Lemma B.6]{SmithRieOptEDMCSubGChaos}. While deriving analogies to Lemma \ref{lema_Romega_RIPL} and \ref{eq_NromalSpace_tight_bound} for $\mathcal{M}^1_{\Omega}(\cdot)$, either in terms of tangent space Restricted Isometry Property (RIP) or non-tangent space uniform bounds, is challenging and unsatisfactory due to at least two reasons. 

First, the desired estimates should hold uniformly over the set of low-rank matrices with nice incoherence\cite[Thm. 22]{JunPKrahmerBDB_OptimalRate}, in other words
\begin{align*}
	\setlength\belowdisplayskip{2pt}
	\setlength\abovedisplayskip{2pt}
	Z_{\mathcal{H}}(\boldsymbol{\xi}) :&=\sup_{\mathbf{X}\in\mathbb{T},\,\Vert\mathbf{X}\Vert_F=1}\bigg|\langle\mathbf{X},(\frac{1}{p^2}\mathcal{M}^1_{\Omega}-\mathbb{E}\frac{1}{p^2}\mathcal{M}^1_{\Omega})(\mathbf{X})\rangle\bigg| \\
	&=  \sup_{\mathbf{H}\in\mathcal{H}} |\boldsymbol{\delta}^T\mathbf{H}\boldsymbol{\delta}^T-\mathbb{E}(\boldsymbol{\delta}^T\mathbf{H}\boldsymbol{\delta}^T)|,
\end{align*} 
will be considered, where 
\begin{gather*}
	\setlength\belowdisplayskip{3pt}
	\setlength\abovedisplayskip{3pt}
	\boldsymbol{\delta} := [\delta_{(1,2)}/p,\dots,\delta_{(n-1,n)}/p]\in\mathbb{R}^{L},\\
	\mathbf{H}:= \mathbf{W}_\mathbf{X}\circ\mathrm{Od}(\mathbf{H}^{-1}_{\boldsymbol{\omega}}),\,[\mathbf{W}_\mathbf{X}]_{\boldsymbol{\alpha}\boldsymbol{\beta}} :=\langle\mathbf{X},\boldsymbol{\omega}_{\boldsymbol{\alpha}}\rangle\langle\mathbf{X},\boldsymbol{\omega}_{\boldsymbol{\beta}}\rangle, \\	
	\mathcal{H}:=\{\mathbf{W}_\mathbf{X}\circ\mathrm{Od}(\mathbf{H}^{-1}_{\boldsymbol{\omega}}):\mathbf{X}\in\mathbb{T},\Vert\mathbf{X}\Vert_F\leq 1\}.
\end{gather*}
Naively combining the $\epsilon$-net covering argument on set $\mathcal{H}$ with Hanson-Wright inequality\cite[Ch. 6]{VershyninHighDimPob} causes the tail to blow up. In literature, similar structures are often resolved by the ``suprema of chaos" inequality\cite{KrahmerMendSupremaChaos} for sub-Gaussian chaos generated by certain set of PSD matrices, which requires bounding Talagrand’s $\gamma_2$-functional\cite[Ch. 8.5]{VershyninHighDimPob}, e.g., \cite[Sec. VI-B]{JunPKrahmerBDB_OptimalRate}. While we conjecture that an analogous tangent space eigenvalue lower bound as in\cite[Lemma 10]{TasissaEDMCProof} for $\frac{1}{p^2}\mathcal{R}_{\Omega}^*\mathcal{R}_{\Omega}$ can be obtained using this technique, the de-biasing step causes $\mathbf{H}$ matrix no longer PSD, precluding a direct application of Krahmer's result\cite[Thm. 1.4]{KrahmerMendSupremaChaos}. Second, it is known that ``suprema of chaos" based non-tangent space upper bound over residual with bounded $l_{2,\infty}$ or $l_{\infty}$ norm\cite[Lemma 7, 8]{AhmedBDModInput} does not guarantee finite-sample exact recovery for reasons other than sample splitting. Unlike Lemma \ref{thm_main_local_contraction}, both the perturbation error and sample complexity are related to the Frobenius norm of the total residual during each step of contraction induction argument, see \cite[Sec. I-E]{AhmedBDModInput} for further discussion.

Finding alternatives to bypass the aforementioned roadblocks is beyond the scope of this work. But it is interesting to point out that $\frac{1}{p^2}\mathcal{M}_{\Omega}$ appears to exhibit better numerical stability than $\frac{1}{p}\mathcal{R}_{\Omega}$, as empirically reported in \cite[Sec. 8]{SmithRieOptEDMCSubGChaos}. We frame this as a choice between \textit{analytical tractability and algorithmic fidelity}, since our choice provides only marginal practical significance, whereas explaining the algorithm proposed in \cite{SmithRieOptEDMCSubGChaos} requires substantially more effort.
\section{Numerical Experiments}
\label{sec_numericalRes_Disscuss}
The algorithm tested here is slightly different from the theoretical routine listed in Theorem \ref{thm_main_global_contraction} (cf. Algorithm \ref{alg_APGD}), in which BB stepsize without line search is employed, as inherited from\cite[Ch. 2]{sun2015matrix}. The performance of incoherent projection-free OS-MDS, OS-MDS initialized GD refinement on s-stress (OS-MDS-GD)\cite{YCLiSunEDMC_Spec_init}, and OS-MDS initialized IFHT-EDMC (OS-MDS-IFHT)\cite[Alg. 2]{SmithCaiTasissaRieEDMC2} are also reported in Fig. \ref{fig_phase_transition_OS_MDS_SVD_MDS} and Fig. \ref{fig_OS_MDS_APGD_Phase_Trans}(e), respectively. 
\subsection{Implementation Details}
The parameters in performing $\mathcal{P}_I$ can be estimated from the observations
\begin{equation*}
	\setlength\belowdisplayskip{3pt}
	\setlength\abovedisplayskip{3pt}
	\hat{\sigma}_1^{\star}=c_1\sigma_1(\hat{\mathbf{P}}\hat{\mathbf{P}}^T),\hat{\mu}=c_2 \frac{n}{r\hat{\sigma}_1^{\star}}\max_{(i,j)\in\Omega}\mathbf{D}_{ij}^{\star},
\end{equation*}
where Lemma \ref{lema_OSMDS_Spec_init} guarantees the approximation behavior of $\hat{\sigma}_1^{\star}$, and $c_1$, $c_2$ are the parameters to tune. Since the APGD is more analogous to an analytical process rather than an algorithm used for benchmarking, we assume throughout all experiments that $\sigma_1^{\star}$ and $\mu$ are perfectly known. Unless otherwise specified, the ground truth point set is generated by a standard Gaussian distribution with embedding dimension $r=2$, and results presented here are obtained under $\eta_{\max}=10$\footnote{The lower safeguard on BB stepsize is removed in most experiments, i.e., $\eta_k$ is allowed to be negative. Compared with pure GD, we found such sporadically occurring gradient ascent can numerically improve overall global convergence in all tested scenarios (especially the protein tests in Section \ref{subsec_protein_tests}). The upper safeguard is retained to ensure algorithmic stability when $c_{IP}$ is large, c.f., \eqref{eq_incohrent_projection_C_I}.}. Algorithm \ref{alg_APGD} is terminated once $\Vert\mathbf{p}g^k\Vert_F\leq 1\times 10^{-6}$ or the iteration number reaches $N=1000$. In contrast, both the upper and lower safeguards, $\eta_{\max} = \frac{2}{p\sigma_1^{\star}\mu r\kappa}$ and $\eta_{\min} = \frac{1}{2p\sigma_1^{\star}\mu r\kappa}$, are adopted in Section \ref{subsec_IterationTrack} for better illustration of linear convergence. Please note that the calculation of $\mathbf{JAJ}$ for $\mathbf{A}\in\mathbb{R}^{n\times n}$ can be conducted in $\mathcal{O}(n^2)$ flops, as it amounts to sequentially removing the mean value of $\mathbf{A}$'s columns and rows.
\begin{algorithm}[!t]
	\caption{APGD With BB Stepsize}
	\label{alg_APGD}
	\begin{algorithmic}[1]
		\REQUIRE Pseudo-gradient $\mathbf{p}_g^k=p\hat{\nabla}f(\mathbf{P}_k)$, sampled distances $\mathcal{P}_{\Omega}\mathbf{D}^{\star}$. 
		\STATE $\hat{\mathbf{P}}\hat{\mathbf{P}}^T=\mathcal{T}_r\left[ -\frac{1}{2p} \mathbf{J} (\mathcal{P}_{\Omega}\mathbf{D}^{\star}) \mathbf{J}\right],\,\mathbf{P}^0=\mathcal{P}_I(\hat{\mathbf{P}})$
		\FOR{$k=0,1,\dots,N$}
		\STATE $\mathbf{P}^{k+1} = \mathbf{P}^k-\eta_k \mathbf{p}_g^k,\,\mathbf{P}^{k+1}=\mathcal{P}_I(\mathbf{P}^{k+1})$.
		\STATE $\mathbf{S}_{k+1} = \mathbf{P}^{k+1}-\mathbf{P}^k$, $\mathbf{D}_{k+1}= \mathbf{p}_g^{k+1}-\mathbf{p}_g^k$.
		\IF{$\text{mod}(k,2)==0$}
		\STATE $\eta_k^{BB}=\frac{\Vert\mathbf{S}_{k+1}\Vert_F^2}{\langle \mathbf{S}_{k+1},\mathbf{D}_{k+1} \rangle}$
		\ELSE 
		\STATE $\eta_k^{BB}=\frac{\langle \mathbf{S}_{k+1},\mathbf{D}_{k+1} \rangle}{\Vert\mathbf{D}_{k+1}\Vert_F^2}$
		\ENDIF
		\STATE $\eta_k = \min(\eta_k^{BB},\eta_{\max})$
		\ENDFOR
		\RETURN $\mathbf{P}^k$
	\end{algorithmic}
\end{algorithm}

The spectral error (SE) and EDM recover rate (RE) defined below are used to evaluate the performance of OS-MDS, OS-MDS-GD, and the APGD.
\begin{equation*}
	\setlength\belowdisplayskip{3pt}
	\setlength\abovedisplayskip{3pt}
	\mathrm{SE}:=\frac{\Vert\hat{\mathbf{G}}-\mathbf{G}^{\star}\Vert}{\Vert\mathbf{G}^{\star}\Vert},\,\mathrm{RE}:=\frac{\Vert\bar{\mathbf{D}}-\mathbf{D}^{\star}\Vert_F}{\Vert\mathbf{D}^{\star}\Vert_F},
\end{equation*}
where $\hat{\mathbf{G}}$ is the surrogate Gram matrix obtained from the spectral initialization without trimming, and $\bar{\mathbf{D}}$ is the EDM returned by either OS-MDS-GD or Algorithm \ref{alg_APGD}. We record the trajectories of the following quantities during the execution of APGD.
\begin{subequations}
	\setlength\belowdisplayskip{3pt}
	\setlength\abovedisplayskip{3pt}
	\label{eq_record_track_ingrd}
	\begin{gather}
		\mathrm{g}_1^k = 2\Vert\mathcal{R}_{\Omega}(\mathbf{P}^k\mathbf{P}^{k T}-\mathbf{G}^{\star})\mathbf{P}^k\Vert_F,\\
		\mathrm{g}_2^k = 2\Vert\mathcal{R}_{\Omega}^*(\mathbf{P}^k\mathbf{P}^{k T}-\mathbf{G}^{\star})\mathbf{P}^k\Vert_F,\\
		\mathrm{r}^k = \frac{\Vert\mathbf{P}^k\mathbf{P}^{k T}-\mathbf{G}^{\star}\Vert_F}{\Vert\mathbf{G}^{\star}\Vert_F},
	\end{gather}
\end{subequations}
where $\mathrm{g}_2^k$ can be viewed as the norm of the second term in the gradient of \eqref{eq_fracted_pre_cond_cost_func}, it cannot be computed unless direct access to $\mathbf{G}^{\star}$ is available. We conjecture that the limited numerical performance of APGD in the sample-limited regime can be attributed to at least three factors: (i) the weakening of implicit regularization; (ii) the large residual between the pseudo-gradient and its population-level counterpart; (iii) the inconsistency between pseudo-gradient and the true gradient of \eqref{eq_fracted_pre_cond_cost_func}. These factors will be illustrated sequentially in the following sections.
\begin{figure}[t]
	\centering
	\setlength{\abovecaptionskip}{0cm}
	\setlength{\belowcaptionskip}{-3cm}
	\includegraphics[width=9cm]{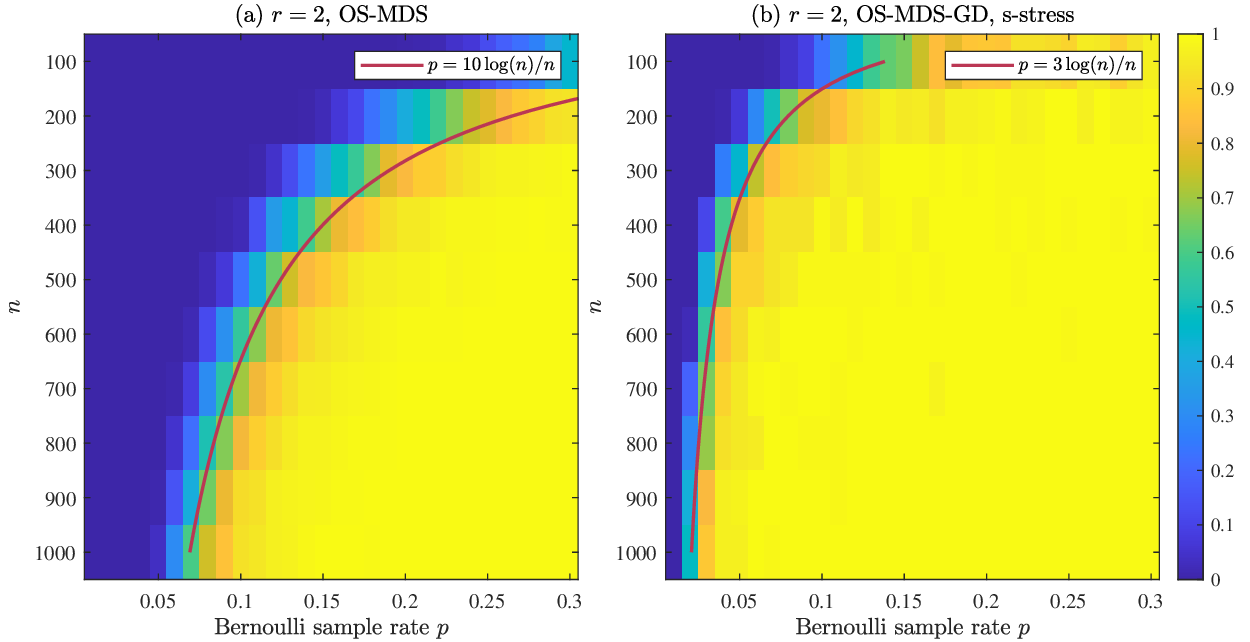}
	\caption{The phase transition of regularization free OS-MDS and OS-MDS-GD when varying $p$ and $n$\cite{YCLiSunEDMC_Spec_init}. We claim a success if the spectral error falls below $1$ in (a), or the EDM recover rate is smaller than $10^{-3}$ in (b). For each $(n,p)$, the result is obtained by 100 independent trials.}
	\label{fig_phase_transition_OS_MDS_SVD_MDS}
\end{figure}
\begin{figure*}[t]
	\centering
	\setlength{\abovecaptionskip}{0cm}
	\setlength{\belowcaptionskip}{-3cm}
	\includegraphics[width=18.5cm]{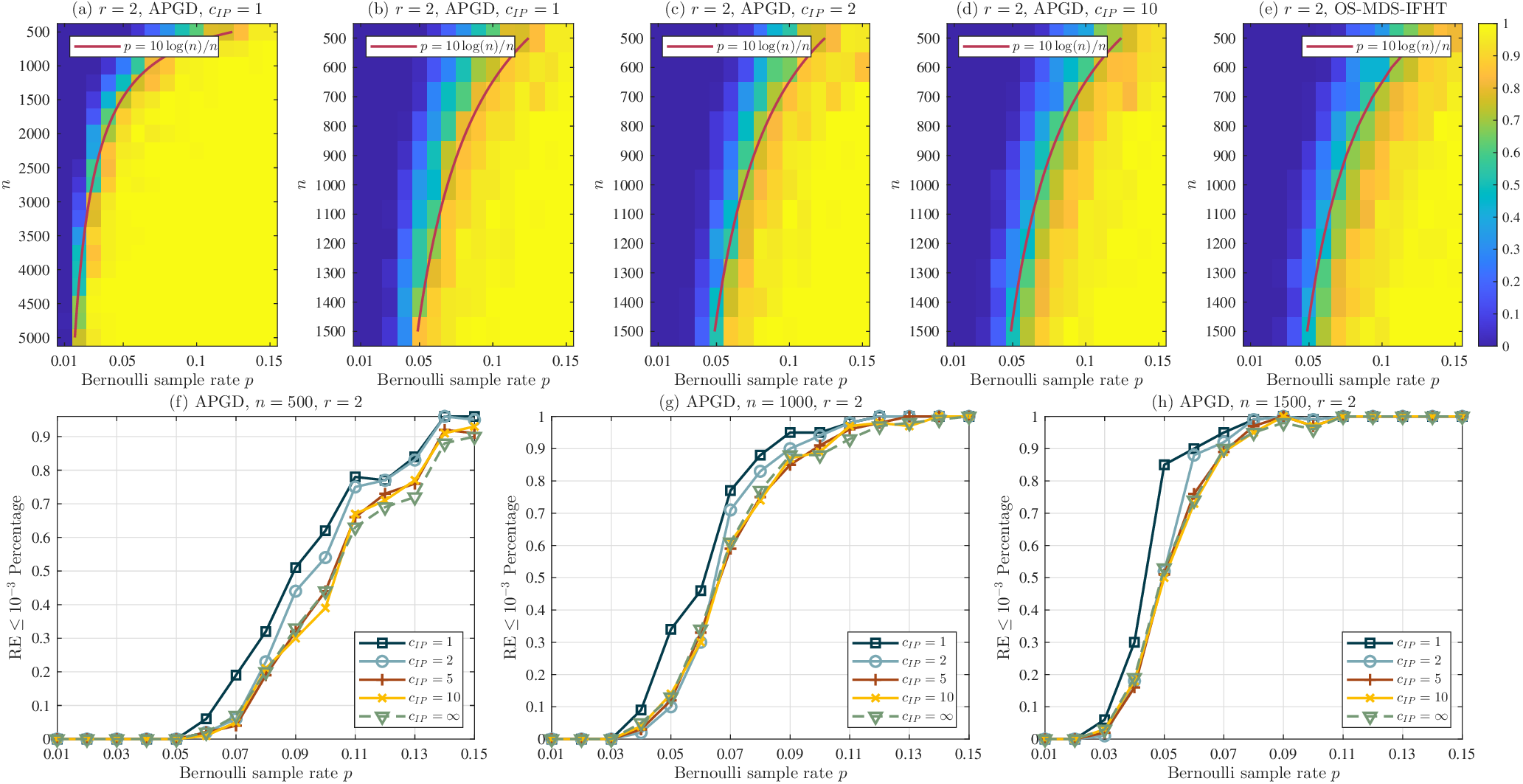}
	\caption{(a) The phase transition of APGD when varying Bernoulli sample rate $p$ and the number of points $n$. The result is obtained by 100 independent trials. To verify Theorem \ref{thm_main_global_contraction}, we claim a success if the EDM recover rate falls below $10^{-3}$. In (b), (c), and (d), we vary the constant $c_{IP}$ in \eqref{eq_incohrent_projection_C_I} when performing incoherent projection, while all other parameters remain unchanged. (e) depicts the phase transition of IFHT-EDMC algorithm \cite[Alg. 2]{SmithCaiTasissaRieEDMC2} under the same problem setup as in (b). Subplots (f), (g), and (h) show selected phase transition curves when $n=500, 1000, 1500$ with $c_{IP}$ varies.}
	\label{fig_OS_MDS_APGD_Phase_Trans}\vspace{-10pt}
\end{figure*}
\begin{figure*}[t]
	\raggedright
	\setlength{\abovecaptionskip}{0cm}
	\setlength{\belowcaptionskip}{-3cm}
	\includegraphics[width=18.5cm]{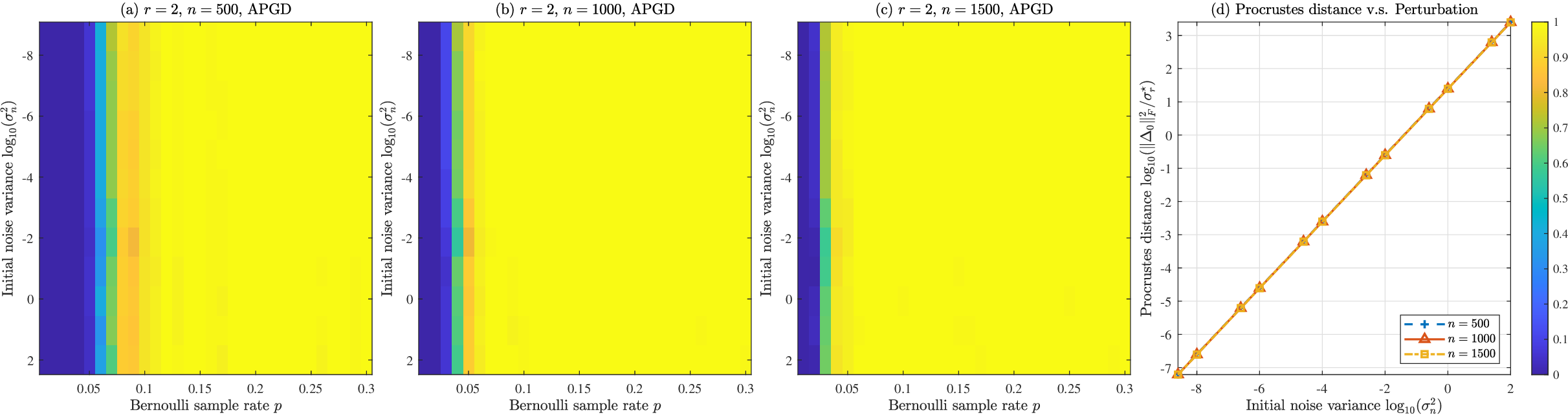}
	\caption{The phase transition of APGD when varying Bernoulli sample rate $p$ and the standard variance of perturbation white noise $\sigma_n$. A success is recorded if the EDM recover rate falls below $10^{-5}$. In (a), (b), and (c), we change the number of points from $n=500$ to $n=1500$, with all other parameters fixed. (d) plots the relationship between normalized quotient distance $\Vert\Delta\Vert_F^2/\sigma_r^{\star}$ and the intensity of noise $\sigma_n^2$.}
	\label{fig_OS_MDS_APGD_Random_Perturbate}\vspace{-10pt}
\end{figure*}
\begin{figure*}[t]
	\raggedright
	\setlength{\abovecaptionskip}{0cm}
	\setlength{\belowcaptionskip}{-3cm}
	\includegraphics[width=18.5cm]{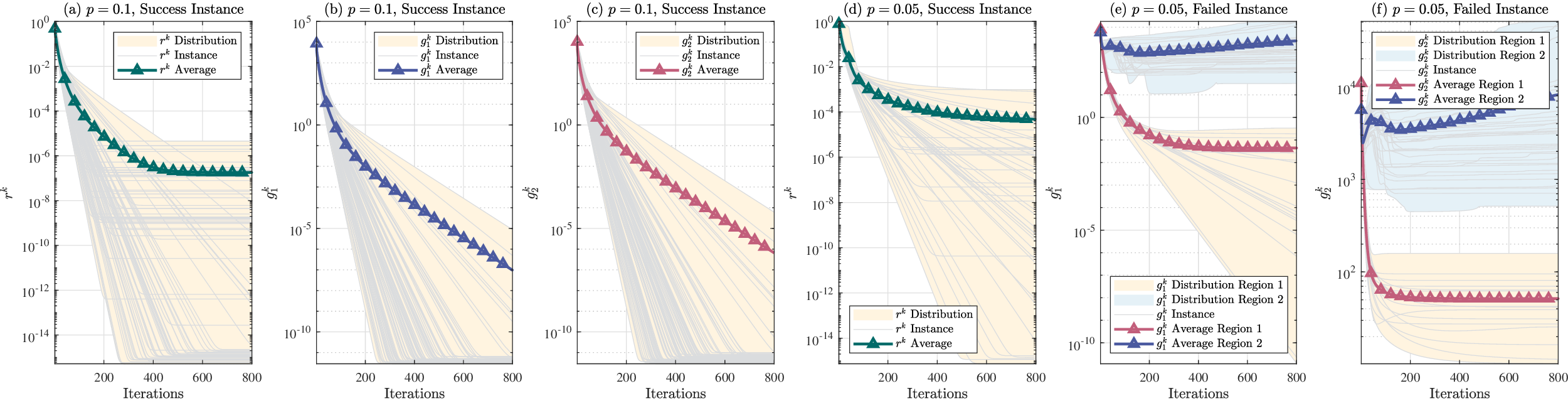}
	\caption{Trajectories of all three quantities in \eqref{eq_record_track_ingrd} while fixing $n=1500$. The result is obtained from 100 independent trials. (a), (d) shows the tendency of $r^k$ in selected succeed instances (96\% when $p=0.1$, 47\% when $p=0.05$). (b), (c) plots the corresponding $g_1^k$, $g_2^k$ when $p=0.1$. (e), (f) contains the records of $g_1^k$ and $g_2^k$ of selected failed instances (53\% when $p=0.05$). These trajectories can be divided into two groups (17\% in region 1, 36\% in region 2).}
	\label{fig_OS_MDS_APGD_iteration_Track}\vspace{-10pt}
\end{figure*}
\subsection{Phase Transition Under Gaussian Point Set}
\label{subsecPT_under_Gaussian_point}
The phase transition of APGD is plotted in Fig. \ref{fig_OS_MDS_APGD_Phase_Trans}, and we fit its edge using $p=10\log(n)/n$ curve. Interestingly, the incoherence projection becomes essential for achieving relatively robust performance, which bears a resemblance to the case in standard LRMC problems \cite{SL15NonCVXFR}. If one sets
\begin{equation}
	\setlength\belowdisplayskip{3pt}
	\setlength\abovedisplayskip{3pt}
	\label{eq_incohrent_projection_C_I}
	\Vert\mathbf{P}_{k}(i,:)\Vert_2\leq c_{IP}\sqrt{\frac{\mu r\sigma_1^{\star}}{n}},
\end{equation}
in \eqref{eq_the_trimming_step} with some $c_{IP}>1$, the performance of APGD on Gaussian random point sets will be degenerated, as suggested by Fig. \ref{fig_OS_MDS_APGD_Phase_Trans}(b), (c), and (d). This observation indicates that the power of implicit regularization to maintain incoherence is weakened in the APGD case. More delicate evaluations are carried out in Fig. \ref{fig_OS_MDS_APGD_Phase_Trans}(e), (f), and (g). This reveals that APGD can still converge when the trimming step is removed, yet larger $c_{IP}$ gives rise to more oscillations and non-monotonicity in the transition curve, ultimately leading to broader transition interval measured by $p$. Such conclusion is also supported by Fig. \ref{fig_OS_MDS_APGD_Phase_Trans}(e), where the OS-MDS-IFHT\cite{SmithCaiTasissaRieEDMC2}, i.e., regularization-free embedding geometry counterpart of APGD, exhibits a resemblance in its transition edge to that in Fig. \ref{fig_OS_MDS_APGD_Phase_Trans}(d) \footnote{Since IFHT is a tricky variant of Singular Value Projection algorithm\cite{JainSVP}, which eliminates the necessity of proving the iteration can automatically stay incoherent\cite[Sec. III-B]{DingChenLOOPrimalDual}.}.

By comparing Fig. \ref{fig_phase_transition_OS_MDS_SVD_MDS}(a) and Fig. \ref{fig_OS_MDS_APGD_Phase_Trans}(a), we find that the phase transition edge of APGD matches that of OS-MDS. This phenomenon confirms the prediction in Lemma \ref{thm_main_local_contraction}, and yet is quite unusual, as both non-convex Gaussian ensemble phase retrieval and structure-less LRMC do not necessarily rely on the spectral initialization to succeed\cite{ChenYXPRRandominit}\cite{pmlr-v247-ma24a}. Since the proof roadmap in\cite{pmlr-v247-ma24a} indicates that the success of rescaled eigenspace alignment initialization relies on the implicit incoherent regularization ability of the original fourth-order polynomial-type cost function for LRMC, we infer that when this property is compromised, the algorithm may start to behave selectively on the quality of starting point. Alternatively, more dedicated initialization for APGD can somewhat improve the transition edge. However, the OS-MDS-GD strategy exhibits empirical convergence when started inside a slack region around the ground truth as depicted in Fig. \ref{fig_phase_transition_OS_MDS_SVD_MDS}(b), even without any incoherence projection.
\begin{table*}[t]
	\centering
	\renewcommand\arraystretch{1.2}
	\caption{Parameters of Test Proteins}
	\label{table_para_of_protein}
	\scalebox{0.95}{\begin{tabular}{ccccc|ccccc|ccccc}
			\hline
			\textbf{Protein} & $n$    & $\mu$    & $\kappa$ & $\mu\kappa$ & \textbf{Protein} & $n$    & $\mu$    & $\kappa$ & $\mu\kappa$ & \textbf{Protein} & $n$     & $\mu$    & $\kappa$  & $\mu\kappa$ \\ \hline
			\textbf{1AX8}         & $1003$ & $1.8653$ & $2.9567$ & $5.5154$    & \textbf{1KDH}         & $2846$ & $2.2583$ & $2.7716$ & $6.2592$    & \textbf{1BPM}         & $3671$  & $1.9289$ & $4.1451$  & $7.9956$    \\
			\textbf{1RGS}         & $2015$ & $2.0494$ & $4.3926$ & $8.8733$    & \textbf{1MQQ}         & $5681$ & $2.1796$ & $3.3211$ & $7.2389$    & \textbf{1YGP}         & $13488$ & $1.6348$ & $3.8056$  & $6.2218$    \\
			\textbf{1TIM}         & $3740$ & $1.3297$ & $4.9413$ & $6.5706$    & \textbf{1TOA}         & $4292$ & $1.6424$ & $5.3534$ & $8.7929$    & \textbf{1I7W}         & $8629$  & $1.8189$ & $20.5725$ & $37.4202$   \\ \hline
	\end{tabular}}
\vspace{-12pt}
\end{table*}
\subsection{Phase Transition Under Random Perturbation}
\label{subsec_PT_Reg_piont_Gau_Pertur}
A randomly perturbed initialization is deployed in this section to verify the existence of restricted strong convexity in the pseudo-gradient direction. It reads
\begin{equation*}
	\setlength\belowdisplayskip{3pt}
	\setlength\abovedisplayskip{3pt}
	\mathbf{P}_0 = \mathbf{J}(\mathbf{P}^{\star}+\sigma_n \mathbf{E}_n)\in\mathbb{R}^{n\times r},
\end{equation*}
where $\mathbf{E}_n$ is a Gaussian random matrix with $[\mathbf{E}_n]_{ij}\sim\mathcal{N}(0,1)$ and $\mathbf{P}^{\star}$ is generated from a uniform distribution over the hypercube $[-0.5,0.5]^r$. The noise level $\sigma_n$ is varied from $5\times 10^{-5}$ to $10$. The iteration will be terminated if $\Vert\mathbf{p}_g^k\Vert_F\leq 10^{-8}$, while all other parameters follow the same setup as in Fig. \ref{fig_OS_MDS_APGD_Phase_Trans}(b). Since the perturbation can be relatively small, we claim a success if $\mathrm{RE}\leq 10^{-5}$, and the resulting phase transitions are plotted in Fig. \ref{fig_OS_MDS_APGD_Random_Perturbate} for $n=500, 1000, 1500$, $r=2$, where $\Vert\Delta_0\Vert_F^2$ refers to $\mathrm{dist}(\mathbf{P}_0,\mathbf{P}^{\star})^2$. Around $\sigma_n\leq 0.1$, i.e., inside the region where $\Vert\Delta_0\Vert_F^2\leq \sigma_r^{\star}$ according to Fig. \ref{fig_OS_MDS_APGD_Random_Perturbate}(d), these plots depict that the convergence of APGD becomes sensitive primarily to the sample complexity. As $p\to 1$, the $\hat{\nabla} f(\mathbf{P})$ reduces to the gradient direction of the vanilla matrix factorization problem, which is known to exhibit local restricted strong convexity\cite[Thm. 5]{ZhuGlobalGeo}. Therefore, Fig. \ref{fig_OS_MDS_APGD_Random_Perturbate} also demonstrates the lower bound on $p$ necessary for this approximation to hold. 

However, this dependence on $p$ is sub-optimal compared to the performance achieved by either OS-MDS-GD or trace minimization methods\cite[Sec. IV]{TasissaEDMCProof}\cite[Sec. 5]{GhoshTasissaIRLSEDMC}, indicating that the stabilization of pseudo gradient, e.g., ensuring the upper bound on Eq. \eqref{eq_Sample_to_Exp_Distor_4term} in Appendix \ref{Appdi_B1_Bound}, needs more random samples to guarantee, even though Lemma \ref{thm_main_local_contraction} predicts that the establishment of regularity condition should require the same order of sample complexity as in classical LRMC problems. Moreover, it is noteworthy that OS-MDS cannot achieve $\Vert\Delta_0\Vert_F^2\leq \sigma_r^{\star}$ with high probability at the sample configurations $\{(n,p):(500,0.08),(1000,0.05),(1500,0.04)\}$, which is why we allow Algorithm \ref{alg_APGD} to take negative stepsize, thereby facilitating global optimization. 
\subsection{Iteration Trajectory}
\label{subsec_IterationTrack}
A special instance of experiments in Section \ref{subsecPT_under_Gaussian_point} is investigated in greater detail. We consider $(n,p)=(1500,0.1)$, $(n,p)=(1500,0.05)$, and plot the convergence trajectories measured by \eqref{eq_record_track_ingrd} in Fig. \ref{fig_OS_MDS_APGD_iteration_Track}. When $p=0.1$, 96\% of the trials successfully recover the EDM. The corresponding trajectories of $r^k$, $g^k_1$, and $g^k_2$ are presented in Fig. \ref{fig_OS_MDS_APGD_iteration_Track}(a), (b), and (c) respectively, where exact linear convergence is observed after a few iterations, as indicated by the gray lines. Moreover, the pseudo gradient and $g^k_2$ demonstrated nearly identical behavior. When $p=0.05$, 47\% of the recordings are categorized to be successful, with their $r^k$ shown in Fig. \ref{fig_OS_MDS_APGD_iteration_Track}(d). Compared with Fig. \ref{fig_OS_MDS_APGD_Phase_Trans}(h), this clearly highlights the performance degradation caused by imposing the lower safeguard $\eta_{\min}$. Also, the succeeded instances of $g^k_1$ and $g^k_2$ when $p=0.05$ resemble those in Fig. \ref{fig_OS_MDS_APGD_iteration_Track}(b), (c), and thus omitted here. Intriguing phenomena emerge when analyzing the other failed $53\%$, as illustrated in Fig. \ref{fig_OS_MDS_APGD_iteration_Track}(e), (f), where $g^k_1$ and $g^k_2$ no longer coincide. In about $17\%$ of all trials, the pseudo gradient tends to converge, whereas the corresponding $g^k_2$ does not. This observation might uncover a potential factor underlying the degraded numerical performance of APGD, i.e., the consistency between pseudo gradient and the real gradient of \eqref{eq_fracted_pre_cond_cost_func} (up to trivial rescale) demands an unreasonably large number of samples to be reliably ensured.
\subsection{Protein Test}
\label{subsec_protein_tests}
Phase transition tests on nine proteins downloaded from the Protein Data Bank\cite{BermanProteinDataBank} are presented in Fig. \ref{fig_APGD_Protein_test_9All_no_noise} to evaluate the real-world performance of APGD, as well as the impact of condition number $\kappa$ and coherence parameter $\mu$. Similar experiments but under the unit ball sample model and challenging noisy observations settings can be found in\cite[Sec. 5.3]{LeungTohSDPDCMP}. Prior to processing, solvent molecules and chelated metal ions were removed from the datasets. The resulting parameters $n$, $\mu$, and $\kappa$ are reported in Table \ref{table_para_of_protein}. 

Non-monotonic trends in sample complexity $p$ w.r.t. $n$ are observed, i.e., the phase transition for a protein with larger $n$ occurs later than that for a protein with smaller $n$. For example, although the atom number of 1I7W is approximately three times larger than that of 1KDH, the phase transition for 1KDH occurs at a significantly lower sampling ratio. The large condition number associated with the 1I7W configuration suppresses the BB stepsize, and increases the sample complexity required for global recovery, as predicted by Theorem \ref{thm_main_global_contraction}. Furthermore, the performance of Algorithm \ref{alg_APGD} appears sensitive to the product $\mu\kappa$, since the curves of 1AX8 versus 1RGS, and 1TIM versus 1TOA exhibit similar ``reversal", due to small changes in their corresponding $\mu\kappa$ values.
\section{Conclusion and Discussion}
\label{sec_conclude_Diss}
By paralleling matrix factorization type incoherent LRMC technique, this manuscript analyzes the quotient variant of a recent proposed Iterative Fast Hard Thresholding procedure for Euclidean Distance Matrix Completion\cite{SmithCaiTasissaRieEDMC2}, namely, the Asymmetric Projected Gradient Descent. By leveraging the nuclear norm splitting trick in \cite{pmlr-v32-bhojanapalli14}\cite{pmlr-v48-lii16}\cite{ChenJiLiModelFreeMC}\cite{DingChenLOOPrimalDual}, and employing the sharp bound on the second largest eigenvalue of a Erd\"{o}s-R\'{e}nyi graph\cite{BandeiraHandelSharpRGL}, we provide refined analysis on different components of the residual, and further establish the regularity condition. This, in turn, enables the first characterization of near-linear convergence behavior for matrix factorization-based EDMC algorithms. Numerical experiments corroborate the predicted convergence behavior of the pseudo gradient descent method in rich-sample regimes, yet raise questions on general design principle and the adequacy of classic incoherence notion in the context of non-convex EDMC techniques, we list two of them below.
\begin{itemize}
	\item Since the EDM basis shares a similar support pattern with $\mathbf{e}_i\mathbf{e}_j^T$, \cite{TasissaEDMCProof} introduced the incoherence w.r.t. $\boldsymbol{\omega}_{\boldsymbol{\alpha}}$ and $\boldsymbol{\nu}_{\boldsymbol{\alpha}}$ to measure the concentration of information in $\mathbf{G}^{\star}$. While such definition, c.f.,\cite[Def. 2]{TasissaLRMC_genebasis}, can be deduced from classical incoherence assumption up to trivial rescale by resorting to\cite[Lemma A.4]{SmithCaiTasissaRieEDMC2}, this does not fully account for the observed performance gap between trace minimization and VGD applied to \eqref{eq_s_stress_forthorder_form}. In the context of Hankel Matrix Completion (HMC), incoherence is satisfied if the minimum wraparound distance between spikes remains non-vanishing\cite[Sec. III-A]{ChenChiSpectCompreHMC}, and both EMaC (convex approach to HMC) and PGD-like algorithms\cite{LiCuiPGD_SpectralCSHankel}\cite{MaoChenBlindSR_PGD} are observed to succeed even without strict separation. But for EDMC, which property of the point set govern the problem's tractability? 
	\item According to\cite{TasissaLRMC_genebasis}, the dual basis approach provides a way to construct a preconditioned sampling operator for non-orthonormal, discrete, and ill-conditioned sensing bases. Corresponding modifications of the Golfing Scheme, IFHT, and PDG frameworks achieve theoretical success for EDMC, yet the practical significance of the latter two severely suffers. While it is acknowledged that under the orthogonal setting, numerical performance of similar strategies relies on implicit regularization, which characteristic of the sample basis determines its strength?
\end{itemize}
\begin{figure}[t]
	\centering
	\setlength{\abovecaptionskip}{0cm}
	\setlength{\belowcaptionskip}{-3cm}
	\includegraphics[width=9cm]{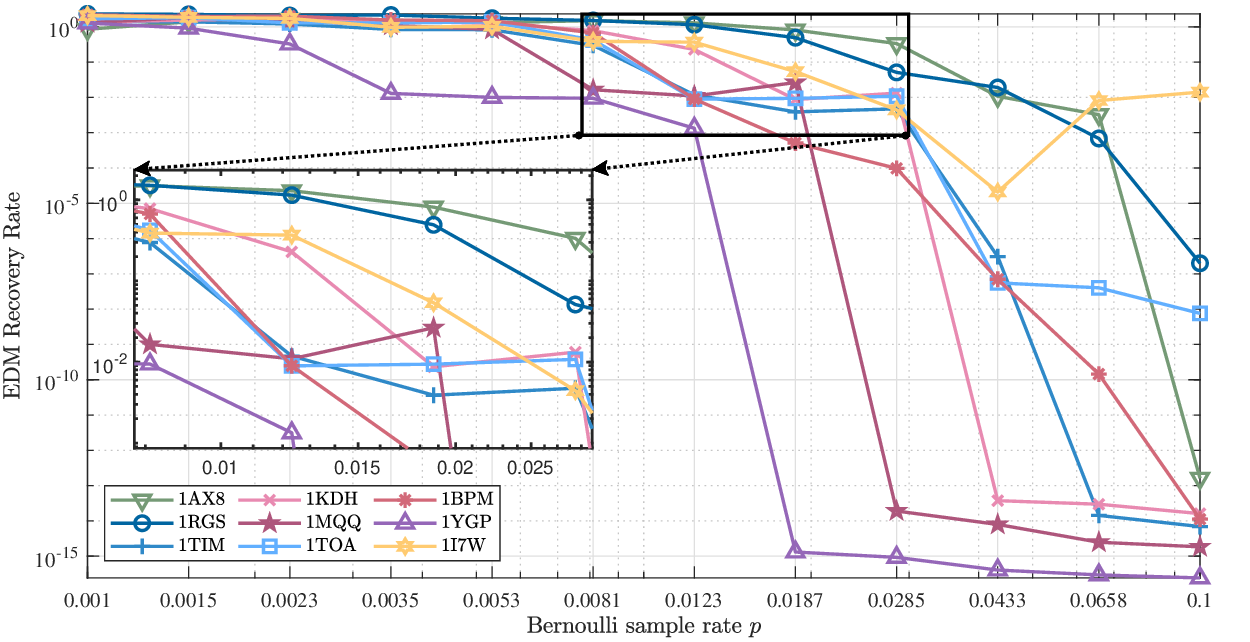}
	\caption{The phase transition measured by EDM recovery rate of APGD when varying protein type and Bernoulli sample rate $p$, while fixing $r=3$. Detailed parameters for each protein are listed in Table \ref{table_para_of_protein}. The results are averaged over $5$ independent trials.}
	\label{fig_APGD_Protein_test_9All_no_noise}
\end{figure}
\section*{ACKNOWLEDGMENT}
\addcontentsline{toc}{section}{ACKNOWLEDGMENT}
The authors would like to thank Abiy Tasissa for insightful discussion and help in proofreading the manuscript. They would also like to thank the anonymous reviewers and associate editor for their constructive comments. Y. L. would like to thank Jianguo Liu for proofreading during the revision.
{\appendices
	\section{Proof of Lemma \ref{eq_OS_MDS_the_basic_case}}
	\label{eq_proof_of_OS_mds}
	This is a direct use of Lemma \ref{lema_OSMDS_Spec_init}. Notice that
	\begin{align*}
		\setlength\belowdisplayskip{3pt}
		\setlength\abovedisplayskip{3pt}
		\Vert\Delta_1\Vert_F^2 :&= 	\Vert\mathcal{P}_{I}(\hat{\mathbf{P}}-\mathbf{P}^{\star}\boldmath{\psi}^{\star})\Vert_F^2\overset{(i)}{\leq} \Vert\hat{\mathbf{P}}-\mathbf{P}^{\star}\boldmath{\psi}^{\star}\Vert_F^2\\
		&\overset{(ii)}{\leq} 	\frac{1}{2(\sqrt{2}-1)\sigma_r^{\star}}\Vert\hat{\mathbf{P}}\hat{\mathbf{P}}^T-\mathbf{P}^{\star}\mathbf{P}^{\star T}\Vert_F^2\\
		&\overset{(iii)}{\leq}\frac{r}{2(\sqrt{2}-1)\sigma_r^{\star}}\Vert\hat{\mathbf{P}}\hat{\mathbf{P}}^T-\mathbf{P}^{\star}\mathbf{P}^{\star T}\Vert^2\\
		&\overset{(iv)}{\leq}\frac{r\varepsilon^2\sigma_1^{\star 	2}}{2(\sqrt{2}-1)\sigma_r^{\star}}\overset{(v)}{\lesssim}\frac{\sigma_r^{\star}}{c_c},
	\end{align*}
	where $(i)$ uses the convexity of the set $\{\mathbf{P}:\Vert\mathbf{P}\Vert_{2,\infty}\leq \sqrt{\frac{c_I\mu r\sigma_1^{\star}}{n}}\}$, see also \cite[Lemma 11]{ZhengLaffertyNonCVXFR}. $(ii)$ comes from \cite[Lemma 6]{RongGeSpuriousLocalMinima}, and $(iii)$ uses $\Vert\mathbf{A}\Vert_F^2\leq r\Vert\mathbf{A}\Vert^2$ if $\mathrm{rank}(\mathbf{A})=r$. $(iv)$ comes from Lemma \ref{lema_OSMDS_Spec_init}, and holds with probability at least $1-n^{1-\beta}$ given $p\gtrsim \frac{C_{\beta}\mu^2 r^2 \log n}{\varepsilon^2 n}$. Finally $(v)$ by setting $\varepsilon^2\leq \frac{1}{C\kappa^2 r}$, where $C\gtrsim c_c$, concluding the proof.\qed 
	\section{Proof of Lemma \ref{thm_main_local_contraction}}
	\label{Appdi_B_Bound}
	Recall the rank-$r$ SVD of $\mathbf{G}^{\star}=\mathbf{P}^{\star}\mathbf{P}^{\star T}$ is denoted as $\mathbf{U}^{\star}\boldsymbol{\Sigma}^{\star}\mathbf{U}^{\star T}$. The tangent space at $\mathbf{G}^{\star}$ are given by \eqref{eq_embedded_space}, and the projections onto it is denoted by $\mathcal{P}_{\mathbb{T}}$.
	\begin{subequations}
		\setlength\belowdisplayskip{4pt}
		\setlength\abovedisplayskip{4pt}
		\label{eq_embedded_space}
		\begin{gather}
			\label{eq_tangent_space}
			\mathbb{T}=T_{\mathbf{G}^{\star}}\mathcal{S}^{r,n}_+=\{\mathbf{U}^{\star}\mathbf{W}_1^T+\mathbf{W}_1\mathbf{U}^{\star T}\},\\
			\label{eq_norm_space}
			\mathcal{P}_{\mathbb{T}}(\mathbf{Y})=\mathcal{P}_{\mathbf{U}}\mathbf{Y}+\mathbf{Y}\mathcal{P}_{\mathbf{U}}-\mathcal{P}_{\mathbf{U}}\mathbf{Y}\mathcal{P}_{\mathbf{U}},
		\end{gather}
	\end{subequations}
	where $\mathcal{P}_{\mathbf{U}}=\mathbf{U}^{\star}\mathbf{U}^{\star T}$, and $\mathbf{W}_1\in\mathbb{R}^{n\times r}$ are arbitrary. Let $\hat{\nabla}f(\mathbf{P}):=\frac{2}{p}\mathcal{R}_{\Omega}(\mathbf{P}\mathbf{P}^T-\mathbf{G}^{\star})\mathbf{P}$ denote the pseudo gradient. We will also frequently use Lemma \ref{lema_AB_spec_F_norm_to_SingVal} without explicitly referring to it.
	\subsection{Local Contraction Without Re-sampling}
	\label{AppdiAA_local_contraction}
	It is meanwhile straightforward to prove the local contraction property by following the roadmap in, e.g., \cite[App. D]{ZhengLaffertyNonCVXFR}\cite[App. C]{LiCuiPGD_SpectralCSHankel}\cite{MaoChenBlindSR_PGD}, given the following claim holds for the moment. 
	\newtheorem{claim}{Claim}[section]
	\begin{claim}
		\label{claim_regularity_cond}
		For a fix ground truth point set $\mathbf{P}^{\star}$ independent with $\Omega$, uniformly for all $\mathbf{P}\in\mathbb{R}^{n\times r}$ inside the RIC $\mathcal{B}$, we have the following regularity condition holds
			\begin{subequations}
				\setlength\belowdisplayskip{4pt}
				\setlength\abovedisplayskip{4pt}
				\label{eq_regualirty_conds}
				\begin{gather}
					\label{eq_re_strongCVX}
					\langle \hat{\nabla}f (\mathbf{P}),\Delta \rangle\geq \alpha \Vert\Delta\Vert_F^2,\\
					\label{eq_re_RE_Smooth}
					\Vert\hat{\nabla}f (\mathbf{P})\Vert_F^2 \leq \rho\Vert\Delta\Vert_F^2,\\
					\label{eq_regularity_cond}
					\langle \hat{\nabla}f (\mathbf{P}),\Delta \rangle \geq \frac{\alpha}{2}\Vert\Delta\Vert_F^2+\frac{\alpha}{2\rho}\Vert\hat{\nabla}f (\mathbf{P})\Vert_F^2,
				\end{gather}
			\end{subequations}		
	with probability at least $1-cn^{1-\beta}$, provided with $p\gtrsim C_{\beta}\frac{\kappa^2 \mu^2 r^2\log n}{\varepsilon^2 n}$. 
	Where $\alpha=\frac{1}{2}\sigma_r^{\star}$, $\rho=84\sigma_1^{\star 2}c_I\mu r$, and $\varepsilon\leq\frac{1}{10}$. 
	\end{claim}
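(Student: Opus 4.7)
The regularity condition \eqref{eq_regularity_cond} follows from \eqref{eq_re_strongCVX} and \eqref{eq_re_RE_Smooth} by the elementary split $\alpha\|\Delta\|_F^2 = \tfrac{\alpha}{2}\|\Delta\|_F^2 + \tfrac{\alpha}{2\rho}\bigl(\rho\|\Delta\|_F^2\bigr) \geq \tfrac{\alpha}{2}\|\Delta\|_F^2 + \tfrac{\alpha}{2\rho}\|\hat{\nabla}f(\mathbf{P})\|_F^2$, so it suffices to establish the restricted strong convexity (with $\alpha=\sigma_r^\star/2$) and the restricted smoothness (with $\rho=84\sigma_1^{\star 2}c_I\mu r$), uniformly over $\mathbf{P}\in\mathcal{B}$ under the stated sample complexity. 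The backbone identity is obtained by setting $\mathbf{Q}^\star:=\mathbf{P}^\star\boldsymbol{\psi}^\star$, writing $\mathbf{H}:=\mathbf{P}\mathbf{P}^T-\mathbf{G}^\star=\mathbf{Q}^\star\Delta^T+\Delta\mathbf{Q}^{\star T}+\Delta\Delta^T$, and exploiting the symmetry of $\mathcal{R}_\Omega(\mathbf{H})$ together with $\Delta\mathbf{P}^T+\mathbf{P}\Delta^T=\mathbf{H}+\Delta\Delta^T$, yielding
\begin{equation*}
\langle \hat{\nabla}f(\mathbf{P}),\Delta\rangle=\tfrac{1}{p}\langle\mathcal{R}_\Omega(\mathbf{H}),\mathbf{H}+\Delta\Delta^T\rangle=\|\mathbf{H}\|_F^2+\langle\mathbf{H},\Delta\Delta^T\rangle+\langle\mathcal{E}(\mathbf{H}),\mathbf{H}+\Delta\Delta^T\rangle,
\end{equation*}
where $\mathcal{E}:=\tfrac{1}{p}\mathcal{R}_\Omega-\mathcal{I}$ is the centered sample operator. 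The deterministic leading term is bounded below via the inequality $\|\mathbf{H}\|_F^2\geq 2(\sqrt{2}-1)\sigma_r^\star\|\Delta\|_F^2$ already used at step $(ii)$ of Lemma \ref{eq_OS_MDS_the_basic_case}, while the cubic cross term $|\langle\mathbf{H},\Delta\Delta^T\rangle|$ is absorbed as a higher-order correction using $\|\mathbf{Q}^\star\|^2=\sigma_1^\star$ and the RIC bound $\|\Delta\|_F^2\leq\sigma_r^\star/c_c$.

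The main burden then falls on the stochastic residual $\langle\mathcal{E}(\mathbf{H}),\mathbf{H}+\Delta\Delta^T\rangle$. I split $\mathbf{H}=\mathcal{P}_\mathbb{T}\mathbf{H}+\mathcal{P}_{\mathbb{T}^\perp}(\Delta\Delta^T)$ and expand this inner product into a tangent-tangent piece, two tangent-normal cross pieces, and a normal-normal piece. The first is handled by the tangent-space RIP analogue of \cite[Lemma 10]{TasissaEDMCProof}, which under the sample complexity $p\gtrsim C_\beta\mu^2 r^2\log n/(\varepsilon^2 n)$ yields $\|\mathcal{P}_\mathbb{T}\mathcal{E}\mathcal{P}_\mathbb{T}\|\leq\varepsilon\leq 1/10$ with probability $1-n^{1-\beta}$. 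The tangent-normal cross pieces are exactly the objects that Lemma \ref{lema_sharp_bound_for_L21L22} is engineered to control, leveraging the nuclear-norm splitting trick of \cite{pmlr-v32-bhojanapalli14} to reduce $\mathcal{P}_{\mathbb{T}^\perp}(\Delta\Delta^T)$ to a sum of rank-one incoherent pieces and the sharp Erd\H{o}s-R\'enyi second-eigenvalue estimate of \cite{BandeiraHandelSharpRGL} to absorb the operator-norm factor. The normal-normal piece is dominated by Lemma \ref{lema_nonUni_bound_for_L3}, which again exploits $\|\Delta\|_{2,\infty}\leq\sqrt{c_I\mu r\sigma_1^\star/n}$ from the RIC. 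Summing the contributions and tracking constants yields a total stochastic perturbation bounded by $\tfrac{1}{2}\sigma_r^\star\|\Delta\|_F^2$, which combined with the deterministic lower bound gives \eqref{eq_re_strongCVX}.

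For \eqref{eq_re_RE_Smooth}, the dual representation $\tfrac{1}{2}\|\hat{\nabla}f(\mathbf{P})\|_F=\sup_{\|\mathbf{Z}\|_F\leq 1}\tfrac{1}{p}\langle\mathcal{R}_\Omega(\mathbf{H}),\mathbf{Z}\mathbf{P}^T\rangle$ reduces the smoothness bound to a collection of inner-product estimates in which $\mathbf{Z}\mathbf{P}^T$ is rank $r$ with row-wise $\ell_2$ norms controlled by $\|\mathbf{P}\|_{2,\infty}\leq 2\sqrt{\mu r\sigma_1^\star/n}$, guaranteed by $\mathcal{P}_I$ and the triangle inequality $\|\mathbf{P}\|_{2,\infty}\leq\|\Delta\|_{2,\infty}+\|\mathbf{P}^\star\|_{2,\infty}$. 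The same four-part tangent/normal decomposition and the same Lemmas \ref{lema_sharp_bound_for_L21L22} and \ref{lema_nonUni_bound_for_L3} then yield the scaling $\sqrt{c_I\mu r\sigma_1^\star}\cdot\sigma_1^\star$ which, after squaring, produces $\rho=84\sigma_1^{\star 2}c_I\mu r$. The principal obstacle throughout is preserving the sharp dependence $p\gtrsim C_\beta\kappa^2\mu^2 r^2\log n/(\varepsilon^2 n)$: a naive triangle inequality on $\|\mathcal{E}(\Delta\Delta^T)\|$ would inflate the sample complexity by at least one factor of $\kappa$, which is precisely why the separation trick of \cite{pmlr-v32-bhojanapalli14} combined with the Bandeira-van Handel sharp bound \cite{BandeiraHandelSharpRGL} is indispensable in place of the random graph lemma used in classical LRMC proofs.
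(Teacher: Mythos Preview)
Your overall plan is sound for the strong-convexity side, and your deduction of \eqref{eq_regularity_cond} from \eqref{eq_re_strongCVX}--\eqref{eq_re_RE_Smooth} is correct. However, there are two genuine gaps.

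\textbf{Asymmetry of $\mathcal{R}_\Omega$ in the strong-convexity bound.} Your lemma assignments are inverted. The ``normal--normal'' piece (both slots built from $\Delta$) is precisely what Lemma~\ref{lema_sharp_bound_for_L21L22} handles, while Lemma~\ref{lema_nonUni_bound_for_L3} is about $\mathcal{R}_\Omega^*$ and requires the \emph{fixed} factor $\mathbf{P}^\star$ to sit in the first slot. The cross piece $\langle\mathcal{E}(\Delta\Delta^T),\Delta\bar{\mathbf{P}}^{\star T}+\bar{\mathbf{P}}^\star\Delta^T\rangle$ is the one that forces you to pass to the adjoint, because applying Lemma~\ref{lema_sharp_bound_for_L21L22} directly yields $\sqrt{n/p}\,\|\Delta\|_{2,\infty}^2\cdot\sqrt{\sigma_1^\star}\|\Delta\|_F$, and trading one $\|\Delta\|_{2,\infty}$ for $\|\Delta\|_F$ leaves a factor $\sqrt{c_I\mu r/p}$ that does \emph{not} shrink under the stated sample complexity. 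The paper rewrites this term as $\langle\mathcal{E}^*(\Delta\boldsymbol{\psi}^{\star T}\mathbf{P}^{\star T}),\Delta\Delta^T\rangle$ and invokes Lemma~\ref{lema_nonUni_bound_for_L3}, which returns $\|\Delta\|_{2,\infty}\|\mathbf{P}^\star\|_{2,\infty}\|\Delta\|_F^2$ and closes at $p\gtrsim\kappa^2\mu^2r^2\log n/n$. The non-self-adjointness of $\mathcal{R}_\Omega$ is the whole technical point here, and your tangent/normal split does not expose it.

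\textbf{The smoothness term $R_4$ is not covered by either lemma.} In the dual formulation $\sup_{\|\mathbf{Z}\|_F=1}\langle\tfrac{1}{p}\mathcal{R}_\Omega(\mathbf{H}),\mathbf{Z}\mathbf{P}^T+\mathbf{P}\mathbf{Z}^T\rangle$, the piece $R_4=\langle\tfrac{1}{p}\mathcal{R}_\Omega(\Delta\Delta^T),\mathbf{Z}\bar{\mathbf{P}}^{\star T}+\bar{\mathbf{P}}^\star\mathbf{Z}^T\rangle$ cannot be reduced to $\text{const}\cdot\|\Delta\|_F$ by Lemmas~\ref{lema_sharp_bound_for_L21L22} or~\ref{lema_nonUni_bound_for_L3}: the second slot carries no $\|\Delta\|_F$ factor (only $\sqrt{\sigma_1^\star}$), and $\|\mathbf{Z}\|_{2,\infty}$ is uncontrolled for a generic unit-Frobenius $\mathbf{Z}$. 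The paper instead unpacks $\mathcal{R}_\Omega=g^+\mathcal{P}_\Omega g$, uses $\mathbf{J}\mathbf{Z}_{P^\star}\mathbf{J}=\mathbf{Z}_{P^\star}$, and applies Cauchy--Schwarz \emph{inside} $\mathcal{P}_\Omega$ to obtain
\[
|R_4|\leq\tfrac{1}{2}\sqrt{\tfrac{1}{p}\|\mathcal{Q}_\Omega(\Delta\Delta^T)\|_F^2}\cdot\sqrt{\tfrac{1}{p}\|\mathcal{P}_\Omega\mathcal{P}_{\mathbb{T}}(\mathbf{Z}_{P^\star})\|_F^2}.
\]
The first factor is controlled by Lemma~\ref{eq_NromalSpace_tight_bound} (yielding $\sqrt{c_I\mu r\sigma_1^\star}\,\|\Delta\|_F$), the second by the standard $\mathcal{P}_\Omega$ tangent-RIP (Lemma~\ref{lema_Pomega_RIPL_Cand}). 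This is exactly where the $c_I\mu r$ in $\rho=84\sigma_1^{\star2}c_I\mu r$ originates; it is not obtainable from your four-part decomposition with Lemmas~\ref{lema_sharp_bound_for_L21L22}--\ref{lema_nonUni_bound_for_L3} alone.
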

	\eqref{eq_re_strongCVX} and \eqref{eq_re_RE_Smooth} will be proved in App. \ref{Appdi_B1_Bound} and \ref{Appdi_B2_Bound}, respectively. \eqref{eq_regularity_cond} is a direct conclusion from combining \eqref{eq_re_strongCVX} and \eqref{eq_re_RE_Smooth}. Therefore we have
	\begin{align}
		\setlength\belowdisplayskip{3pt}
		\setlength\abovedisplayskip{3pt}
	\Vert\Delta_{k+1}\Vert_F^2&\leq\Vert\mathbf{P}_{k+1}-\mathbf{P}^{\star}\boldsymbol{\psi}^{\star}_k\Vert_F^2\nonumber\\
		&=\Vert\mathcal{P}_{I}(\mathbf{P}_k-\eta\hat{\nabla}f(\mathbf{P}_k)-\mathbf{P}^{\star}\boldsymbol{\psi}^{\star}_k)\Vert_F^2\nonumber\\
		&\leq\Vert\mathbf{P}_k-\eta\hat{\nabla}f(\mathbf{P}_k)-\mathbf{P}^{\star}\boldsymbol{\psi}^{\star}_k\Vert_F^2\nonumber\\
		&=\Vert\Delta_k\Vert_F^2+\eta^2\Vert\hat{\nabla}f(\mathbf{P}_k)\Vert_F^2-2\eta\langle\hat{\nabla}f(\mathbf{P}_k),\Delta_k\rangle\nonumber\\
		&\overset{(i)}{\leq} (1-\eta\alpha)\Vert\Delta_k\Vert_F^2\nonumber+\eta(\eta-\frac{\alpha}{\rho})\Vert\hat{\nabla}f(\mathbf{P}_k)\Vert_F^2\nonumber\\
		\label{eq_local_contraction}
		&\overset{(ii)}{\leq}(1-\eta\alpha)\Vert\Delta_k\Vert_F^2,
	\end{align}
	$(i)$ by using Claim \ref{claim_regularity_cond}, and $(ii)$ by setting $0<\eta\leq\min\{\frac{1}{\alpha},\frac{\alpha}{\rho}\}=\frac{1}{168\sigma_1^{\star}c_I\mu r\kappa}$, where we conclude the proof.
	\qed
	\subsection{Restricted Strong Convexity: \eqref{eq_re_strongCVX}}
	\label{Appdi_B1_Bound}
	The proof follows similar idea as in \cite[App. B]{YCLiSunSNLRR}, but the asymmetry structure of the pseudo gradient requires very careful control of the residuals. We start from lower bounding the population level gradient. Let us denote $\mathbb{E}\hat{\nabla}f(\mathbf{P})=2(\mathbf{PP}^T-\mathbf{G}^{\star})\mathbf{P}$, and $\bar{\mathbf{P}}^{\star}=\mathbf{P}^{\star}\boldsymbol{\psi}^{\star}$, thus
	\begin{align}
		\setlength\belowdisplayskip{3pt}
		\setlength\abovedisplayskip{3pt}
		\label{eq_pop_gradient_split}
		A_1&:=\langle\mathbb{E}\hat{\nabla}f(\mathbf{P}),\Delta\rangle=\langle\mathbf{PP}^T-\mathbf{G}^{\star},\Delta\mathbf{P}^T+\mathbf{P}\Delta^T\rangle\\
		&=\langle\Delta\bar{\mathbf{P}}^{\star T}+\bar{\mathbf{P}}^{\star}\Delta^T+\Delta\Delta^T,\Delta\bar{\mathbf{P}}^{\star T}+\bar{\mathbf{P}}^{\star}\Delta^T+2\Delta\Delta^T\rangle\nonumber\\
		&\overset{(i)}{\geq} \frac{5}{8}\overbrace{\Vert\Delta\bar{\mathbf{P}}^{\star T}+\bar{\mathbf{P}}^{\star}\Delta^T\Vert_F^2}^{a^2}-4\overbrace{\Vert\Delta\Delta^T\Vert_F^2}^{b^2}\nonumber\\
		\label{eq_pop_gradient_lowerbound}
		&\overset{(ii)}{\geq} (\frac{5}{4}\sigma_r^{\star}-4\Vert\Delta\Vert_F^2)\Vert\Delta\Vert_F^2,
	\end{align}
	where $(i)$ follows from opening up the inner product and using Cauchy-Schwarz, then followed by elementary inequality
	\begin{equation*}
		\setlength\belowdisplayskip{3pt}
		\setlength\abovedisplayskip{3pt}
		a^2+2b^2-3ab\geq (1-\frac{3\gamma^2}{2})a^2+(2-\frac{3}{2\gamma^2})b^2,
	\end{equation*}
	with $\gamma=\frac{1}{2}$. $(ii)$ dues to the fact that $\Vert\Delta\Delta^T\Vert_F\leq\Vert\Delta\Vert_F^2$, and $\Delta^T\bar{\mathbf{P}}^{\star}=\bar{\mathbf{P}}^{\star T}\Delta$, i.e.,
	\begin{align*}
		\setlength\belowdisplayskip{3pt}
		\setlength\abovedisplayskip{3pt}
		\Vert\Delta\bar{\mathbf{P}}^{{\star}T}+\bar{\mathbf{P}}^{\star}\Delta^T\Vert_F^2&=2\Vert\bar{\mathbf{P}}^{\star}\Delta^T\Vert_F^2+\mathrm{tr}(\bar{\mathbf{P}}^{\star}\bar{\mathbf{P}}^{\star T}\Delta\Delta^T)\\
		&\geq2\Vert\bar{\mathbf{P}}^{\star}\Delta^T\Vert_F^2\geq 2\sigma_r^{\star}\Vert\Delta\Vert_F^2.
	\end{align*} 
	We next upper bound the distortion between sampled gradient and its population version. For notation simplicity, let $\mathcal{H}_{\Omega}:=\frac{1}{p}\mathcal{R}_{\Omega}-\mathcal{I}$ for the moment. For any $\mathbf{P}$, we have
	\begin{align}
		\setlength\belowdisplayskip{3pt}
		\setlength\abovedisplayskip{3pt}
		&A_2:=|\langle \hat{\nabla}f(\mathbf{P})- \mathbb{E}\hat{\nabla}f(\mathbf{P}),\Delta\rangle|\nonumber\\
		&=|\langle \mathcal{H}_{\Omega}(\Delta\bar{\mathbf{P}}^{{\star}T}+\bar{\mathbf{P}}^{\star}\Delta^T+\Delta\Delta^T),  \Delta\bar{\mathbf{P}}^{{\star}T}+\bar{\mathbf{P}}^{\star}\Delta^T+2\Delta\Delta^T\rangle|\nonumber\\
		&\leq \underbrace{|\langle \mathcal{H}_{\Omega}(\Delta\bar{\mathbf{P}}^{{\star}T}+\bar{\mathbf{P}}^{\star}\Delta^T), \Delta\bar{\mathbf{P}}^{{\star}T}+\bar{\mathbf{P}}^{\star}\Delta^T\rangle|}_{L_1}\nonumber\\
		&+\underbrace{2|\langle \mathcal{H}_{\Omega}(\Delta\bar{\mathbf{P}}^{{\star}T}+\bar{\mathbf{P}}^{\star}\Delta^T),\Delta\Delta^T \rangle|}_{L_{21}}+\underbrace{2|\langle \mathcal{H}_{\Omega}(\Delta\Delta^T),\Delta\Delta^T \rangle|}_{L_{22}}\nonumber\\
		\label{eq_Sample_to_Exp_Distor_4term}
		&+\underbrace{|\langle\mathcal{H}_{\Omega}^*(\Delta\bar{\mathbf{P}}^{{\star}T}+\bar{\mathbf{P}}^{\star}\Delta^T), \Delta\Delta^T\rangle|}_{L_3}.
	\end{align}
	\eqref{eq_Sample_to_Exp_Distor_4term} is quite different from existing literature on non-convex matrix completion (e.g., \cite{SL15NonCVXFR}\cite{ZhengLaffertyNonCVXFR}\cite{LiCuiPGD_SpectralCSHankel}\cite{YCLiSunSNLRR}\cite{MaoChenBlindSR_PGD}\cite{CaiWangSpecCS_PGD}). This asymmetry structure and heavy diagonal basis $\boldsymbol{\omega}_{\boldsymbol{\alpha}}$ make driving universal bounds on $L_{21}$, $L_{22}$, $L_3$ challenging. Similar predicament will occur in the restricted smoothness part, causing the step size to be dependent on $\mu$, $r$.
	
	\textit{Bound on $L_1$:} Recall that $\Delta\bar{\mathbf{P}}^{{\star}T}+\bar{\mathbf{P}}^{\star}\Delta^T\in\mathbb{T}$, and 
	\begin{equation*}
		\setlength\belowdisplayskip{3pt}
		\setlength\abovedisplayskip{3pt}
		\bigg\Vert \frac{1}{p}\mathcal{P}_{\mathbb{T}}\mathcal{R}_{\Omega}\mathcal{P}_{\mathbb{T}}-\mathcal{P}_{\mathbb{T}} \bigg\Vert\leq\frac{\varepsilon_{L_1}}{c\kappa}<1,
	\end{equation*}
	if $p\gtrsim C_\beta \frac{\kappa^2\mu^2 r^2\log n}{\varepsilon_{L_1}^2n}$ by Lemma \ref{lema_Romega_RIPL}. Thus for some $c>4$
	\begin{align}
		\setlength\belowdisplayskip{3pt}
		\setlength\abovedisplayskip{3pt}
		L_1=&\langle(\frac{1}{p}\mathcal{P}_{\mathbb{T}}\mathcal{R}_{\Omega}\mathcal{P}_{\mathbb{T}}-\mathcal{P}_{\mathbb{T}})(\Delta\bar{\mathbf{P}}^{{\star}T}+\bar{\mathbf{P}}^{\star}\Delta^T),\nonumber\\
		&\Delta\bar{\mathbf{P}}^{{\star}T}+\bar{\mathbf{P}}^{\star}\Delta^T\rangle\nonumber\leq\frac{\varepsilon_{L_1}}{c\kappa}\Vert\Delta\bar{\mathbf{P}}^{{\star}T}+\bar{\mathbf{P}}^{\star}\Delta^T\Vert_F^2\nonumber\\
		\label{eq_upper_bound_on_L1}
		&\leq \frac{4\sigma_1^{\star}\varepsilon_{L_1}}{c\kappa}\Vert\Delta\Vert_F^2\lesssim\varepsilon_{L_1}\sigma_r^{\star}\Vert\Delta\Vert_F^2.
	\end{align}

	\textit{Bound on $L_{21}$ and $L_{22}$:} We use the following sharp bound to handle $L_{21}$ and $L_{22}$, its proof is deferred to App. \ref{proof_of_lema_B1_ROAB}.
	\begin{lemma}
		\label{lema_sharp_bound_for_L21L22}
		Uniformly for all matrix $\mathbf{A}$, $\mathbf{B}$, $\mathbf{C}$, $\mathbf{D}\in\mathbb{R}^{n\times r}$, we have
		\begin{align*}
			\setlength\belowdisplayskip{3pt}
			\setlength\abovedisplayskip{3pt}
			\bigg|\bigg\langle (\frac{1}{p}\mathcal{R}_{\Omega}-\mathcal{I})(\mathbf{AB}^T)&,\mathbf{CD}^T \bigg\rangle\bigg|\\
			&\leq 2c_g\sqrt{\frac{n}{p}}\Vert\mathbf{A}\Vert_{2,\infty}\Vert\mathbf{B}\Vert_{2,\infty}\Vert\mathbf{CD}^T\Vert_F,
		\end{align*}
		holds for some constant $c_g>0$ with probability at least $1-n^{1-\beta}$ given $p\gtrsim \frac{C_\beta\log n}{n}$.
	\end{lemma}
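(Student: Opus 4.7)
The plan is to recast the inner product as a bilinear form against the centered Erdős–Rényi matrix $\boldsymbol{\Xi}$ with entries $\xi_{ij}=\delta_{ij}/p-1$ for $i\neq j$ and zero diagonal, and then to combine the sharp spectral bound of Bandeira–van Handel \cite{BandeiraHandelSharpRGL} with the separation trick of Bhojanapalli–Jain \cite{pmlr-v32-bhojanapalli14}.

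First, I would use $\mathcal{R}_\Omega=g^+\mathcal{P}_\Omega g$ together with the self-adjointness of $g^+$ to rewrite the inner product as
\[
\left\langle\bigl(\tfrac{1}{p}\mathcal{R}_\Omega-\mathcal{I}\bigr)(\mathbf{AB}^T),\mathbf{CD}^T\right\rangle = -\tfrac{1}{2}\sum_{i\neq j}\xi_{ij}\,(\mathbf{a}_i-\mathbf{a}_j)^T(\mathbf{b}_i-\mathbf{b}_j)\,\tilde Y_{ij},
\]
where $\tilde Y:=\mathbf{JCD}^T\mathbf{J}$ satisfies $\|\tilde Y\|_F\leq\|\mathbf{CD}^T\|_F$ and, crucially, $\tilde Y\mathbf{1}=\mathbf{0}$. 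Expanding $(\mathbf{a}_i-\mathbf{a}_j)^T(\mathbf{b}_i-\mathbf{b}_j)=\mathbf{a}_i^T\mathbf{b}_i+\mathbf{a}_j^T\mathbf{b}_j-\mathbf{a}_i^T\mathbf{b}_j-\mathbf{a}_j^T\mathbf{b}_i$ splits the double sum into diagonal-type contributions of the form $\mathbf{d}^T(\boldsymbol{\Xi}\circ\tilde Y)\mathbf{1}$ with $\mathbf{d}=\operatorname{diag}(\mathbf{AB}^T)$, and a genuine bilinear contribution $\langle\boldsymbol{\Xi}\circ(\mathbf{AB}^T),\tilde Y\rangle$.

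Next I would invoke Bandeira–van Handel on $\boldsymbol{\Xi}$: since its entries are mean-zero, absolutely bounded by $1/p$, and have variance at most $1/p$, one obtains $\|\boldsymbol{\Xi}\|\lesssim\sqrt{n/p}$ with probability at least $1-n^{1-\beta}$, provided $p\gtrsim C_\beta\log n/n$. The separation step is the rank-one Hadamard identity $(\mathbf{AB}^T)\circ(\mathbf{xy}^T)=(\operatorname{Diag}(\mathbf{x})\mathbf{A})(\operatorname{Diag}(\mathbf{y})\mathbf{B})^T$, from which $\|(\mathbf{AB}^T)\circ(\mathbf{xy}^T)\|_*\leq\|\mathbf{A}\|_{2,\infty}\|\mathbf{B}\|_{2,\infty}$ for all unit $\mathbf{x},\mathbf{y}$. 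Passing this through the variational characterization of the operator norm yields the rank-free estimate $\|\boldsymbol{\Xi}\circ(\mathbf{AB}^T)\|\leq\|\boldsymbol{\Xi}\|\cdot\|\mathbf{A}\|_{2,\infty}\|\mathbf{B}\|_{2,\infty}$, which is the sharp bound the Bhojanapalli–Jain separation supplies. The bilinear contribution is then controlled by nuclear-norm duality against $\tilde Y$, while the diagonal contributions are handled via $\|\mathbf{d}\|_2\leq\|\mathbf{A}\|_{2,\infty}\|\mathbf{B}\|_F$ combined with a direct second-moment estimate on $\|(\boldsymbol{\Xi}\circ\tilde Y)\mathbf{1}\|_2$ that exploits the zero-row-sum identity $\tilde Y\mathbf{1}=\mathbf{0}$.

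The main obstacle is the mismatch between the natural bound $\sqrt{n/p}\,\|\mathbf{A}\|_{2,\infty}\|\mathbf{B}\|_{2,\infty}\,\|\tilde Y\|_*$ produced by nuclear-norm duality and the target involving only $\|\mathbf{CD}^T\|_F$; the conversion $\|\tilde Y\|_*\leq\sqrt r\,\|\tilde Y\|_F$ incurs a $\sqrt r$ factor that must be folded into the constant $c_g$. A naive choice of factorization (e.g., keeping $\mathbf{C},\mathbf{D}$ as given) would instead produce the looser $\|\mathbf{C}\|_F\|\mathbf{D}\|_F$, so one must silently re-factorize $\tilde Y$ through its symmetric SVD before applying separation. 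The uniform-in-$(\mathbf{A},\mathbf{B},\mathbf{C},\mathbf{D})$ guarantee then comes for free: every estimate past the spectral bound on $\boldsymbol{\Xi}$ is deterministic, so only a single high-probability event needs to be controlled via the union bound implicit in Bandeira–van Handel.
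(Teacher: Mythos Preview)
Your high-level strategy—reduce to the centered Erd\H{o}s--R\'enyi matrix $\boldsymbol{\Xi}=\tfrac{1}{p}\mathcal{P}_\Omega(\mathbf{H}_{\mathbf{1}})-\mathbf{H}_{\mathbf{1}}$, invoke Bandeira--van Handel for $\|\boldsymbol{\Xi}\|\lesssim\sqrt{n/p}$, and use the Bhojanapalli--Jain separation $\|(\mathbf{AB}^T)\circ(\mathbf{xy}^T)\|_*\leq\|\mathbf{A}\|_{2,\infty}\|\mathbf{B}\|_{2,\infty}$—is exactly the paper's. The organizational choice, however, creates a real gap in your treatment of the diagonal terms.

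The paper does \emph{not} push $g^+$ onto $\mathbf{CD}^T$ via self-adjointness. It instead peels $g^+$ off by $\|\mathbf{J}\|\leq 1$ and bounds the spectral norm
\[
\Big\|(\tfrac{1}{p}\mathcal{P}_\Omega-\mathcal{I})g(\mathbf{AB}^T)\Big\|
=\sup_{\mathbf{x},\mathbf{y}\in\mathbb{S}^{n-1}}\langle\boldsymbol{\Xi},\,g(\mathbf{AB}^T)\circ\mathbf{xy}^T\rangle
\leq\|\boldsymbol{\Xi}\|\cdot\|g(\mathbf{AB}^T)\circ\mathbf{xy}^T\|_*\,.
\]
The rank-one test matrix $\mathbf{xy}^T$ is the whole point: after splitting $g(\mathbf{AB}^T)=\mathrm{diag}(\mathbf{AB}^T)\mathbf{1}^T+\mathbf{1}\,\mathrm{diag}(\mathbf{AB}^T)^T-2\mathbf{AB}^T$, each diagonal piece stays rank one, e.g.\ $\|(\mathrm{diag}(\mathbf{AB}^T)\mathbf{1}^T)\circ\mathbf{xy}^T\|_*=\|(\mathrm{diag}(\mathbf{AB}^T)\circ\mathbf{x})\mathbf{y}^T\|_*\leq\|\mathbf{AB}^T\|_\infty\leq\|\mathbf{A}\|_{2,\infty}\|\mathbf{B}\|_{2,\infty}$. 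Everything past the single event $\{\|\boldsymbol{\Xi}\|\lesssim\sqrt{n/p}\}$ is deterministic, so uniformity in $\mathbf{A},\mathbf{B},\mathbf{C},\mathbf{D}$ is automatic.

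Your route keeps the rank-$r$ matrix $\tilde Y$ on the right, and then the diagonal contribution $\mathbf{d}^T(\boldsymbol{\Xi}\circ\tilde Y)\mathbf{1}$ no longer factors cleanly. The two steps you propose both fail: (i) the bound $\|\mathbf{d}\|_2\leq\|\mathbf{A}\|_{2,\infty}\|\mathbf{B}\|_F$ yields $\|\mathbf{B}\|_F$ rather than the required $\|\mathbf{B}\|_{2,\infty}$; (ii) a ``second-moment estimate on $\|(\boldsymbol{\Xi}\circ\tilde Y)\mathbf{1}\|_2$'' is not uniform in $\tilde Y$, and the identity $\tilde Y\mathbf{1}=\mathbf{0}$ does not survive the Hadamard product. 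A deterministic repair is to write $\mathbf{d}^T(\boldsymbol{\Xi}\circ\tilde Y)\mathbf{1}=\langle\boldsymbol{\Xi},\mathrm{Diag}(\mathbf{d})\tilde Y\rangle$ and use $\|\mathrm{Diag}(\mathbf{d})\tilde Y\|_*\leq\|\mathbf{d}\|_\infty\|\tilde Y\|_*\leq\|\mathbf{A}\|_{2,\infty}\|\mathbf{B}\|_{2,\infty}\|\tilde Y\|_*$; but at that point you have reproduced the paper's argument with extra steps.

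On the $\|\tilde Y\|_*$ versus $\|\mathbf{CD}^T\|_F$ mismatch you flag: this is a fair observation, and the paper's own first inequality $|\langle X,\mathbf{CD}^T\rangle|\leq\|X\|\,\|\mathbf{CD}^T\|_F$ is equally loose. In every downstream use (e.g.\ $\mathbf{CD}^T=\Delta\Delta^T$ with $\|\Delta\Delta^T\|_*=\|\Delta\|_F^2$) the distinction disappears, but ``folding $\sqrt r$ into $c_g$'' is not legitimate since $c_g$ is the universal random-graph constant.
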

	Therefore, by the symmetric structure of $\Delta\bar{\mathbf{P}}^{{\star}T}+\bar{\mathbf{P}}^{\star}\Delta^T$, we have
	\begin{align}
		\setlength\belowdisplayskip{3pt}
		\setlength\abovedisplayskip{3pt}
		L_{21}&\leq 8c_g\sqrt{\frac{n}{p}}\Vert\Delta\Vert_{2,\infty}\Vert\mathbf{P}^{\star}\Vert_{2,\infty}\Vert\Delta\Vert_F^2\nonumber\\
		\label{eq_unif_bound_on_I21}
		&\overset{(i)}{\leq} 8c_g\sqrt{\frac{c_I\mu^2 r^2\sigma_1^{\star 2}}{np}}\Vert\Delta\Vert_F^2\overset{(ii)}{\lesssim}\varepsilon_{L_{21}}\sigma_r^{\star}\Vert\Delta\Vert_F^2,
	\end{align}
	where $(i)$ uses the incoherence assumption on the ground truth and attractive region, i.e., $\Vert\Delta\Vert_{2,\infty}\leq\sqrt{\frac{c_I\mu r\sigma_1^{\star}}{n}}$. $(ii)$ by forcing $p\gtrsim C_{\beta}(\kappa^2 \mu^2 r^2)/(n\varepsilon_{L_{21}}^2)$. The same argument works for $L_{22}$, that is for $p\gtrsim C_{\beta}(\kappa^2 \mu^2 r^2)/(n\varepsilon_{L_{22}}^2)$, we have
	\begin{align}
		\setlength\belowdisplayskip{3pt}
		\setlength\abovedisplayskip{3pt}
		L_{22}&\leq 8c_g\sqrt{\frac{n}{p}}\Vert\Delta\Vert_{2,\infty}^2\Vert\Delta\Vert_F^2\nonumber\\
		\label{eq_unif_bound_on_I22}
		&\leq 8c_g\sqrt{\frac{c_I^2\mu^2 r^2 \sigma_1^{\star 2}}{np}}\Vert\Delta\Vert_F^2\lesssim\varepsilon_{L_{22}}\sigma_r^{\star}\Vert\Delta\Vert_F^2.
	\end{align}

	\textit{Bound on $L_3$:} This is slightly larger, and the ``uniform" part crucially relies on the fact that $\mathbf{P}^{\star}$ is a fixed matrix. It seems hard to reinforce Lemma \ref{lema_nonUni_bound_for_L3} to hold uniformly over all rank $r$ incoherent matrix $\mathbf{B}$. The proof is referred to App. \ref{proof_of_lema_B2_ROadjAB}.
	\begin{lemma}
		\label{lema_nonUni_bound_for_L3}
		Uniformly for all matrix $\mathbf{A}$, $\mathbf{C}$, $\mathbf{D}\in\mathbb{R}^{n\times r}$, $\boldsymbol{R}\in O(r)$, and some fix matrix $\mathbf{B}\in\mathbb{R}^{n\times r}$ independent with $\Omega$, which satisfy $\mathbf{A}^T\mathbf{1}=\mathbf{0}$, $\mathbf{B}^T\mathbf{1}=\mathbf{0}$, we have
		\begin{align*}
			\setlength\belowdisplayskip{3pt}
			\setlength\abovedisplayskip{3pt}
			\bigg|\bigg\langle (\frac{1}{p}\mathcal{R}_{\Omega}^*&-\mathcal{I})(\mathbf{A\boldsymbol{R}}^T\mathbf{B}^T),\mathbf{CD}^T \bigg\rangle\bigg|\\
			& \leq \bigg(
			C\sqrt{\frac{\beta n \log n}{p}}\Vert\mathbf{A}\Vert_{2,\infty}\Vert\mathbf{B}\Vert_{2,\infty}\bigg)\Vert\mathbf{CD}^T\Vert_F,
		\end{align*}
		holds for some constant $C>0$ with probability at least $1-n^{1-\beta}-n^{2-\beta}$ given $p\gtrsim \frac{C_\beta\log n}{n}$.
	\end{lemma}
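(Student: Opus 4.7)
The plan is to apply Bernstein's inequality to the edge-sum representation of the inner product, and then to establish uniformity through a chaining argument that exploits the fixed, $\Omega$-independent structure of $\mathbf{B}$. The first step is an algebraic reduction. Because $\mathbf{A}^T\mathbf{1}=\mathbf{B}^T\mathbf{1}=\mathbf{0}$, the matrix $\mathbf{X}:=\mathbf{A}\boldsymbol{R}^T\mathbf{B}^T$ is doubly centered so that $\mathbf{J}\mathbf{X}\mathbf{J}=\mathbf{X}$; this gives $\langle\mathbf{X},\boldsymbol{\nu}_{\boldsymbol{\alpha}}\rangle=-\tfrac{1}{2}(X_{ij}+X_{ji})$, and combined with $\langle\boldsymbol{\omega}_{\boldsymbol{\alpha}},\mathbf{C}\mathbf{D}^T\rangle=(\mathbf{c}_i-\mathbf{c}_j)^T(\mathbf{d}_i-\mathbf{d}_j)$ the left-hand side reduces to
\begin{equation*}
S \;=\; -\tfrac{1}{2}\sum_{i<j}\xi_{ij}(X_{ij}+X_{ji})(\mathbf{c}_i-\mathbf{c}_j)^T(\mathbf{d}_i-\mathbf{d}_j),\qquad \xi_{ij}:=\tfrac{\delta_{ij}}{p}-1,
\end{equation*}
which is a sum of $\binom{n}{2}$ independent, mean-zero random variables.

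To apply Bernstein's inequality, I bound the variance using $|X_{ij}+X_{ji}|\leq 2\|\mathbf{A}\|_{2,\infty}\|\mathbf{B}\|_{2,\infty}$ and the estimate $\sum_{i<j}z_{ij}^2=\tfrac{1}{2}\|\mathbf{Z}\|_F^2\lesssim n\|\mathbf{C}\mathbf{D}^T\|_F^2$, where $\mathbf{Z}=g(\mathrm{sym}(\mathbf{C}\mathbf{D}^T))$ and $\|g(\cdot)\|_F\lesssim\sqrt{n}\|\cdot\|_F$; this yields $\sigma^2\lesssim(n/p)\|\mathbf{A}\|_{2,\infty}^2\|\mathbf{B}\|_{2,\infty}^2\|\mathbf{C}\mathbf{D}^T\|_F^2$. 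Bernstein's inequality then produces $|S|\lesssim\sqrt{\beta n\log n/p}\,\|\mathbf{A}\|_{2,\infty}\|\mathbf{B}\|_{2,\infty}\|\mathbf{C}\mathbf{D}^T\|_F$ with probability $1-n^{1-\beta}$ for any fixed configuration of $(\mathbf{A},\boldsymbol{R},\mathbf{C},\mathbf{D})$. A finer per-edge split $X_{ij}+X_{ji}=(X_{ii}+X_{jj})-(\mathbf{a}_i-\mathbf{a}_j)^T\boldsymbol{R}^T(\mathbf{b}_i-\mathbf{b}_j)$ decomposes $S$ into a symmetric-edge part, handled through the Bhojanapalli--Jain separation trick paired with the sharp Bandeira--van Handel bound $\|\mathbf{M}\|\lesssim\sqrt{n/p}$ on the centered Erd\"{o}s--R\'{e}nyi adjacency (as in the proof of Lemma~\ref{lema_sharp_bound_for_L21L22}), and a diagonal part whose edge-level Bernstein is exactly the calculation above.

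The principal obstacle is extending this pointwise estimate to a bound that is uniform over $(\mathbf{A},\boldsymbol{R},\mathbf{C},\mathbf{D})$. A naive $\epsilon$-net over $(\mathbb{R}^{n\times r})^3\times O(r)$ would have log-size $\gtrsim nr$, contributing a $\sqrt{nr}$ factor that destroys the target $\sqrt{\log n}$ rate. The fixed, $\Omega$-independent $\mathbf{B}$ is essential here: since $X_{ij}=\mathbf{a}_i^T\boldsymbol{R}^T\mathbf{b}_j$ is linear in $(\mathbf{a}_i,\boldsymbol{R}^T\mathbf{b}_j)$ with $\mathbf{b}_j$ deterministic, the relevant chaining reduces to an $r$-dimensional section per row rather than a full $nr$-dimensional net, and combining with the scale-invariance and rank-$r$ chaining of $(\mathbf{C},\mathbf{D})$ absorbs the uniformity into a constant multiple of the $\log n$ factor. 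Were $\mathbf{B}$ also allowed to vary freely with $\Omega$, the additional net on $\mathbf{B}$ would inflate the bound by $\sqrt{nr}$ and the lemma would fail in its stated form; this also explains why the fully uniform twin Lemma~\ref{lema_sharp_bound_for_L21L22} must instead rely on the sharper Bandeira--van Handel route that produces no $\sqrt{\log n}$ at all.
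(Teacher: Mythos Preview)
Your algebraic reduction and the pointwise Bernstein estimate are both correct, and your decomposition into the ``symmetric-edge'' part and the ``diagonal'' part $(X_{ii}+X_{jj})$ is morally the same split the paper makes into $T_2$ and $T_1$. The symmetric-edge piece can indeed be handled uniformly via the Bandeira--van~Handel / nuclear-norm separation route, exactly as in Lemma~\ref{lema_sharp_bound_for_L21L22}.

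The genuine gap is in the uniformity argument for the diagonal piece. Your chaining sketch asserts that because $X_{ij}$ is linear in $\mathbf{a}_i$ with $\mathbf{b}_j$ fixed, ``the relevant chaining reduces to an $r$-dimensional section per row.'' This is not right: the diagonal sum $\sum_{i<j}\xi_{ij}(X_{ii}+X_{jj})z_{ij}$ depends on the full vector $(X_{11},\dots,X_{nn})\in\mathbb{R}^n$ and on $z_{ij}$ for all edges simultaneously, so any net must still cover an $n$-dimensional object (and another $nr$-dimensional one for $(\mathbf{C},\mathbf{D})$). Linearity in each $\mathbf{a}_i$ does not decouple the rows, and nothing in your sketch explains why the metric entropy would drop from order $n$ to order $\log n$. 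As you yourself note, a $\sqrt{nr}$ inflation would destroy the claimed rate.

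The paper avoids chaining altogether, and the mechanism is different from what you describe. It first passes to the spectral norm $\bigl\|(\tfrac{1}{p}\mathcal{R}_{\Omega}^*-\mathcal{I})\mathbf{X}\bigr\|$, which makes the bound automatically uniform in $(\mathbf{C},\mathbf{D})$. The diagonal contribution becomes an $\ell_\infty$ norm over rows, $T_1=\max_{m}\bigl|\langle \tfrac{1}{p}\mathcal{P}_{\tilde\Omega}(\mathbf{11}^T),\,\mathbf{A}\bar{\mathbf{B}}^T\circ\mathbf{e}_m\mathbf{1}^T\rangle\bigr|$. For each fixed $m$ the paper applies Cauchy--Schwarz in the $k$-index to \emph{deterministically} factor out $\|\mathbf{A}\|_{2,\infty}$, leaving $\bar T_{11}=\|\mathbf{e}_m^T\mathbf{G}\mathbf{B}\boldsymbol{R}\|_2=\|\mathbf{e}_m^T\mathbf{G}\mathbf{B}\|_2$; the orthogonal $\boldsymbol{R}$ vanishes for free inside the Euclidean norm. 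What remains is a row sum $\sum_j(\tfrac{1}{p}\delta_{mj}-1)\mathbf{B}_{j,\cdot}$ of independent vectors with \emph{fixed} $\mathbf{B}$, to which matrix Bernstein applies directly. A union bound over the $n$ rows produces the $n^{2-\beta}$ probability and the extra $\sqrt{\log n}$. No net over $\mathbf{A}$, $\boldsymbol{R}$, $\mathbf{C}$, or $\mathbf{D}$ is ever taken; the fixed-$\mathbf{B}$ hypothesis is exploited through this Cauchy--Schwarz extraction, not through a reduced-dimension chaining.
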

	Thus, by using the symmetric structure of $(\frac{1}{p}\mathcal{R}_{\Omega}-\mathcal{I})(\Delta\Delta^T)$, we have
	\begin{align}
		\setlength\belowdisplayskip{3pt}
		\setlength\abovedisplayskip{3pt}
		L_3&\leq 2\bigg| \langle(\frac{1}{p}\mathcal{R}_{\Omega}^*-\mathcal{I})(\Delta\boldsymbol{\psi}^{\star T}\mathbf{P}^{\star T}),\Delta\Delta^T\rangle \bigg| \nonumber \\ 
		&\leq 2C\sqrt{\frac{\beta n\log n}{p}}\Vert\Delta\Vert_{2,\infty}\Vert\mathbf{P}^{\star}\Vert_{2,\infty}\Vert\Delta\Vert_F^2\nonumber\\
		\label{eq_non_unif_bound_on_I3}
		&\leq 2C\sqrt{\frac{c_I \mu^2 r^2 \sigma_1^{\star 2}\log n}{np}}\overset{(i)}{\lesssim}\varepsilon_{L_{3}}\sigma_r^{\star}\Vert\Delta\Vert_F^2,
	\end{align}
	holds uniformly over all $\mathbf{P}\in\mathcal{B}$, where $(i)$ by setting $p\gtrsim C_{\beta} \frac{\kappa^2 \mu^2 r^2 \log n}{\varepsilon_{L_{3}}^2 n}$.
	
	Substituting \eqref{eq_upper_bound_on_L1}, \eqref{eq_unif_bound_on_I21}, \eqref{eq_unif_bound_on_I22}, and \eqref{eq_non_unif_bound_on_I3} into \eqref{eq_Sample_to_Exp_Distor_4term}, and set $\varepsilon_{L_1}=\varepsilon_{L_{21}}=\varepsilon_{L_{22}}=\varepsilon_{L_3}=0.1$, it gives
	\begin{equation}
		\setlength\belowdisplayskip{3pt}
		\setlength\abovedisplayskip{3pt}
		\label{eq_upperbound_on_A2}
		A_2\leq \frac{2}{5}\sigma_r^{\star}\Vert\Delta\Vert_F^2,
	\end{equation}
	holds for probability at least $1-3n^{1-\beta}-n^{2-\beta}$ inside the attractive region, given $C_{\beta}$ large enough. By combining \eqref{eq_upperbound_on_A2} and \eqref{eq_pop_gradient_lowerbound}, we can decide the upper bound on $\Vert\Delta\Vert_F$
	\begin{align}
		\setlength\belowdisplayskip{3pt}
		\setlength\abovedisplayskip{3pt}
		\langle \hat{\nabla}&f(\mathbf{P}),\Delta \rangle\geq \langle \mathbb{E}\hat{\nabla}f(\mathbf{P}),\Delta\rangle-\frac{2}{5}\sigma_r^{\star}\Vert\Delta\Vert_F^2\nonumber\\
		\label{eq_restriced_CVX_finalbound}
		&\geq(\frac{5}{4}\sigma_r^{\star}-\frac{2}{5}\sigma_r^{\star}-4\Vert\Delta\Vert_F^2)\Vert\Delta\Vert_F^2\overset{(i)}{\geq}\frac{1}{2}\sigma_r^{\star}\Vert\Delta\Vert_F^2,
	\end{align}
	where $(i)$ from forcing $\Vert\Delta\Vert_F^2\leq\frac{7}{80}\sigma_r^{\star}$, concluding the proof.
	
	 \qed
	\subsection{Restricted Smoothness: \eqref{eq_re_RE_Smooth}}
	\label{Appdi_B2_Bound}
	Notice that $\Vert\hat{\nabla} f(\mathbf{P})\Vert_F^2=|\sup_{\Vert\mathbf{Z}\Vert_F^2=1}\langle\hat{\nabla} f(\mathbf{P}),\mathbf{Z}\rangle|^2$, and $A_3:=|\langle\hat{\nabla} f(\mathbf{P}),\mathbf{Z}\rangle|^2$ can be separated as in \eqref{eq_Sample_to_Exp_Distor_4term}. Define $\mathbf{Z}_{\Delta}:=\mathbf{Z}\Delta^T+\Delta\mathbf{Z}^T$, and $\mathbf{Z}_{P^{\star}}=\mathbf{Z}\bar{\mathbf{P}}^{\star T}+\bar{\mathbf{P}}^{\star}\mathbf{Z}^T\in \mathbb{T}$, we have
	\begin{align}
		\setlength\belowdisplayskip{3pt}
		\setlength\abovedisplayskip{3pt}
		A_3&=\left|\bigg\langle\frac{1}{p}\mathcal{R}_{\Omega}(\Delta\bar{\mathbf{P}}^{{\star}T}+\bar{\mathbf{P}}^{\star}\Delta^T+\Delta\Delta^T),\mathbf{Z}_{P^{\star}}+\mathbf{Z}_{\Delta}\bigg\rangle\right|^2\nonumber\\
		&\leq \bigg| \bigg\langle \frac{1}{p}\mathcal{R}_{\Omega}(\Delta\bar{\mathbf{P}}^{{\star}T}+\bar{\mathbf{P}}^{\star}\Delta^T),\mathbf{Z}_{P^{\star}} \bigg\rangle\nonumber \\
		&+ \bigg\langle \frac{1}{p}\mathcal{R}_{\Omega}(\Delta\bar{\mathbf{P}}^{{\star}T}+\bar{\mathbf{P}}^{\star}\Delta^T),\mathbf{Z}_{\Delta} \bigg\rangle\nonumber\\
		&+\bigg\langle \frac{1}{p}\mathcal{R}_{\Omega}(\Delta\Delta^T), \mathbf{Z}_{\Delta}\bigg\rangle+\bigg\langle \frac{1}{p}\mathcal{R}_{\Omega}(\Delta\Delta^T), \mathbf{Z}_{P^{\star}}\bigg\rangle\bigg|^2\nonumber\\
		\label{eq_A3_spliting}
		&:=|R_1+R_2+R_3+R_4|^2\nonumber\\
		&\leq ||R_1|+|R_2|+|R_3|+|R_4||^2.
	\end{align}

	\textit{Bound on $R_1$, $R_2$, and $R_3$:} The upper bounds on the first three terms are meanwhile straightforward, thus we process them quickly. When $p\gtrsim\frac{C_{\beta} \mu^2 r^2 \log n}{\varepsilon_{R_1}^2 n}$ and inside the RIC, since $\Vert\mathbf{Z}\Vert_F=1$, we have
	\begin{align}
		\setlength\belowdisplayskip{3pt}
		\setlength\abovedisplayskip{3pt}
		&|R_1|=\bigg|\bigg\langle (\frac{1}{p}\mathcal{P}_{\mathbb{T}}\mathcal{R}_{\Omega}\mathcal{P}_{\mathbb{T}}-\mathcal{P}_{\mathbb{T}})(\Delta\bar{\mathbf{P}}^{{\star}T}+\bar{\mathbf{P}}^{\star}\Delta^T), \mathbf{Z}_{P^{\star}}\bigg\rangle\nonumber\\
		&+\langle\Delta\bar{\mathbf{P}}^{{\star}T}+\bar{\mathbf{P}}^{\star}\Delta^T,\mathbf{Z}_{P^{\star}}\rangle\bigg|\nonumber\\
		\label{eq_upper_bound_R1}
		&\overset{(i)}{\leq}4(1+\varepsilon_{R_1})\sigma_1^{\star}\Vert\Delta\Vert_F\Vert\mathbf{Z}\Vert_F\leq \frac{22}{5}\sigma_1^{\star}\Vert\Delta\Vert_F,\\
		&|R_2|=\bigg|\bigg\langle (\frac{1}{p}\mathcal{R}_{\Omega}-\mathcal{I})(\Delta\bar{\mathbf{P}}^{{\star}T}+\bar{\mathbf{P}}^{\star}\Delta^T), \mathbf{Z}_{\Delta}\bigg\rangle\nonumber\\
		&+\langle\Delta\bar{\mathbf{P}}^{{\star}T}+\bar{\mathbf{P}}^{\star}\Delta^T,\mathbf{Z}_{\Delta}\rangle\bigg|\nonumber\\
		&\overset{(ii)}{\leq} (6c_g\sqrt{\frac{n}{p}}\Vert\Delta\Vert_{2,\infty}\Vert\mathbf{P}^{\star}\Vert_{2,\infty}+4\sigma_1^{\star1/2}\Vert\Delta\Vert_F)\Vert\Delta\Vert_F\Vert\mathbf{Z}\Vert_F\nonumber\\
		\label{eq_upper_bound_R2}
		&\overset{(iii)}{\lesssim} (\sqrt{\frac{c_I \mu^2 r^2\sigma_1^{\star 2}}{np}}+ \frac{6}{5}\sigma_1^{\star}) \Vert\Delta\Vert_F\leq\frac{7}{5}\sigma_1^{\star}\Vert\Delta\Vert_F,\\
		&|R_3|=\bigg|\bigg\langle (\frac{1}{p}\mathcal{R}_{\Omega}-\mathcal{I})(\Delta\Delta^T), \mathbf{Z}_{\Delta}\bigg\rangle+\langle\Delta\Delta^T,\mathbf{Z}_{\Delta}\rangle\bigg|\nonumber\\
		&\overset{(iv)}{\leq} (3c_g\sqrt{\frac{n}{p}}\Vert\Delta\Vert_{2,\infty}^2+2\Vert\Delta\Vert_F^2)\Vert\Delta\Vert_F\Vert\mathbf{Z}\Vert_F\nonumber\\
		\label{eq_upper_bound_R3}
		&\overset{(v)}{\lesssim} (\sqrt{\frac{c_I^2 \mu^2 r^2\sigma_1^{\star 2}}{np}}+\frac{7}{40}\sigma_1^{\star})\Vert\Delta\Vert_F\leq\frac{1}{5}\sigma_1^{\star}\Vert\Delta\Vert_F,
	\end{align}
	where $(i)$ from Lemma \ref{lema_Romega_RIPL} and set $\varepsilon_{R_1}=0.1$, $(ii)$, $(iv)$ from Lemma \ref{lema_sharp_bound_for_L21L22}, $(iii)$, $(v)$ by the assumption on $\mathbf{P}^{\star}$ and the RIC.   
	
	\textit{Bound on $R_4$:} This is the largest one since it does not follow from Lemma \ref{lema_nonUni_bound_for_L3}. We are inclined to believe that it is hard to substantively improve the estimate below, i.e., to get rid of $\mu$, $r$. Notice that w.l.o.g., we can assume $\mathbf{Z}^T\mathbf{1}=\mathbf{0}$ as $\mathbf{Z}_{P^{\star}}\in\mathbb{T}$. Therefore $\mathbf{J}\mathbf{Z}_{P^{\star}}\mathbf{J}=\mathbf{Z}_{P^{\star}}$, it gives
	\begin{align}
		\setlength\belowdisplayskip{3pt}
		\setlength\abovedisplayskip{3pt}
		|R_4|&=\bigg| \bigg\langle\frac{1}{p}g^+\mathcal{P}_{\Omega}g(\Delta\Delta^T), \mathbf{Z}_{P^{\star}} \bigg\rangle\bigg|\nonumber\\
		&=\bigg| \bigg\langle \frac{1}{p}\mathcal{P}_{\Omega}g(\Delta\Delta^T),\mathcal{P}_{\Omega}g^+(\mathbf{Z}_{P^{\star}}) \bigg\rangle\bigg|\nonumber\\
		&\overset{(i)}{\leq}\frac{1}{2}\sqrt{\frac{1}{p}\Vert\mathcal{Q}_{\Omega}(\Delta\Delta^T)\Vert_F^2}\sqrt{\frac{1}{p}\Vert\mathcal{P}_{\Omega}\mathcal{P}_{\mathbb{T}}(\mathbf{Z}_{P^{\star}})\Vert_F^2}\nonumber\\
		\label{eq_upperbound_R_4}
		&\overset{(ii)}{\leq}\frac{63\sigma_1^{\star}\sqrt{c_I\mu r}}{20}\Vert\Delta\Vert_F,
	\end{align}
	where $(i)$ uses the structure of $g^+$ and followed by Cauchy-Schwarz. And $(ii)$ dues to combining
	\begin{align*}
		\setlength\belowdisplayskip{3pt}
		\setlength\abovedisplayskip{3pt}
		&\frac{1}{p}\Vert\mathcal{Q}_{\Omega}(\Delta\Delta^T)\Vert_F^2\\
		&\overset{(a)}{\leq} (8c_I\mu r \sigma_1^{\star} +\sqrt{\frac{c\beta c_I^2 \mu^2 r^2 \sigma_1^{\star 2}\log n}{np}} + \frac{7}{10}\sigma_r^{\star})\Vert\Delta\Vert_F^2\\
		&\overset{(b)}{\leq} 9c_I\mu r \sigma_1^{\star} \Vert\Delta\Vert_F^2,
	\end{align*}
	and $\sqrt{\frac{1}{p}\Vert\mathcal{P}_{\Omega}\mathcal{P}_{\mathbb{T}}(\mathbf{Z}_{P^{\star}})\Vert_F^2}\overset{(c)}{\leq} \sqrt{\frac{11}{10}\Vert\mathbf{Z}_{P^{\star}}\Vert_F^2}\leq \frac{\sqrt{110\sigma_1^{\star}}}{5}\Vert\mathbf{Z}\Vert_F$.
	Where $(a)$ from Lemma \ref{eq_NromalSpace_tight_bound} and then use the RIC assumption on $\mathcal{B}$, $(b)$ by forcing $p\gtrsim\frac{C_{\beta} \mu^2 r^2 \log n}{n}$, and $(c)$ from Lemma \ref{lema_Pomega_RIPL_Cand} with $\epsilon=0.1$ and $p\gtrsim \frac{C_{\beta}\mu r\log n}{\epsilon^2 n}$. 
	
	By substituting \eqref{eq_upper_bound_R1}, \eqref{eq_upper_bound_R2}, \eqref{eq_upper_bound_R3}, and \eqref{eq_upperbound_R_4} into \eqref{eq_A3_spliting}, it gives
	\begin{equation}
		\setlength\belowdisplayskip{3pt}
		\setlength\abovedisplayskip{3pt}
		\label{eq_final_bound_on_A3}
		\Vert\hat{\nabla} f(\mathbf{P})\Vert_F^2\leq 84\sigma_1^{\star 2}c_I\mu r\Vert\Delta\Vert_F^2,
	\end{equation}
	hold with probability at least $1-cn^{1-\beta}$ in the RIC $\mathcal{B}$, provided with $p\gtrsim (C_{\beta}\mu^2 r^2\log n)/n$ and $C_{\beta}$ large enough, thus concluding the proof. \qed
	\section{Supporting Lemmas}
	\begin{lemma}[\cite{TroppUserFriendMCI}, Thm. 1.6]
		\label{eq_matrix_Bernstein}
		If a finite sequence $\{\mathbf{S}_k\}$ of independent, random matrices of dimension $\mathbb{R}^{n_1\times n_2}$ that satisfy
		\begin{equation*}
			\setlength\belowdisplayskip{3pt}
			\setlength\abovedisplayskip{3pt}
			\mathbb{E}\mathbf{S}_k=\mathbf{0},\,\Vert\mathbf{S}_k\Vert\leq B \,\,\text{almost surely.}
		\end{equation*}
		Let the norm of the total variance be
		\begin{equation*}
			\sigma^2:=\max\bigg\{\bigg\Vert\sum_k\mathbb{E}\mathbf{S}_k\mathbf{S}_k^T\bigg\Vert,\bigg\Vert\sum_k\mathbb{E}\mathbf{S}_k^T\mathbf{S}_k\bigg\Vert\bigg\}.
		\end{equation*}
		Then for all $t\geq0$ we have
		\begin{align*}
			\setlength\belowdisplayskip{3pt}
			\setlength\abovedisplayskip{3pt}
			\mathbb{P}&\left\{\bigg\Vert\sum_k \mathbf{S}_k\bigg\Vert\geq t\right\}\leq (n_1+n_2) \exp(\frac{-t^2}{2\sigma^2+2Bt/3}).
		\end{align*}
	\end{lemma}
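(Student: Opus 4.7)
The plan is to follow the standard Laplace transform / Lieb inequality strategy that Ahlswede--Winter and Tropp developed for matrix concentration. Since the matrices $\mathbf{S}_k$ are rectangular, the first step is to pass to the Hermitian dilation
\begin{equation*}
    \mathcal{H}(\mathbf{S}_k) := \begin{bmatrix} \mathbf{0} & \mathbf{S}_k \\ \mathbf{S}_k^T & \mathbf{0}\end{bmatrix}\in\mathbb{R}^{(n_1+n_2)\times(n_1+n_2)},
\end{equation*}
which is self-adjoint, satisfies $\lambda_{\max}(\mathcal{H}(\mathbf{S}_k))=\Vert\mathbf{S}_k\Vert$, and whose square has block-diagonal form so that the variance parameter $\sigma^2$ in the lemma statement coincides with $\Vert\sum_k \mathbb{E}\,\mathcal{H}(\mathbf{S}_k)^2\Vert$. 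Writing $\mathbf{Z} = \sum_k \mathcal{H}(\mathbf{S}_k)$, it suffices to bound $\mathbb{P}(\lambda_{\max}(\mathbf{Z})\geq t)$ and take the dimension factor to be $n_1+n_2$.

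Next I would invoke the matrix Laplace transform bound $\mathbb{P}(\lambda_{\max}(\mathbf{Z})\geq t)\leq e^{-\theta t}\,\mathbb{E}\,\operatorname{tr} e^{\theta \mathbf{Z}}$ for any $\theta>0$, which follows from $e^{\theta\lambda_{\max}(\mathbf{Z})}\leq \operatorname{tr} e^{\theta\mathbf{Z}}$. The crucial step is to decouple the sum inside the trace exponential: by Lieb's concavity theorem, the map $\mathbf{H}\mapsto \operatorname{tr}\exp(\mathbf{C}+\mathbf{H})$ is concave on self-adjoint matrices, so iterating Jensen's inequality over the independent summands yields the subadditivity
\begin{equation*}
    \mathbb{E}\,\operatorname{tr} e^{\theta\mathbf{Z}}\leq \operatorname{tr}\exp\Bigl(\sum_k \log\mathbb{E}\,e^{\theta\mathcal{H}(\mathbf{S}_k)}\Bigr).
\end{equation*}
This reduces the problem to a deterministic bound on each individual matrix cumulant generating function $\log\mathbb{E}\,e^{\theta\mathcal{H}(\mathbf{S}_k)}$.

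For the per-term bound I would use the scalar-to-matrix transfer: from $\Vert\mathcal{H}(\mathbf{S}_k)\Vert\leq B$ almost surely and $\mathbb{E}\mathcal{H}(\mathbf{S}_k)=\mathbf{0}$, one can verify the semidefinite inequality
\begin{equation*}
    \log\mathbb{E}\,e^{\theta\mathcal{H}(\mathbf{S}_k)}\preceq \frac{\theta^2/2}{1-\theta B/3}\,\mathbb{E}\,\mathcal{H}(\mathbf{S}_k)^2,\qquad 0<\theta<3/B,
\end{equation*}
by expanding the matrix exponential termwise and applying the scalar Bernstein moment estimate $|x|^q\leq (q!/2)B^{q-2}x^2$ for $q\geq 2$ entry-wise in the functional calculus. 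Combining with the previous display, the monotonicity of $\operatorname{tr}\exp$ on semidefinite order gives
\begin{equation*}
    \mathbb{E}\,\operatorname{tr} e^{\theta\mathbf{Z}}\leq (n_1+n_2)\exp\!\Bigl(\tfrac{\theta^2 \sigma^2/2}{1-\theta B/3}\Bigr).
\end{equation*}

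The last step is the standard Bernstein optimization: pick $\theta = t/(\sigma^2+Bt/3)$, which lies in the admissible range, and simplify the exponent to $-t^2/(2\sigma^2+2Bt/3)$, yielding the stated tail probability. The main technical obstacle is the per-term matrix cgf semidefinite bound; Lieb's concavity is the substantive ingredient that replaces the classical scalar Chernoff trick, and everything else is bookkeeping. Since the result is quoted verbatim from Tropp, in the paper itself I would simply cite \cite{TroppUserFriendMCI} rather than reproduce this sketch.
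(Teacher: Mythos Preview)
Your proposal is correct, and you correctly anticipated the paper's treatment: the paper does not prove this lemma at all but simply quotes it as \cite[Thm.~1.6]{TroppUserFriendMCI} without argument. Your sketch is the standard Hermitian-dilation plus Lieb-concavity route that Tropp uses, so there is nothing further to compare.
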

	\begin{lemma}[\cite{BandeiraHandelSharpRGL}, Coro. 3.12]
		\label{random_graph_lemma}
		The random graph lemma, see also\textnormal{\cite[Lemma 7.1]{KeshavanOptSpace}}. If $p\geq \frac{C_g\beta\log n}{n}$ for some $C_g>0$, $\beta>1$, $c_g>1$
		\begin{equation*}
			\setlength\belowdisplayskip{3pt}
			\setlength\abovedisplayskip{3pt}
			\bigg\Vert \frac{1}{p}\mathcal{P}_{\Omega}(\mathbf{H}_{\mathbf{1}})-\mathbf{H}_{\mathbf{1}} \bigg\Vert \leq c_g \sqrt{\frac{n}{p}},
		\end{equation*}
	holds with probability at least $1-n^{1-\beta}$, where $\mathbf{H}_{\mathbf{1}} = \mathbf{11}^T-\mathbf{I}_n$.
	\end{lemma}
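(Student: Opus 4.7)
The target matrix $\mathbf{X} := \frac{1}{p}\mathcal{P}_{\Omega}(\mathbf{H}_{\mathbf{1}}) - \mathbf{H}_{\mathbf{1}}$ is symmetric with zero diagonal and with independent off-diagonal entries $(\delta_{ij}-p)/p$, i.e., a centered, rescaled Erd\"{o}s-R\'{e}nyi adjacency matrix. My plan is to decompose it as a sum of independent mean-zero symmetric summands $\mathbf{X} = \sum_{i<j}\mathbf{S}_{ij}$ with $\mathbf{S}_{ij} := \frac{\delta_{ij}-p}{p}(\mathbf{e}_i\mathbf{e}_j^T + \mathbf{e}_j\mathbf{e}_i^T)$, compute the standard matrix variance and almost-sure norm parameters, and then invoke a sharp matrix concentration inequality.

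For the parameters: a direct computation gives $\mathbb{E}\mathbf{S}_{ij}^2 = \frac{1-p}{p}(\mathbf{e}_i\mathbf{e}_i^T + \mathbf{e}_j\mathbf{e}_j^T)$, so summing yields $\bigl\|\sum_{i<j}\mathbb{E}\mathbf{S}_{ij}^2\bigr\| = \frac{(n-1)(1-p)}{p} \leq n/p =: \sigma^2$, and every summand obeys $\|\mathbf{S}_{ij}\| \leq 1/p =: B$ almost surely. Plugging these into Matrix Bernstein (Lemma \ref{eq_matrix_Bernstein}) would already yield, with probability at least $1-n^{1-\beta}$, $\|\mathbf{X}\| \lesssim \sqrt{\beta n\log n / p} + \beta\log n / p$. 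Under the hypothesis $p \gtrsim C_g\beta\log n / n$ the additive term is absorbed, but the multiplicative $\sqrt{\log n}$ factor remains, which is off by a logarithm compared to the stated $c_g\sqrt{n/p}$.

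To remove that logarithm one applies the Bandeira-van Handel refinement in place of Matrix Bernstein: their non-commutative Khintchine-type estimate compares $\mathbf{X}$ to its Gaussian analogue and trades the Matrix Bernstein $\sqrt{\log n}$ factor for a dimension-free constant, provided the ``weak variance'' $\sigma_*^2$ satisfies $\sigma_*^2 \lesssim \sigma^2/\log n$. In our setting $\sigma_* = B = 1/p$ and $\sigma^2 \asymp n/p$, so the Bandeira-van Handel regime applies precisely when $p \gtrsim \log n / n$, matching the stated sample complexity. Combining these inputs delivers $\|\mathbf{X}\| \leq c_g\sqrt{n/p}$ with probability at least $1-n^{1-\beta}$, as claimed.

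The main obstacle is the Bandeira-van Handel comparison theorem itself, whose proof uses delicate interpolation between the Bernoulli and Gaussian ensembles and free-probabilistic tools; we treat it as a black box (Corollary 3.12 of their paper), so the role of the present argument reduces to verifying that the Erd\"{o}s-R\'{e}nyi parameters $\sigma^2$, $B$ fit their template and recording the resulting tail. As a consistency check, the same $\sqrt{n/p}$ rate can be recovered by the combinatorial moment method $\mathbb{E}\mathrm{tr}(\mathbf{X}^{2k})$ with $k \asymp \log n$, counting closed walks of even length in the random graph; this is more involved but confirms that no logarithmic correction is necessary.
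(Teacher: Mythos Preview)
The paper does not prove this lemma; it simply records it as a citation to Bandeira--van Handel (Corollary~3.12) and Keshavan et al., treating it as a black box. Your sketch is therefore more detailed than anything in the paper, and it is correct: the computation of $\sigma^2=(n-1)(1-p)/p$ and $B=1/p$ is right, Matrix Bernstein alone would leave the spurious $\sqrt{\log n}$, and the Bandeira--van Handel sharpening removes it exactly under $p\gtrsim\log n/n$, matching the hypothesis. One minor inaccuracy: the Bandeira--van Handel proof is based on a moment-comparison with the corresponding Gaussian matrix (combinatorics of non-backtracking walks), not on free-probabilistic interpolation; but this does not affect your application of their result.
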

	\begin{lemma}[\cite{SmithCaiTasissaRieEDMC2}, Thm. 5.4]
		\label{lema_Romega_RIPL}
		The local RIP of $\mathcal{R}_{\Omega}$.
		If $\Vert\mathbf{U}^{\star}\Vert_{2,\infty}^2\leq \frac{\mu r}{n}$, then for $\epsilon\geq \sqrt{\frac{C_r\beta (\mu r)^2\log n}{np}}$
		\begin{equation*}
			\setlength\belowdisplayskip{3pt}
			\setlength\abovedisplayskip{3pt}
			\bigg\Vert\frac{1}{p}\mathcal{P}_{\mathbb{T}}\mathcal{R}_{\Omega}\mathcal{P}_{\mathbb{T}}-\mathcal{P}_{\mathbb{T}}\bigg\Vert \leq \epsilon< 1,
		\end{equation*}
		holds with probability at least $1-n^{1-\beta}$ as soon as $p\geq C_r\beta(\mu r)^{2}\log n/n$ for sufficient large constant $C_r$ and $\beta>1$.
	\end{lemma}
	\begin{lemma}[\cite{LRMC_Can1}, Thm. 4.1]
		\label{lema_Pomega_RIPL_Cand}
		The local RIP of $\mathcal{P}_{\Omega}$. If $\Vert\mathbf{U}^{\star}\Vert_{2,\infty}^2\leq \frac{\mu r}{n}$, then for $\epsilon\geq \sqrt{\frac{C_r\beta\mu r\log n}{np}}$
		\begin{equation*}
			\setlength\belowdisplayskip{3pt}
			\setlength\abovedisplayskip{3pt}
			\bigg\Vert\frac{1}{p}\mathcal{P}_{\mathbb{T}}\mathcal{P}_{\Omega}\mathcal{P}_{\mathbb{T}}-\mathcal{P}_{\mathbb{T}}\bigg\Vert \leq \epsilon< 1,
		\end{equation*}
		holds with probability at least $1-n^{1-\beta}$ as soon as $p\geq C_r\beta\mu r\log n/n$ for sufficient large constant $C_r$ and $\beta>1$.
	\end{lemma}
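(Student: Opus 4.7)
The plan is to follow the standard matrix-Bernstein template pioneered by Cand\`es and Recht\cite{LRMC_Can1} for incoherent matrix completion, adapted to the symmetric Bernoulli model \eqref{eq_P_omega_model}. First I would expand the sample operator as a sum of independent rank-one tensors, $\mathcal{P}_{\Omega}(\cdot) = \sum_{\boldsymbol{\alpha}\in\mathbb{I}}\delta_{\boldsymbol{\alpha}}\langle\cdot,\mathbf{E}_{\boldsymbol{\alpha}}\rangle\mathbf{E}_{\boldsymbol{\alpha}}$ with $\mathbf{E}_{\boldsymbol{\alpha}} = \mathbf{e}_i\mathbf{e}_j^T+\mathbf{e}_j\mathbf{e}_i^T$ (absorbing the factor $1/2$ into the inner product). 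Composing with $\mathcal{P}_{\mathbb{T}}$ on both sides yields $\mathcal{P}_{\mathbb{T}}\mathcal{P}_{\Omega}\mathcal{P}_{\mathbb{T}} = \sum_{\boldsymbol{\alpha}}\delta_{\boldsymbol{\alpha}}\mathcal{T}_{\boldsymbol{\alpha}}$, where $\mathcal{T}_{\boldsymbol{\alpha}}(\cdot) := \langle\cdot,\mathcal{P}_{\mathbb{T}}\mathbf{E}_{\boldsymbol{\alpha}}\rangle\mathcal{P}_{\mathbb{T}}\mathbf{E}_{\boldsymbol{\alpha}}$. Since $\mathbb{E}[\frac{1}{p}\mathcal{P}_{\Omega}]$ equals the identity on the symmetric subspace, the deviation of interest is $\sum_{\boldsymbol{\alpha}}\mathbf{S}_{\boldsymbol{\alpha}}$ with $\mathbf{S}_{\boldsymbol{\alpha}} := (\frac{\delta_{\boldsymbol{\alpha}}}{p}-1)\mathcal{T}_{\boldsymbol{\alpha}}$: an independent, mean-zero, self-adjoint ensemble of super-operators on $\mathcal{S}(n)$, which is precisely the setting of Lemma \ref{eq_matrix_Bernstein}.

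The next step is to exploit incoherence to control the per-summand operator norm and the total variance. The crucial estimate is the tangent-space coherence bound $\Vert\mathcal{P}_{\mathbb{T}}\mathbf{E}_{\boldsymbol{\alpha}}\Vert_F^2 \lesssim \mu r/n$, obtained from \eqref{eq_norm_space} together with $\Vert\mathcal{P}_{\mathbf{U}}\mathbf{e}_i\Vert_2^2\le \mu r/n$ for all $i$. Because $\mathcal{T}_{\boldsymbol{\alpha}}$ is a rank-one self-adjoint operator, $\Vert\mathcal{T}_{\boldsymbol{\alpha}}\Vert = \Vert\mathcal{P}_{\mathbb{T}}\mathbf{E}_{\boldsymbol{\alpha}}\Vert_F^2$, which immediately yields the almost-sure bound $\Vert\mathbf{S}_{\boldsymbol{\alpha}}\Vert \le B \lesssim \mu r/(np)$. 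For the variance, using self-adjointness and $\mathbb{E}(\frac{\delta_{\boldsymbol{\alpha}}}{p}-1)^2 = \frac{1-p}{p}$, one finds
\begin{equation*}
\bigg\Vert\sum_{\boldsymbol{\alpha}}\mathbb{E}\mathbf{S}_{\boldsymbol{\alpha}}^2\bigg\Vert \le \frac{1}{p}\max_{\boldsymbol{\alpha}}\Vert\mathcal{P}_{\mathbb{T}}\mathbf{E}_{\boldsymbol{\alpha}}\Vert_F^2 \cdot \bigg\Vert\sum_{\boldsymbol{\alpha}}\mathcal{T}_{\boldsymbol{\alpha}}\bigg\Vert \lesssim \frac{\mu r}{np},
\end{equation*}
where the last step uses $\sum_{\boldsymbol{\alpha}}\mathcal{T}_{\boldsymbol{\alpha}} = \mathcal{P}_{\mathbb{T}}\mathcal{I}\mathcal{P}_{\mathbb{T}} = \mathcal{P}_{\mathbb{T}}$, whose operator norm is $1$.

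Plugging $B,\,\sigma^2 \lesssim \mu r/(np)$ and deviation $t=\epsilon$ into Lemma \ref{eq_matrix_Bernstein} (with ambient dimension $\mathcal{O}(n^2)$ since the $\mathbf{S}_{\boldsymbol{\alpha}}$ act on $\mathcal{S}(n)$) produces failure probability at most $2n^2\exp(-c\epsilon^2 np/(\mu r))$; choosing $\epsilon^2 \ge C_r\beta\mu r \log n/(np)$ with $C_r$ large enough pushes this below $n^{1-\beta}$, exactly as claimed (the extra $n^2$ from the union bound is swallowed by $\log n$ and by the freedom to enlarge $\beta>1$). The main --- and essentially only --- technical obstacle is establishing the tangent-space coherence inequality $\Vert\mathcal{P}_{\mathbb{T}}\mathbf{E}_{\boldsymbol{\alpha}}\Vert_F^2\lesssim \mu r/n$: without it both $B$ and $\sigma^2$ would acquire an extra factor of $n$ and the resulting concentration would be vacuous. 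Everything else is a routine application of matrix Bernstein, which is why the statement is quoted directly from \cite{LRMC_Can1}.
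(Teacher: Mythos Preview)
The paper does not supply its own proof of this lemma: it is listed among the ``Supporting Lemmas'' and attributed directly to \cite[Thm.~4.1]{LRMC_Can1} without further argument. Your proposal reproduces exactly the matrix-Bernstein template that underlies the cited result, and the bounds on $B$ and $\sigma^2$ via the tangent-space coherence estimate $\Vert\mathcal{P}_{\mathbb{T}}\mathbf{E}_{\boldsymbol{\alpha}}\Vert_F^2\lesssim\mu r/n$ are correct. The only minor imprecision is the claim $\sum_{\boldsymbol{\alpha}}\mathcal{T}_{\boldsymbol{\alpha}}=\mathcal{P}_{\mathbb{T}}$: under the symmetric off-diagonal model \eqref{eq_P_omega_model} the full sum equals $\mathcal{P}_{\mathbb{T}}\,\mathrm{Od}\,\mathcal{P}_{\mathbb{T}}$ up to a constant, but its operator norm is still $O(1)$, so the variance bound and the final conclusion are unaffected after absorbing constants into $C_r$.
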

	\begin{lemma}
		\label{eq_NromalSpace_tight_bound}
		If $p>C\frac{\beta\log n}{n}$, then uniformly for all matrices $\Delta\in\mathbb{R}^{n\times r}$,\vspace{-12pt}
		
		\begin{small}
			\begin{align*}
				\setlength\belowdisplayskip{3pt}
				\setlength\abovedisplayskip{3pt}
				\frac{1}{p}\Vert\mathcal{Q}_{\Omega}(\Delta\Delta^T)\Vert_F^2
				\leq \left[(8n+\sqrt{\frac{c\beta n\log n}{p}})\Vert\Delta\Vert_{2,\infty}^2+8\Vert\Delta\Vert_F^2\right]\Vert\Delta\Vert_F^2,
			\end{align*}
		\end{small}
		
		\vspace{-4pt}\hspace{-8pt}holds with probability at least $1-n^{1-\beta}$. This is in fact an unpublished result in the extended version of \textnormal{\cite{YCLiSunSNLRR}}, but for completeness we reprove it here in App. \ref{proof_of_lema_B5_NromalSpac}.
	\end{lemma}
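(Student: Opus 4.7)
The plan is to reduce $\frac{1}{p}\|\mathcal{Q}_{\Omega}(\Delta\Delta^T)\|_F^2$ to two well-structured random quantities, which can then be bounded respectively by a Bernstein-type concentration on vertex degrees and by the random graph Lemma \ref{random_graph_lemma}. This route yields a statement uniform in $\Delta$ without any net argument, because all randomness enters through the sampled graph $\Omega$ alone and not through $\Delta$.

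First I would unroll the Frobenius norm. Since $\{\mathbf{e}_i\mathbf{e}_j^T+\mathbf{e}_j\mathbf{e}_i^T\}_{(i,j)\in\mathbb{I}}$ are mutually orthogonal with squared Frobenius norm $2$, and $\langle\Delta\Delta^T,\boldsymbol{\omega}_{\boldsymbol{\alpha}}\rangle=\|\Delta_i-\Delta_j\|_2^2=:d_{ij}^2$ (writing $\Delta_i$ for the $i$-th row of $\Delta$), one has $\|\mathcal{Q}_{\Omega}(\Delta\Delta^T)\|_F^2=2\sum_{(i,j)\in\mathbb{I}}\delta_{ij}d_{ij}^4$. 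The core step is a Bhojanapalli-Jain style separation \cite{pmlr-v32-bhojanapalli14}: writing $e_i:=\|\Delta_i\|_2^2$ and $f_{ij}:=\langle\Delta_i,\Delta_j\rangle$, the identity $d_{ij}^2=e_i+e_j-2f_{ij}$ together with $(x-y)^2\leq 2x^2+2y^2$ gives $d_{ij}^4\leq 2(e_i+e_j)^2+8f_{ij}^2$. This peels the fourth power into a ``diagonal'' piece dominated by $\|\Delta\|_{2,\infty}$ and an ``off-diagonal'' piece tied to the Gram structure of $\Delta\Delta^T$.

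For the diagonal piece, $(e_i+e_j)^2\leq 2(e_i^2+e_j^2)$ and rearranging the double sum give $\sum_{(i,j)\in\mathbb{I}}\delta_{ij}(e_i+e_j)^2\leq 2\sum_i e_i^2 d_i^{\Omega}$, where $d_i^{\Omega}$ is the degree of vertex $i$ in $\Omega$. A scalar Bernstein bound followed by a union bound over $i$ yields $\max_i d_i^{\Omega}\leq p(n-1)+C\sqrt{\beta pn\log n}$ with probability at least $1-n^{1-\beta}$ as soon as $p\gtrsim \beta\log n/n$; combined with $\sum_i e_i^2\leq \|\Delta\|_{2,\infty}^2\|\Delta\|_F^2$, this produces the $\bigl(8n+\sqrt{c\beta n\log n/p}\bigr)\|\Delta\|_{2,\infty}^2\|\Delta\|_F^2$ contribution after scaling by $2/p$. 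For the off-diagonal piece, Cauchy-Schwarz gives $f_{ij}^2\leq e_i e_j$, hence $\sum_{(i,j)\in\mathbb{I}}\delta_{ij}f_{ij}^2\leq \frac{1}{2}\mathbf{e}^T\mathbf{A}_{\Omega}\mathbf{e}$ with $\mathbf{A}_{\Omega}$ the symmetric zero-diagonal adjacency matrix of $\Omega$ and $\mathbf{e}=(e_1,\dots,e_n)^T$. Lemma \ref{random_graph_lemma} delivers $\mathbf{A}_{\Omega}\preceq p(\mathbf{1}\mathbf{1}^T-\mathbf{I})+c_g\sqrt{pn}\mathbf{I}$, so that $\mathbf{e}^T\mathbf{A}_{\Omega}\mathbf{e}\leq p\|\Delta\|_F^4+c_g\sqrt{pn}\|\Delta\|_{2,\infty}^2\|\Delta\|_F^2$, which supplies the $8\|\Delta\|_F^4$ term plus a $\sqrt{n/p}$ correction that is absorbed into the Bernstein term (since $\sqrt{n/p}\leq\sqrt{n\log n/p}$).

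Combining the two pieces on a single event via a union bound yields the lemma. The main, modest subtlety is matching the stated constants $8n$ and $8$ exactly, which rules out the na\"{i}ve $(a+b-c)^2\leq 3(a^2+b^2+c^2)$ split and forces the specific choice $d_{ij}^4\leq 2(e_i+e_j)^2+8f_{ij}^2$; no step presents a serious obstacle because both the degree concentration and the random graph lemma are $\Delta$-free, and the Cauchy-Schwarz reduction of $f_{ij}^2$ is precisely what enables the latter to control the off-diagonal sum uniformly.
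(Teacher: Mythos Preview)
Your proposal is correct and lands on the stated constants, but the route differs from the paper's. The paper does not keep the cross term $f_{ij}$: it uses the cruder bound $d_{ij}^2\leq 2(e_i+e_j)$, hence $d_{ij}^4\leq 4(e_i+e_j)^2$, and then recognizes $(e_i+e_j)^2=\langle\boldsymbol{\omega}_{\boldsymbol{\alpha}+},\mathbf{x}\mathbf{x}^T\rangle$ with $\boldsymbol{\omega}_{\boldsymbol{\alpha}+}=(\mathbf{e}_i+\mathbf{e}_j)(\mathbf{e}_i+\mathbf{e}_j)^T$ and $\mathbf{x}_i=e_i$. This packages everything into the signless-Laplacian quadratic form $\frac{8}{p}\langle\sum_{\boldsymbol{\alpha}}\delta_{\boldsymbol{\alpha}}\boldsymbol{\omega}_{\boldsymbol{\alpha}+},\mathbf{x}\mathbf{x}^T\rangle$; the expectation $\sum_{\boldsymbol{\alpha}}\boldsymbol{\omega}_{\boldsymbol{\alpha}+}=(n-2)\mathbf{I}+\mathbf{11}^T$ produces the $8n\Vert\Delta\Vert_{2,\infty}^2\Vert\Delta\Vert_F^2$ and $8\Vert\Delta\Vert_F^4$ terms directly, and a single matrix Bernstein call (Lemma~\ref{eq_matrix_Bernstein}) on the centered sum of the $\boldsymbol{\omega}_{\boldsymbol{\alpha}+}$ yields the $\sqrt{c\beta n\log n/p}$ deviation. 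By contrast, you split $d_{ij}^4\leq 2(e_i+e_j)^2+8f_{ij}^2$ and handle the two pieces with two separate concentration inputs: a scalar Bernstein plus union bound on the degrees $d_i^{\Omega}$ for the diagonal piece, and the random graph Lemma~\ref{random_graph_lemma} on the adjacency for the off-diagonal piece. The paper's argument is slightly more compact (one concentration event rather than two), while yours is more modular and reuses Lemma~\ref{random_graph_lemma} already present in the paper; both recover the same leading constants $8n$ and $8$.
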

	\begin{lemma}[\cite{YCLiSunEDMC_Spec_init}, Thm. I.1]
		\label{lema_OSMDS_Spec_init}
		See also \textnormal{\cite[Lemma 5.6]{SmithCaiTasissaRieEDMC2}}. Under the set up of Theorem \ref{thm_main_global_contraction}, for some $0<\epsilon\leq 1$, we have
		\begin{equation}
			\setlength\belowdisplayskip{3pt}
			\setlength\abovedisplayskip{3pt}
			\label{eq_theorem_main_approx_error}
			\bigg\Vert\mathcal{T}_r\left[ -\frac{1}{2p} \mathbf{J} (\mathcal{P}_{\Omega}\mathbf{D}^{\star}) \mathbf{J}\right]-\mathbf{G}^{\star}\bigg\Vert\leq \epsilon \Vert\mathbf{G}^{\star}\Vert,
		\end{equation}
		holds with probability at least $1-n^{1-\beta}$ as soon as $p\geq C_s\beta(\mu r)^2\log n/(\epsilon^2n)$.
	\end{lemma}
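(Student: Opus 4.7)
The plan is to reduce the spectral-norm error of the truncated estimator to a centered sum of independent, mean-zero perturbations and apply matrix Bernstein (Lemma \ref{eq_matrix_Bernstein}) with an incoherence-derived entry-wise bound on $\mathbf{D}^{\star}$.

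First, since $\mathbf{G}^{\star}\in\mathcal{S}_c^n$, the decomposition \eqref{eq_the_dual_basis_rep} gives $g^{+}g(\mathbf{G}^{\star})=\mathbf{G}^{\star}$, so I can write
\begin{equation*}
\mathbf{E}:=-\tfrac{1}{2p}\mathbf{J}(\mathcal{P}_{\Omega}\mathbf{D}^{\star})\mathbf{J}-\mathbf{G}^{\star}=g^{+}\bigl(\tfrac{1}{p}\mathcal{P}_{\Omega}\mathbf{D}^{\star}-\mathbf{D}^{\star}\bigr)=-\tfrac{1}{2}\mathbf{J}\bigl(\tfrac{1}{p}\mathcal{P}_{\Omega}\mathbf{D}^{\star}-\mathbf{D}^{\star}\bigr)\mathbf{J}.
\end{equation*}
Setting $\mathbf{M}=\mathbf{G}^{\star}+\mathbf{E}$, because $\mathbf{G}^{\star}$ already has rank $r$ the Eckart--Young--Mirsky property in the spectral norm gives $\|\mathcal{T}_r(\mathbf{M})-\mathbf{M}\|\leq \|\mathbf{G}^{\star}-\mathbf{M}\|=\|\mathbf{E}\|$, so $\|\mathcal{T}_r(\mathbf{M})-\mathbf{G}^{\star}\|\leq 2\|\mathbf{E}\|$ by the triangle inequality. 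Using that $\mathbf{J}$ is an orthogonal projection (hence $\|\mathbf{J}\|=1$), it suffices to show $\|\tfrac{1}{p}\mathcal{P}_{\Omega}\mathbf{D}^{\star}-\mathbf{D}^{\star}\|\leq \epsilon\sigma_1^{\star}=\epsilon\|\mathbf{G}^{\star}\|$ with probability at least $1-n^{1-\beta}$.

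Next I would decompose the sampling error into independent, symmetric, mean-zero matrices,
\begin{equation*}
\tfrac{1}{p}\mathcal{P}_{\Omega}\mathbf{D}^{\star}-\mathbf{D}^{\star}=\sum_{\boldsymbol{\alpha}\in\mathbb{I}}\mathbf{T}_{\boldsymbol{\alpha}},\quad \mathbf{T}_{\boldsymbol{\alpha}}=\bigl(\tfrac{\delta_{\boldsymbol{\alpha}}}{p}-1\bigr)\mathbf{D}^{\star}_{ij}(\mathbf{e}_i\mathbf{e}_j^T+\mathbf{e}_j\mathbf{e}_i^T),
\end{equation*}
and invoke matrix Bernstein. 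The incoherence \eqref{eq_standard_incoherent_para} yields $\mathbf{D}^{\star}_{ij}=\|\mathbf{p}_i^{\star}-\mathbf{p}_j^{\star}\|_2^2\leq 4\mu r\sigma_1^{\star}/n$, so $\|\mathbf{T}_{\boldsymbol{\alpha}}\|\leq B\lesssim \mu r\sigma_1^{\star}/(np)$ almost surely. A direct computation gives $\mathbb{E}\mathbf{T}_{\boldsymbol{\alpha}}^2=\tfrac{1-p}{p}(\mathbf{D}^{\star}_{ij})^2(\mathbf{e}_i\mathbf{e}_i^T+\mathbf{e}_j\mathbf{e}_j^T)$, whose total sum is diagonal with $i$-th entry bounded by $\tfrac{1}{p}\sum_{j\neq i}(\mathbf{D}^{\star}_{ij})^2\leq (n/p)(4\mu r\sigma_1^{\star}/n)^2$, producing the variance proxy $\sigma^2\lesssim (\mu r\sigma_1^{\star})^2/(np)$.

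Finally, taking $t=\epsilon\sigma_1^{\star}$ in Lemma \ref{eq_matrix_Bernstein}, the subgaussian (variance) branch requires $t^2/\sigma^2\gtrsim \beta\log n$, which rearranges to $p\gtrsim (\mu r)^2\beta\log n/(\epsilon^2 n)$, while the Bernstein (bounded) branch needs only the milder $p\gtrsim \mu r\beta\log n/(\epsilon n)$; the first is precisely the stated sample complexity, and the $2n$ prefactor in the tail is absorbed into $n^{1-\beta}$ for $C_s$ large enough. I anticipate the only mild subtlety is ensuring that the double centering $\mathbf{J}(\cdot)\mathbf{J}$ is handled without inflating the tail bound -- treating it as a post-hoc contraction via $\|\mathbf{J}\|=1$, rather than applying Bernstein directly to the dense dual-basis summands $\boldsymbol{\nu}_{\boldsymbol{\alpha}}$, is the cleanest route, because the per-summand variance of the latter would be far harder to pin down without losing a factor of $n$.
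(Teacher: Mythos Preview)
Your argument is correct and complete. The reduction via Eckart--Young plus the $\|\mathbf{J}\|=1$ contraction, followed by matrix Bernstein on the centered sum $\sum_{\boldsymbol{\alpha}}(\delta_{\boldsymbol{\alpha}}/p-1)\mathbf{D}^{\star}_{ij}(\mathbf{e}_i\mathbf{e}_j^T+\mathbf{e}_j\mathbf{e}_i^T)$ with the incoherence bound $\mathbf{D}^{\star}_{ij}\leq 4\mu r\sigma_1^{\star}/n$, is exactly the standard route; the variance and uniform-bound computations you give are accurate, and the factor of $2$ from Eckart--Young indeed cancels against the $1/2$ in $g^+$.

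Note, however, that the present paper does not actually prove this lemma: it is quoted verbatim as a supporting result from the authors' own conference version \cite{YCLiSunEDMC_Spec_init} (Thm.~I.1) and from \cite{SmithCaiTasissaRieEDMC2} (Lemma~5.6), so there is no in-paper proof to compare against. Your proposal is precisely the argument one would expect those references to contain, and nothing in the paper's setup (symmetric Bernoulli sampling, incoherence condition \eqref{eq_standard_incoherent_para}, the dual-basis identity \eqref{eq_the_dual_basis_rep}) is used beyond what you already invoke. Your closing remark about handling $\mathbf{J}(\cdot)\mathbf{J}$ as a post-hoc contraction rather than working with the dense $\boldsymbol{\nu}_{\boldsymbol{\alpha}}$ summands is also the right instinct and matches how the paper treats similar quantities elsewhere (e.g., the opening move in the proof of Lemma~\ref{lema_sharp_bound_for_L21L22}).
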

	\begin{lemma}[\cite{SL15NonCVXFR}, Prop. B.4]
		\label{lema_AB_spec_F_norm_to_SingVal}
		For any matrix $\mathbf{E}\in\mathbb{R}^{n\times r}$, $\mathbf{F}\in\mathbb{R}^{r\times n}$, $r\leq n$, we have
		\begin{equation*}
			\setlength\belowdisplayskip{3pt}
			\setlength\abovedisplayskip{3pt}
			\sigma_{\min}(\mathbf{E})\Vert\mathbf{F}\Vert_F\leq\Vert\mathbf{EF}\Vert_F\leq \Vert\mathbf{E}\Vert\Vert\mathbf{F}\Vert_F.
		\end{equation*}
	\end{lemma}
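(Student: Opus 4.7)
The plan is to reduce this matrix-level inequality to a one-vector statement via a column-wise decomposition of $\mathbf{F}$, and then invoke the variational characterisation of the extremal singular values of $\mathbf{E}$. The lemma is classical and nothing here is delicate; the proof is essentially a bookkeeping exercise.

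First I would write $\mathbf{F}=[\mathbf{f}_1,\ldots,\mathbf{f}_n]$ with columns $\mathbf{f}_i\in\mathbb{R}^r$, so that the Frobenius norm factorises column-wise as $\Vert\mathbf{EF}\Vert_F^2=\sum_{i=1}^n\Vert\mathbf{E}\mathbf{f}_i\Vert_2^2$ and $\Vert\mathbf{F}\Vert_F^2=\sum_{i=1}^n\Vert\mathbf{f}_i\Vert_2^2$. This reduces the claim to the scalar statement that, for an arbitrary vector $\mathbf{f}\in\mathbb{R}^r$,
\[
\sigma_{\min}(\mathbf{E})^2\Vert\mathbf{f}\Vert_2^2\leq\Vert\mathbf{E}\mathbf{f}\Vert_2^2\leq\Vert\mathbf{E}\Vert^2\Vert\mathbf{f}\Vert_2^2.
\]
Second, I would establish this scalar bound by working with the $r\times r$ Gram matrix $\mathbf{E}^T\mathbf{E}$, whose eigenvalues are precisely the squared singular values $\{\sigma_j(\mathbf{E})^2\}_{j=1}^{r}$. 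Since $\mathbf{E}$ is tall ($n\geq r$), these include both $\sigma_{\min}(\mathbf{E})^2$ and $\sigma_{\max}(\mathbf{E})^2=\Vert\mathbf{E}\Vert^2$, and the Courant--Fischer Rayleigh-quotient sandwich applied to the PSD form $\mathbf{f}^T\mathbf{E}^T\mathbf{E}\mathbf{f}=\Vert\mathbf{E}\mathbf{f}\Vert_2^2$ delivers the two-sided inequality above.

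Summing the scalar bound over $i=1,\ldots,n$ and taking square roots immediately gives both halves of the lemma. There is no real obstacle; the only subtlety is that $\mathbf{E}\in\mathbb{R}^{n\times r}$ is rectangular with $r\leq n$, so $\sigma_{\min}(\mathbf{E})$ must be interpreted as the $r$-th (smallest) singular value, which may vanish when $\mathbf{E}$ is rank-deficient, in which case the lower bound becomes the trivial statement $0\leq\Vert\mathbf{EF}\Vert_F$.
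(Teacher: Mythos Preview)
Your proof is correct and entirely standard. The paper does not actually prove this lemma; it simply cites it as Proposition~B.4 of \cite{SL15NonCVXFR} and uses it as a black box, so there is no ``paper's own proof'' to compare against. Your column-wise reduction combined with the Rayleigh-quotient sandwich for $\mathbf{E}^T\mathbf{E}$ is the canonical argument for this fact.
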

	\section{Proof of Auxiliary Lemmas}
	\label{Appdi_C_AuxiliaryLemma}
	The proof of Lemma \ref{lema_sharp_bound_for_L21L22} and \ref{lema_nonUni_bound_for_L3} is partly inspired by \cite[Lemma 8]{ChenJiLiModelFreeMC}\cite[Lemma 22]{DingChenLOOPrimalDual}, and \cite[Thm. 4.1]{pmlr-v32-bhojanapalli14}, respectively.
	\subsection{Proof of Lemma \ref{lema_sharp_bound_for_L21L22}}
	\label{proof_of_lema_B1_ROAB}
	Notice that 
	\begin{equation*}
		\setlength\belowdisplayskip{3pt}
		\setlength\abovedisplayskip{3pt}
		\bigg|\bigg\langle (\frac{1}{p}\mathcal{R}_{\Omega}-\mathcal{I})(\mathbf{AB}^T),\mathbf{CD}^T \bigg\rangle\bigg|\leq\bigg\Vert(\frac{1}{p}\mathcal{R}_{\Omega}-\mathcal{I})(\mathbf{AB}^T)\bigg\Vert\Vert\mathbf{CD}^T\Vert_F,
	\end{equation*}
	and
	\begin{align*}
			\setlength\belowdisplayskip{3pt}
			\setlength\abovedisplayskip{3pt}
			\bigg\Vert(\frac{1}{p}\mathcal{R}_{\Omega}-\mathcal{I})&(\mathbf{AB}^T)\bigg\Vert=\bigg\Vert\frac{1}{p}g^+\mathcal{P}_{\Omega}g(\mathbf{AB}^T)-g^+g(\mathbf{AB}^T)\bigg\Vert\\
			&=\frac{1}{2}\Vert\mathbf{J}\Vert\bigg\Vert \frac{1}{p}\mathcal{P}_{\Omega}g(\mathbf{AB}^T)-g(\mathbf{AB}^T) \bigg\Vert\Vert\mathbf{J}\Vert\\
			&\leq \frac{1}{2}\bigg\Vert \frac{1}{p}\mathcal{P}_{\Omega}g(\mathbf{AB}^T)-g(\mathbf{AB}^T) \bigg\Vert=\frac{1}{2}I_1.
	\end{align*}
	Using the variational characterization of spectral norm, we have
	\begin{align*}
			\setlength\belowdisplayskip{3pt}
			\setlength\abovedisplayskip{3pt}
			I_1&:=\sup_{\mathbf{x},\mathbf{y}\in\mathbb{S}^{n-1}}\bigg\langle \frac{1}{p}\mathcal{P}_{\Omega}g(\mathbf{AB}^T)-g(\mathbf{AB}^T),\mathbf{xy}^T \bigg\rangle\\
			&\overset{(i)}{=}\sup_{\mathbf{x},\mathbf{y}\in\mathbb{S}^{n-1}}\bigg\langle \frac{1}{p}\mathcal{P}_{\Omega}(\mathbf{H}_{\mathbf{1}})-\mathbf{H}_{\mathbf{1}}, g(\mathbf{AB}^T)\circ \mathbf{xy}^T \bigg\rangle\\
			&\leq \bigg\Vert \frac{1}{p}\mathcal{P}_{\Omega}(\mathbf{H}_{\mathbf{1}})-\mathbf{H}_{\mathbf{1}} \bigg\Vert \sup_{\mathbf{x},\mathbf{y}\in\mathbb{S}^{n-1}}\Vert g(\mathbf{AB}^T)\circ \mathbf{xy}^T \Vert_*\\
			&\overset{(ii)}{\leq} c_g\sqrt{\frac{n}{p}} \sup_{\mathbf{x},\mathbf{y}\in\mathbb{S}^{n-1}}I_2(\mathbf{x},\mathbf{y}),
	\end{align*}
	where $(i)$ by noticing
	\begin{equation*}
		\setlength\belowdisplayskip{3pt}
		\setlength\abovedisplayskip{3pt}
		\frac{1}{p}\mathcal{P}_{\Omega}g(\mathbf{AB}^T)-g(\mathbf{AB}^T)=(\frac{1}{p}\mathcal{P}_{\Omega}(\mathbf{H}_{\mathbf{1}})-\mathbf{H}_{\mathbf{1}})\circ g(\mathbf{AB}^T)
	\end{equation*}
	and then using the identity $\mathrm{tr}(\mathbf{A}^T(\mathbf{B}\circ\mathbf{C}))=\mathrm{tr}((\mathbf{A}^T\circ\mathbf{B}^T)\mathbf{C})$, and $(ii)$ from Lemma \ref{random_graph_lemma}.
	By \eqref{eq_Gram_to_EDM} and triangle inequality, $I_2$ reduces to 
	\begin{align*}
		\setlength\belowdisplayskip{3pt}
		\setlength\abovedisplayskip{3pt}
		I_2\leq &\Vert \mathrm{diag}(\mathbf{AB}^T)\mathbf{1}^T\circ \mathbf{xy}^T\Vert_* +  \Vert \mathbf{1}\mathrm{diag}(\mathbf{AB}^T)^T\circ \mathbf{xy}^T\Vert_* \\
		&+ 2\Vert \mathbf{AB}^T\circ \mathbf{xy}^T\Vert_* = I_{21}+ I_{22}+ 2I_{23}.
	\end{align*} 
	For $I_{21}$ and $I_{22}$, recall $\mathbf{A}=[\mathbf{A}_{1,.},\cdots,\mathbf{A}_{n,.}]^T$, $\mathbf{B}=[\mathbf{B}_{1,.},\cdots,\mathbf{B}_{n,.}]^T$. By the identity $(\mathbf{ab}^T)\circ(\mathbf{cd}^T)=(\mathbf{a}\circ\mathbf{c})(\mathbf{b}\circ\mathbf{d})^T$, we have
	\begin{align*}
		\setlength\belowdisplayskip{3pt}
		\setlength\abovedisplayskip{3pt}
		I_{21}&=\Vert (\mathrm{diag}(\mathbf{AB}^T)\circ\mathbf{x})(\mathbf{1}\circ\mathbf{y})^T\Vert_*\\
		&\leq \left\Vert
		\begin{bmatrix}
			\mathbf{x}_1 \mathbf{A}_{1,.}^T\mathbf{B}_{1,.}\\
			\vdots\\
			\mathbf{x}_n \mathbf{A}_{n,.}^T\mathbf{B}_{n,.}
		\end{bmatrix}
		\right\Vert_2\Vert\mathbf{y}\Vert_2\leq \sqrt{\sum_{i=1}^n\mathbf{x}_i^2(\mathbf{A}_{i,.}^T\mathbf{B}_{i,.})^2}\\
		&\leq\max_{i\in[n]}|\mathbf{A}_{i,.}^T\mathbf{B}_{i,.}|\cdot\Vert\mathbf{x}\Vert_2\leq\Vert\mathbf{AB}^T\Vert_{\infty}.
	\end{align*}
	Same bound holds for $I_{22}$. By \cite[Lemma 5]{pmlr-v48-lii16}, $I_{23}$ can be upper bounded by $\Vert\mathbf{A}\Vert_{2,\infty}\Vert\mathbf{B}\Vert_{2,\infty}$.
	Since $\Vert\mathbf{AB}^T\Vert_{\infty}\leq \Vert\mathbf{A}\Vert_{2,\infty}\Vert\mathbf{B}\Vert_{2,\infty}$, we have 
	\begin{equation*}
		\setlength\belowdisplayskip{3pt}
		\setlength\abovedisplayskip{3pt}
		\bigg\Vert(\frac{1}{p}\mathcal{R}_{\Omega}-\mathcal{I})(\mathbf{AB}^T)\bigg\Vert\leq2c_g\sqrt{\frac{n}{p}}\Vert\mathbf{A}\Vert_{2,\infty}\Vert\mathbf{B}\Vert_{2,\infty},
	\end{equation*}
	holds uniformly over all $\mathbf{A}$, $\mathbf{B}\in\mathbb{R}^{n\times r}$ with probability at least $1-n^{1-\beta}$ given $p\gtrsim \frac{\beta\log n}{n}$, concluding the proof. \qed
	\subsection{Proof of Lemma \ref{lema_nonUni_bound_for_L3}}
	\label{proof_of_lema_B2_ROadjAB}
	Unfortunately, same strategy as in App. \ref{proof_of_lema_B1_ROAB} does not applies to Lemma \ref{lema_nonUni_bound_for_L3}, due to the asymmetry structure of $\mathcal{R}_{\Omega}$. Uniform bound presents here dues to the fixed, incoherent matrix $\mathbf{B}$. 
	Let $\mathbf{X}=\mathbf{A}\boldsymbol{R}^T\mathbf{B}^T$, we need to control $T_0=\Vert(\frac{1}{p}\mathcal{R}_{\Omega}^*-\mathcal{I})\mathbf{X}\Vert$, where $\mathbf{X}=\mathbf{JXJ}$ always holds. Notice that by using the structure of $g^{*}(\mathbf{D}):=2(\mathrm{diag}(\mathbf{D}\mathbf{1})-\mathbf{D})$, $T_0$ can be reduced to 
	\begin{align*}
		\setlength\belowdisplayskip{3pt}
		\setlength\abovedisplayskip{3pt}
		T_0&=\bigg\Vert\frac{-1}{2p}g^*\mathcal{P}_{\Omega}\mathbf{X} - \mathbf{X}\bigg\Vert\\
		&=\bigg\Vert \frac{1}{p}\mathrm{diag}((\mathcal{P}_{\Omega}\mathbf{X})\mathbf{1}) + \frac{1}{p}\mathcal{P}_{\Omega}\mathbf{X} - \mathbf{X} \bigg\Vert\\
		&\overset{(i)}{\leq} \bigg\Vert(\frac{1}{p}\mathcal{P}_{\Omega}\mathbf{X})\mathbf{1}-\mathrm{diag}(\mathbf{X})\bigg\Vert_{\infty} + \bigg\Vert\frac{1}{p}\mathcal{P}_{\Omega}\mathbf{X} - \mathrm{Od}(\mathbf{X})\bigg\Vert\nonumber\\
		&\overset{(ii)}{\leq} \underbrace{\bigg\Vert(\frac{1}{p}\mathcal{P}_{\Omega}\mathbf{X}-\mathrm{Od}(\mathbf{X}))\mathbf{1}\bigg\Vert_{\infty}}_{T_1} + \underbrace{\bigg\Vert\frac{1}{p}\mathcal{P}_{\Omega}\mathbf{X} - \mathrm{Od}(\mathbf{X})\bigg\Vert}_{T_2},
	\end{align*}
	where $(i)$ use the fact that ${\Omega}$ is a hollow diagonal sample set, and $(ii)$ by noticing $\mathbf{X1}=\mathbf{0}$, i.e., $\mathbf{e}_i^T\mathrm{Od}(\mathbf{X})\mathbf{1}=\sum_{j\neq i}^{j\in [n]}\mathbf{X}_{ij} = \mathbf{X}_{ii}$ for $i\in [n]$.	
	\textit{Bound on $T_2$: }This follows from similar argument as in\cite[Lemma 22]{DingChenLOOPrimalDual}, thus holds uniformly over $\mathbf{A}$, $\mathbf{B}$, and $\boldsymbol{R}$. By using the same trick as in App. \ref{proof_of_lema_B1_ROAB}, one can show
	\begin{align}
		\setlength\belowdisplayskip{3pt}
		\setlength\abovedisplayskip{3pt}
		T_2 & =\sup_{\mathbf{x},\mathbf{y}\in\mathbb{S}^{n-1}}\bigg\langle\frac{1}{p}\mathcal{P}_{\Omega}(\mathbf{H}_{\mathbf{1}})-\mathbf{H}_{\mathbf{1}},\mathrm{Od}(\mathbf{A}\bar{\mathbf{B}}^T)\circ \mathbf{xy}^T\bigg\rangle\nonumber\\
		\label{eq_upperbound_T3_inprocess}
		&\leq c_g\sqrt{\frac{n}{p}}\bigg\Vert\mathrm{Od}(\mathbf{A}\bar{\mathbf{B}}^T)\circ \mathbf{xy}^T\bigg\Vert_{*}= c_g\sqrt{\frac{n}{p}} T_{3}.
	\end{align}
	where we set $\bar{\mathbf{B}}=\mathbf{B}\boldsymbol{R}$ for any $\boldsymbol{R}\in O(r)$. By subtracting and adding terms, it gives 
\begin{equation*}
	\setlength\belowdisplayskip{3pt}
	\setlength\abovedisplayskip{3pt}
	T_3\leq \underbrace{\Vert\mathbf{A}\bar{\mathbf{B}}^T\circ \mathbf{xy}^T\Vert_{*}}_{T_{31}}+\underbrace{\Vert(\mathbf{I}\circ\mathbf{A}\bar{\mathbf{B}}^T)\circ \mathbf{xy}^T\Vert_{*}}_{T_{32}},
\end{equation*}
and the same upper bound as $I_{23}$ applies to $T_{31}$. For $T_{32}$, first recall the column blocking structure, we have
\begin{align*}
	\setlength\belowdisplayskip{3pt}
	\setlength\abovedisplayskip{3pt}
	(\mathbf{I}\circ\mathbf{A}\bar{\mathbf{B}}^T)\circ \mathbf{xy}^T &= \sum_{i=1}^n \langle\sum_{k=1}^r \mathbf{A}_{.,k}\bar{\mathbf{B}}_{.,k}^T,(\mathbf{e}_i\mathbf{e}_i^T)\circ \mathbf{xy}^T\rangle \mathbf{e}_i\mathbf{e}_i^T\\
	& =\sum_{i=1}^n (\sum_{k=1}^r \mathbf{A}_{ik}\bar{\mathbf{B}}_{ik}\mathbf{x}_i\mathbf{y}_i)\mathbf{e}_i\mathbf{e}_i^T.
\end{align*}
Next, by using the identity $\Vert\mathbf{Y}\Vert_{*}=\mathrm{tr}(\sqrt{\mathbf{Y}^T\mathbf{Y}})$, it gives
\begin{align}
	\setlength\belowdisplayskip{3pt}
	\setlength\abovedisplayskip{3pt}
	T_{32} &= \sum_{i=1}^n\sqrt{ (\sum_{k=1}^r \mathbf{A}_{ik}\mathbf{x}_i\bar{\mathbf{B}}_{ik}\mathbf{y}_i)^2 }\nonumber\\
	&\overset{(i)}{\leq} \sum_{i=1}^n\sqrt{ (\sum_{k_1=1}^r\mathbf{A}_{ik_1}^2\mathbf{x}_i^2)}\sqrt{(\sum_{k_2=1}^r\bar{\mathbf{B}}_{ik_2}^2\mathbf{y}_i^2)}\nonumber\\
	&\overset{(ii)}{\leq} \sqrt{\sum_{i=1}^n\mathbf{x}_i^2\sum_{k_1=1}^r \mathbf{A}_{ik_1}^2}\sqrt{\sum_{j=1}^n\mathbf{y}_j^2\sum_{k_2=1}^r \bar{\mathbf{B}}_{jk_2}^2}\nonumber\\
	\label{eq_finalbound_diagnal_ABT_T_32}
	&\overset{(iii)}{\leq} \Vert\mathbf{A}\Vert_{2,\infty}\Vert\mathbf{B}\Vert_{2,\infty},
\end{align}
	where $(i)$, $(ii)$ use Cauchy-Schwarz, $(iii)$ from noticing $\sum_{k_1=1}^r \mathbf{A}_{ik_1}^2\leq\Vert\mathbf{A}\Vert_{2,\infty}^2$, $\forall i\in[n]$. Thus $T_3 \leq 2\Vert\mathbf{A}\Vert_{2,\infty}\Vert\mathbf{B}\Vert_{2,\infty}$. By substituting \eqref{eq_finalbound_diagnal_ABT_T_32} into \eqref{eq_upperbound_T3_inprocess}, we conclude the first part of the proof.

	\textit{Bound on $T_1$: }The uniform result does not apply to $\mathbf{B}$, yet the orthogonal matrix $\boldsymbol{R}$ can be automatically canceled out after some transformation, thus eliminating a covering argument over $O(r)$. W.l.o.g. we can introduce diagonal samples, i.e., let $\{\delta_{ii}\}_{i=1}^n$ be the i.i.d. copy of $\delta_{ij}$. Assume for the moment that $\tilde{\Omega}=\Omega\cup \{\delta_{ii}\}_{i=1}^n$, by adding and subtracting terms, we have
	\begin{align*}
		\setlength\belowdisplayskip{3pt}
		\setlength\abovedisplayskip{3pt}
		T_1& \leq \bigg\Vert(\frac{1}{p}\mathcal{P}_{\tilde{\Omega}}\mathbf{X}-\mathbf{X})\mathbf{1}\bigg\Vert_{\infty} + \bigg\Vert \mathrm{diag}(\frac{1}{p}\mathcal{P}_{\tilde{\Omega}}\mathbf{X}-\mathbf{X}) \bigg\Vert_{\infty}\\
		&\leq \bigg\Vert(\frac{1}{p}\mathcal{P}_{\tilde{\Omega}}\mathbf{X})\mathbf{1}\bigg\Vert_{\infty} + \frac{1}{p}\Vert\mathbf{X}\Vert_{\infty}\\
		& = \max_{m\in[n]}\bigg|\underbrace{\bigg\langle \frac{1}{p}\mathcal{P}_{\tilde{\Omega}}(\mathbf{11}^T),\mathbf{A}\bar{\mathbf{B}}^T \circ\mathbf{e}_m\mathbf{1}^T \bigg\rangle}_{T_{11}}\bigg| + \underbrace{\frac{1}{p}\Vert\mathbf{X}\Vert_{\infty}}_{T_{12}}.
	\end{align*}
	We first separate $\mathbf{A}$ in $T_{11}$, inspired by the following trick\cite[Thm. 4.1]{pmlr-v32-bhojanapalli14}
	\begin{align}
		\setlength\belowdisplayskip{3pt}
		\setlength\abovedisplayskip{3pt}
		T_{11} & = \bigg\langle \frac{1}{p}\mathcal{P}_{\tilde{\Omega}}(\mathbf{11}^T),\sum_{k=1}^r(\mathbf{A}_{\cdot,k}\bar{\mathbf{B}}_{\cdot,k}^T)\circ\mathbf{e}_m\mathbf{1}^T \bigg\rangle\nonumber\\
		& = \sum_{k=1}^r (\mathbf{A}_{\cdot,k}\circ \mathbf{e}_m)^T \frac{1}{p}\mathcal{P}_{\tilde{\Omega}}(\mathbf{11}^T) \bar{\mathbf{B}}_{\cdot,k}\nonumber\\
		& \overset{(i)}{\leq} \sqrt{\sum_{k=1}^r \mathbf{A}_{mk}^2}\sqrt{\sum_{k=1}^r \bigg|\mathbf{e}_m^T \frac{1}{p}\mathcal{P}_{\tilde{\Omega}}(\mathbf{11}^T)\bar{\mathbf{B}}_{\cdot,k}\bigg|^2}\nonumber\\
		\label{eq_T_11_seperateA}
		& \leq \Vert\mathbf{A}\Vert_{2,\infty} \underbrace{\sqrt{\sum_{k=1}^r \bigg|\mathbf{e}_m^T (\frac{1}{p}\mathcal{P}_{\tilde{\Omega}}(\mathbf{11}^T)-\mathbf{11}^T)\bar{\mathbf{B}}_{\cdot,k}\bigg|^2}}_{\bar{T}_{11}},
	\end{align}
	where $(i)$ from Cauchy-Schwarz. Set $\mathbf{G}=\frac{1}{p}\mathcal{P}_{\tilde{\Omega}}(\mathbf{11}^T)-\mathbf{11}^T$ for the moment, notice that
	\begin{align*}
		\setlength\belowdisplayskip{3pt}
		\setlength\abovedisplayskip{3pt}
		\bar{T}_{11} &= \sqrt{\mathbf{e}_m^T \mathbf{G}\sum_{k=1}^r(\bar{\mathbf{B}}_{\cdot,k}\bar{\mathbf{B}}_{\cdot,k}^T)\mathbf{G}^T\mathbf{e}_m}\\
		& = \sqrt{\mathbf{e}_m^T \mathbf{G}\bar{\mathbf{B}}\bar{\mathbf{B}}^T\mathbf{G}^T\mathbf{e}_m}=\Vert\mathbf{e}_m^T \mathbf{G}\mathbf{B}\boldsymbol{R}\Vert_2\\
		& = \bigg\Vert\sum_{j=1}^n (\frac{1}{p}\delta_{mj}-1)\mathbf{B}_{j,\cdot}^T\bigg\Vert_2=\bigg\Vert\sum_{j=1}^n \mathbf{S}_{mj}\bigg\Vert_2,
	\end{align*}
	holds for all $\boldsymbol{R}\in O(r)$. It is then straightforward to check 
	\begin{equation*}
		\setlength\belowdisplayskip{3pt}
		\setlength\abovedisplayskip{3pt}
		\mathbb{E} \mathbf{S}_{mj} = \mathbf{0},\,\Vert\mathbf{S}_{mj}\Vert_2\leq \frac{1}{p}\Vert\mathbf{B}\Vert_{2,\infty},
	\end{equation*}
	and 
	\begin{gather*}
		\setlength\belowdisplayskip{3pt}
		\setlength\abovedisplayskip{3pt}
		\bigg\Vert\sum_{j=1}^n \mathbb{E}\mathbf{S}_{mj}^T\mathbf{S}_{mj}\bigg\Vert \leq \frac{1}{p}\bigg\Vert\sum_{j=1}^n \mathbf{B}_{j,\cdot}\mathbf{B}_{j,\cdot}^T\bigg\Vert = \frac{1}{p}\Vert\mathbf{B}^T\mathbf{B}\Vert\leq\frac{1}{p}\Vert\mathbf{B}\Vert^2,\\
		\bigg|\sum_{j=1}^n \mathbb{E}\mathbf{S}_{mj}\mathbf{S}_{mj}^T\bigg| \leq \frac{1}{p} \bigg|\sum_{j=1}^n \mathbf{B}_{j,\cdot}^T\mathbf{B}_{j,\cdot}\bigg|\leq \frac{n}{p}\Vert\mathbf{B}\Vert_{2,\infty}^2.
	\end{gather*}
	Using elementary bound $\Vert\mathbf{B}\Vert^2\leq \Vert\mathbf{B}\Vert_F^2\leq n\Vert\mathbf{B}\Vert_{2,\infty}^2$ and Lemma \ref{eq_matrix_Bernstein}, we have
	\begin{equation}
		\setlength\belowdisplayskip{3pt}
		\setlength\abovedisplayskip{3pt}
		\label{eq_bar_T11_upper_final}
		\bar{T}_{11}\leq \sqrt{\frac{C\beta n\log n}{p}}\Vert\mathbf{B}\Vert_{2,\infty},
	\end{equation}
	holds with probability at least $1-n^{1-\beta}$ provided with $p\geq C\beta \log n/n$. By substituting \eqref{eq_bar_T11_upper_final} into \eqref{eq_T_11_seperateA}, it gives
	\begin{equation*}
		\setlength\belowdisplayskip{3pt}
		\setlength\abovedisplayskip{3pt}
		T_{11} \leq \sqrt{\frac{C\beta n\log n}{p}}\Vert\mathbf{A}\Vert_{2,\infty}\Vert\mathbf{B}\Vert_{2,\infty}.
	\end{equation*}
	Whenever $p\geq 1/n$, $T_{11}$ dominates $T_{12}$, after a union bound over $m\in [n]$, we conclude the whole proof.\qed
	\subsection{Proof of Lemma \ref{eq_NromalSpace_tight_bound}}
	\label{proof_of_lema_B5_NromalSpac}
	The original proof\cite[App. B-D]{LiSunSNLRCG2024ExArxiv} has some flaws, but we have amended it here. Notice that by \eqref{eq_Q_omega_s_stress_sample} and Cauchy-Schwarz, we have
	\begin{align}
		\setlength\belowdisplayskip{3pt}
		\setlength\abovedisplayskip{3pt}
		&\frac{1}{p}\Vert\mathcal{Q}_{\Omega}\Delta\Delta^T\Vert_F^2=\sum_{\boldsymbol{\alpha}\in\mathbb{I}}\frac{2}{p}\delta_{\boldsymbol{\alpha}}\langle\Delta\Delta^T,\boldsymbol{\omega}_{\boldsymbol{\alpha}}\rangle^2\nonumber\\
		&\leq\sum_{\boldsymbol{\alpha}\in\mathbb{I}}\frac{8}{p}\delta_{\boldsymbol{\alpha}}(\Vert\mathbf{e}_i^T\Delta\Vert_2^4+\Vert\mathbf{e}_j^T\Delta\Vert_2^4+2\Vert\mathbf{e}_i^T\Delta\Vert_2^2\Vert\mathbf{e}_j^T\Delta\Vert_2^2)\nonumber\\
		&=8\bigg\langle\sum_{\boldsymbol{\alpha}\in\mathbb{I}}\frac{1}{p}\delta_{\boldsymbol{\alpha}}\boldsymbol{\omega}_{\boldsymbol{\alpha}+},\mathbf{x}\mathbf{x}^T\bigg\rangle\nonumber\\
		&\leq 8\bigg\langle\sum_{\boldsymbol{\alpha}\in\mathbb{I}}(\frac{1}{p}\delta_{\boldsymbol{\alpha}}-1)\boldsymbol{\omega}_{\boldsymbol{\alpha}+},\mathbf{xx}^T \bigg\rangle+8\mathbf{x}^T(n\mathbf{I}+\mathbf{11}^T)\mathbf{x}\nonumber\\
		\label{eq_Fomega_DD_upper_1}
		&\leq 8\bigg\langle\sum_{\boldsymbol{\alpha}\in\mathbb{I}}\mathbf{B}_{\boldsymbol{\alpha}},\mathbf{xx}^T \bigg\rangle + 8n\Vert\mathbf{x}\Vert_2^2+ 8\Vert\mathbf{x}\Vert_1^2\nonumber\\
		&\leq (8\left\Vert\sum_{\boldsymbol{\alpha}\in\mathbb{I}}\mathbf{B}_{\boldsymbol{\alpha}}\right\Vert+ 8n)\Vert\mathbf{x}\Vert_2^2+ 8\Vert\mathbf{x}\Vert_1^2,
	\end{align} 
	where we set $\mathbf{x}=[\Vert\mathbf{e}_1^T\Delta\Vert_2^2,\dots,\Vert\mathbf{e}_n^T\Delta\Vert_2^2]^T$, $\boldsymbol{\omega}_{\boldsymbol{\alpha}+}=|\boldsymbol{\omega}_{\boldsymbol{\alpha}}|=(\mathbf{e}_i+\mathbf{e}_j)(\mathbf{e}_i+\mathbf{e}_j)^T$, and $\sum_{\boldsymbol{\alpha}\in\mathbb{I}}\boldsymbol{\omega}_{\boldsymbol{\alpha}+}=(n-2)\mathbf{I}+\mathbf{11}^T$. It remains to control the spectral norm of $\sum_{\boldsymbol{\alpha}\in\mathbb{I}}(\frac{1}{p}\delta_{\boldsymbol{\alpha}}-1)\boldsymbol{\omega}_{\boldsymbol{\alpha}+}:=\sum_{\boldsymbol{\alpha}\in\mathbb{I}}\mathbf{B}_{\boldsymbol{\alpha}}$. It is straightforward to show
	\begin{equation*}
		\setlength\belowdisplayskip{3pt}
		\setlength\abovedisplayskip{3pt}
		\mathbb{E}\mathbf{B}_{\boldsymbol{\alpha}}=\mathbf{0},\,\Vert\mathbf{B}_{\boldsymbol{\alpha}}\Vert\leq \frac{1}{p}\Vert\boldsymbol{\omega}_{\boldsymbol{\alpha}+}\Vert\leq\frac{2}{p},
	\end{equation*}
	and
	\begin{align*}
		\setlength\belowdisplayskip{3pt}
		\setlength\abovedisplayskip{3pt}
		\bigg\Vert\mathbb{E}\sum_{\boldsymbol{\alpha}\in\mathbb{I}}\mathbf{B}_{\boldsymbol{\alpha}}^2\bigg\Vert&\leq\frac{1}{p}\bigg\Vert\sum_{\boldsymbol{\alpha}\in\mathbb{I}}\boldsymbol{\omega}_{\boldsymbol{\alpha}+}^2\bigg\Vert\leq\frac{4}{p}\bigg\Vert\sum_{\boldsymbol{\alpha}\in\mathbb{I}}\boldsymbol{\omega}_{\boldsymbol{\alpha}+}\bigg\Vert\\
		&\leq \frac{4}{p}\Vert n\mathbf{I}+\mathbf{11}^T\Vert\leq\frac{8n}{p},
	\end{align*}
	since $\boldsymbol{\omega}_{\boldsymbol{\alpha}+}^2\preccurlyeq4\boldsymbol{\omega}_{\boldsymbol{\alpha}+},\,\forall\,\boldsymbol{\alpha}\in\mathbb{I}$. From Lemma \ref{eq_matrix_Bernstein}, we have
	\begin{equation}
		\setlength\belowdisplayskip{3pt}
		\setlength\abovedisplayskip{3pt}
		\label{eq_Balpha_distoration_bound}
		\bigg\Vert\sum_{\boldsymbol{\alpha}\in\mathbb{I}}(\frac{1}{p}\delta_{\boldsymbol{\alpha}}-1)\boldsymbol{\omega}_{\boldsymbol{\alpha}+}\bigg\Vert\leq\sqrt{\frac{c\beta n\log n}{p}},
	\end{equation}
	holds with probability at least $1-n^{1-\beta}$ given $p\geq C\beta\log n/n$. By substituting \eqref{eq_Balpha_distoration_bound} into \eqref{eq_Fomega_DD_upper_1}, it gives
		\begin{align*}
			\setlength\belowdisplayskip{3pt}
			\setlength\abovedisplayskip{3pt}
			\frac{1}{p}&\Vert\mathcal{Q}_{\Omega}\Delta\Delta^T\Vert_F^2\leq (\sqrt{\frac{c\beta n\log n}{p}}+8n)\Vert\mathbf{x}\Vert_2^2+8\Vert\mathbf{x}\Vert_1^2\\
			&\leq \bigg[(\sqrt{\frac{c\beta n\log n}{p}}+8n)\Vert\Delta\Vert_{2,\infty}^2 + 8 \Vert\Delta\Vert_F^2\bigg]\Vert\Delta\Vert_F^2,
		\end{align*}
	holds for any $\Delta$, thus concluding the proof. \qed
}
\bibliographystyle{IEEEtran}
\bibliography{IEEEabrv,reference}

\end{document}